\setlist{leftmargin=3mm}
\let\oldproofname=\proofname
\renewcommand{\proofname}{\rm\bf{\oldproofname}}
\newtheorem{theorem}{Theorem}
\newtheorem{definition}[theorem]{Definition}
\newtheorem{proposition}[theorem]{Proposition}
\newtheorem{lemma}[theorem]{Lemma}
\newtheorem{remark}[theorem]{Remark}
\newtheorem{corollary}[theorem]{Corollary}
\newtheorem{example}[theorem]{Example}
\newcommand{\commentout}[1]{}
\newcommand{\rmbf}[1]{\rm{\bf{#1}}}
\newcommand{\where}{\text{where }\ }
\newcommand*{\estimate}[1]{\hat{#1}}
\newcommand{\emperical}{L_n}
\newcommand{\comp}{\ensuremath{\text{\sc comp}}}
\newcommand{\risk}{\ensuremath{\bar{L}}}
\newcommand*{\op}[1]{\operatorname{#1}}
\newcommand*{\ceil}[1]{\lceil{#1}\rceil}
\newcommand*{\Ceil}[1]{\left\lceil{#1}\right\rceil}
\newcommand{\eP}{\ensuremath{\mathscr P}}
\newcommand{\eA}{\ensuremath{\mathscr A}}
\newcommand{\eC}{\ensuremath{\mathscr C}}
\newcommand{\cG}{\ensuremath{\mathcal G}}
\newcommand{\cH}{\ensuremath{\mathcal H}}
\newcommand{\cN}{\ensuremath{\mathcal N}}
\newcommand{\cX}{\ensuremath{\mathcal X}}
\newcommand{\cY}{\ensuremath{\mathcal Y}}
\newcommand{\cZ}{\ensuremath{\mathcal Z}}
\newcommand{\reals}{{\mathbb R}}
\newcommand{\ERM}{\ensuremath{\text{\sc ERM}}}
\newcommand{\Exp}{\ensuremath{\text{\rm E}}}
\newcommand{\expect}[2]{\mathbb{E}_{#1}\left[#2\right]}
\newcommand{\var}{\ensuremath{\text{\rm var}}}
\newcommand{\Bernoulli}{\ensuremath{\mathcal{B}}}
\newcommand{\nrenb}[1]{\ensuremath{D_{\eta \eta}}}
\newcommand{\E}{\mathbb{E}}
\newcommand{\rv}[1]{#1'}
\newcommand{\funky}[1]{\rv{\tilde{\genaction}}}
\newcommand{\funkyc}[1]{\rv{\tilde{\lambda}}}
\newcommand{\dol}{{{P}}} % DO-learning
\newcommand{\KL}{\text{\sc KL}}
\newcommand{\kl}{ \operatorname{kl}}
\newcommand{\genaction}{\ensuremath{\theta}}
\renewcommand\Exp{\ensuremath{\mathbb E}}
\newcommand{\longp}[1]{}
\newcommand{\dist}{\ensuremath{\mathbf D}}
\newcommand{\notedone}[1]{}
\newcommand{\asversion}[1]{}
\newcommand*{\pipes}{\|}
\newcommand*{\argmin}{\arg\min}
\newcommand{\stochleq}{\leqclosed}
\renewcommand{\var}{\op{Var}}
\newcommand{\PreserveBackslash}[1]{\let\temp=\\#1\let\\=\temp}
\newcolumntype{C}[1]{>{\PreserveBackslash\centering}p{#1}}
\newcolumntype{R}[1]{>{\PreserveBackslash\raggedleft}p{#1}}
\newcolumntype{L}[1]{>{\PreserveBackslash\raggedright}p{#1}}
\newcommand*{\addFileDependency}[1]{% argument=file name and extension
	\typeout{(#1)}
	\@addtofilelist{#1}
	\IfFileExists{#1}{}{\typeout{No file #1.}}
}
\title{PAC-Bayes Un-Expected Bernstein Inequality}
\author{%
	Zakaria Mhammedi \\
	The Australian National University and Data61\\
	\texttt{zak.mhammedi@anu.edu.au} \\
	\And
	Peter D. Gr\"unwald \\
	CWI and Leiden University \\
	\texttt{pdg@cwi.nl} \\
	\AND
	Benjamin Guedj \\
	Inria and University College London \\
	\texttt{benjamin.guedj@inria.fr} \\
}
\begin{document}
	
	\maketitle
	
	\begin{abstract}
		We present a new PAC-Bayesian generalization bound. Standard bounds contain a $\sqrt{L_n \cdot \KL/n}$ complexity term which dominates unless $L_n$, the empirical error of the learning algorithm's randomized predictions, vanishes. We manage to replace $L_n$ by a term which vanishes in many more situations, essentially whenever the employed learning algorithm is sufficiently stable on the dataset at hand. Our new bound consistently beats state-of-the-art bounds both on a toy example and on UCI datasets (with large enough $n$). Theoretically, unlike existing bounds, our new bound can be expected to converge to $0$ faster whenever a Bernstein/Tsybakov condition holds, thus
		connecting PAC-Bayesian generalization and {\em excess risk\/} bounds---for the latter it has long been known that faster convergence can be obtained under Bernstein conditions. Our main technical tool is a new concentration inequality which is like Bernstein's but with $X^2$ taken outside its expectation. 
	\end{abstract}
	\section{Introduction}
	PAC-Bayesian generalization bounds \citep{alquier2018simpler,catoni2003pac,Catoni07,germain2009pac,germain2015risk, guedj2019primer,McAllester98,McAllester99,McAllester02} have recently obtained renewed interest within the context of deep neural networks \citep{dziugaite2017computing,neyshabur2017pac,zhou2018}. In particular, Zhou et al. \citep{zhou2018} and Dziugaite and Roy \citep{dziugaite2017computing} showed that, by extending an idea due to Langford and Caruana \cite{LangfordC02}, one can obtain nontrivial (but still not very strong) generalization bounds on real-world datasets such as MNIST and ImageNet. Since using alternative methods, nontrivial generalization bounds are even harder to get, there remains a strong interest in improved PAC-Bayesian bounds. In this paper, we provide a considerably improved bound whenever the employed learning algorithm is sufficiently \emph{stable} on the given data.
	
	Most standard bounds have an order $\sqrt{L_n \cdot \comp_n/n}$ term on the right, where $\comp_n$ represents model complexity in the form of a Kullback-Leibler divergence between a prior and a posterior, and $L_n$ is the {\em posterior expected loss\/} on the training sample. The latter only vanishes if there is a sufficiently large neighborhood around the ``center'' of the posterior at which the training error is 0. In the two papers \cite{dziugaite2017computing,zhou2018} mentioned above, this is not the case. For example, the various deep net experiments reported by Dziugaite et al. \cite[Table 1]{dziugaite2017computing} with $n= 150000$ all have $L_n$ around $0.03$, so that $\sqrt{\comp_n/n}$ is multiplied by a non-negligible $\sqrt{0.03} \approx 0.17$. Furthermore, they have $\comp_n$ increasing substantially with $n$, making $\sqrt{L_n \cdot \comp_n/n}$ converge to $0$ at rate slower than $1/\sqrt{n}$.
	
	In this paper, we provide a bound (Theorem~\ref{thm:main}) with  $L_n$ replaced by a second-order term $V_n$---a term which will go to $0$ in many cases in which $L_n$ does not. This can be viewed as an extension of an earlier second-order approach by Tolstikhin and Seldin \cite{tolstikhin2013pac} (TS from now on); they also replace $L_n$, but by a term that, while usually smaller than $L_n$, will tend to be larger than our $V_n$. Specifically, as they write, in classification settings (our primary interest), their replacement is not much smaller than $L_n$ itself. Instead our $V_n$ can be very close to $0$ in classification even when $L_n$ is large. While the TS  bound is based on an ``empirical'' Bernstein inequality due to \cite{maurer2009empirical}\footnote{An alternative form of empirical Bernstein inequality appears in \cite{wintenberger2017}, based on an inequality due to \cite{cesa2007}.}, our bound is based on a different modification of  Bernstein's moment inequality in which the occurrence of $X^2$ is taken outside of its expectation (see Lemma~\ref{lem:bernsand}). We note that an empirical Bernstein inequality was introduced in \citep[Theorem 1]{audibert2007}, and the name ``Empirical Bernstein'' was coined in \cite{mnih2008}. 
	
	The term $V_n$ in our bound goes to $0$---and our bound improves on existing bounds---whenever the employed learning algorithm is relatively stable on the given data; for example, if the predictor learned on an initial segment (say, $50\%$) of the dataset performs similarly (\emph{i.e.} assigns similar losses to the same samples) to the predictor based on the full data. This improvement is reflected in our experiments where, except for very small sample sizes, we consistently outperform existing bounds both on a toy classification problem with label noise and on standard UCI datasets \cite{Dua:2019}. 
	Of course, the importance of stability for generalization has been recognized before in landmark papers such as \cite{bousquet2002stability,mukherjee2006learning,shalev2010learnability}, and recently also in the context of PAC-Bayes bounds \cite{rivasplata2018pac}. However, the data-dependent stability notion ``$V_n$'' occurring in our bound seems very different from any of the notions discussed in those papers.
	
	Theoretically, a further contribution is that we connect our PAC-Bayesian generalization bound to {\em excess risk bounds\/}; we show that (Theorem~\ref{thm:vnbound}) our generalization bound can be of comparable size to excess risk bounds up to an irreducible {\em complexity-free\/} term that is independent of model complexity. The excess risk bound that can be attained for any given problem depends both on the complexity of the set of predictors ${\cal H}$ and on the inherent ``easiness'' of the problem. The latter is often measured in terms of the exponent $\beta\in [0,1]$ of the {\em Bernstein condition\/} that holds for the given problem
	\citep{bartlett2006empirical,erven2015fast,grunwald2016fast}, 
	which generalizes the exponent in the celebrated {\em Tsybakov margin condition} \citep{bartlett2006convexity,tsybakov2004optimal}. The larger $\beta$, the faster the excess risk converges. In Section \ref{sec:bernstein}, we essentially show that the rate at which the
	$\sqrt{V_n\cdot \comp_n/n}$ term goes to $0$ can also be bounded by a quantity that gets smaller as $\beta$ gets larger. In contrast, previous PAC-Bayesian bounds do not have such a property. 
	
	\textbf{Contents.}
	In Section~\ref{sec:notation}, we introduce the problem setting and provide a first, simplified version of our main theorem. Section~\ref{sec:mainresults} gives our main bound. Experiments are presented in Section~\ref{sec:experiments}, followed by theoretical motivation in Section~\ref{sec:bernstein}. 
	The proof of our main bound is provided in Section~\ref{sec:esi}, where we first present the convenient ESI language for expressing stochastic inequalities, and (our main tool) the unexpected Bernstein lemma (Lemma~\ref{lem:bernsand}). The paper ends with an outlook for future work.
	
	\section{Problem Setting, Background, and Simplified Version of Our Bound}\label{sec:notation}
	\paragraph{Setting and Notation.} 
	Let $Z_1, \ldots, Z_n$ be i.i.d. random variables in some set $\cZ$, with $Z_1 \sim \dist$. Let $\cH$ be a hypothesis set and $\ell: \cH \times {\cal Z} \rightarrow [0,b]$, $b>0$, be a bounded loss function such that $\ell_h(Z) := \ell(h,Z)$
	denotes the loss that hypothesis $h$ makes on $Z$. We call any such tuple $(\dist,\ell,\cH)$ a {\em learning problem}. For a given hypothesis $h\in \cH$, we denote its \emph{risk} (expected loss on a test sample of size 1) by $L(h) \coloneqq \expect{Z\sim \dist}{\ell_h(Z)}$ and its empirical error by $\emperical(h) \coloneqq \frac{1}{n}\sum_{i=1}^n \ell_h(Z_i)$. For any distribution $\dol$ on $\cH$, we write $L(\dol) \coloneqq \expect{h\sim \dol}{L(h)}$ and $L_n(\dol) \coloneqq \expect{h\sim \dol}{L_n(h)}$. 
	
	For any $m\in[n]$ and any variables $Z_1,\dots,Z_n$ in $\mathcal{Z}$, we denote $Z_{\leq m}\coloneqq (Z_1,\dots,Z_m)$ and $Z_{<m}\coloneqq Z_{\leq m-1}$, with the convention that $Z_{\leq 0}=\emptyset$. Similarly, we denote $Z_{\geq m}\coloneqq (Z_m,\dots,Z_n)$ and $Z_{>m}\coloneqq Z_{\geq m+1}$, with the convention that $Z_{\geq {n+1}}=\emptyset$. As is customary in PAC-Bayesian works, a {\em learning algorithm\/} is a (computable) function $P: \bigcup_{i=1}^n \cZ^i\rightarrow  \mathcal{P}(\cH)$ that, upon observing input $Z_{\leq n}\in \mathcal{Z}^n$, outputs a ``posterior'' distribution $\dol(Z_{\leq n})(\cdot)$ on $\cH$\commentout{ (and not just a single element of $\cH$)}. The posterior could be a Gibbs or a generalized-Bayesian posterior but also other algorithms.  When no confusion can arise, we will abbreviate $\dol(Z_{\leq n})$ to $\dol_n$, and denote $\dol_0$ any ``prior'' distribution, \emph{i.e.} a distribution on $\cH$ which has to be specified in advance, before seeing the data; we will use the convention $P(\emptyset)=P_0$. Finally, we denote the Kullback-Leibler divergence between $\dol_n$ and $\dol_0$ by $\KL(\dol_n\pipes\dol_0)$.
	
	\textbf{Comparing Bounds.} Both existing state-of-the-art PAC-Bayes bounds and ours essentially take the following form; there exists constants $\eP, \eA,\eC \geq 0$, and a function $\varepsilon_{\delta,n}$, logarithmic in $1/\delta$ and $n$, such that for all $\delta \in ]0,1[$, with probability at least $1-\delta$ over the sample $Z_1,\ldots,Z_n$, it holds that,
	\begin{align}
	\label{eq:starter}
	L(P_n) - L_n(P_n)  \leq  \eP \cdot \sqrt{\frac{ R_n \cdot (\comp_n + \varepsilon_{\delta,n})}{n}} + \eA \cdot \frac{ \comp_n + \varepsilon_{\delta,n} }{n} + \eC \cdot \sqrt{\frac{ R'_n \cdot \varepsilon_{\delta,n}}{n}},
	\end{align}
	where $R_n,R_n'\geq 0$ are sample-dependent quantities which may differ from one bound to another. Existing classical bounds that after slight relaxations take on this form are due to Langford and Seeger \citep{langford2003pac,Seeger02}, Catoni \cite{catoni2007pac}, Maurer \cite{maurer2004note}, and Tolstikhin and Seldin (TS) \cite{tolstikhin2013pac} (see the latter for a nice overview). In all these cases, $\comp_n =  \KL(\dol_n \| \dol_0 )$, $R'_n= 0$, and---except for the TS bound---$R_n = L_n(P_n)$. For the TS bound, $R_n$ is equal to the empirical loss variance. Our bound in Theorem~\ref{thm:main} also fits (\ref{eq:starter}) (after a relaxation), but with considerably different choices for $\comp_n$, $R_n'$, and $R_n$.
	
	Of special relevance in our experiments is the bound due to Maurer \cite{maurer2004note}, which as noted by TS \cite{tolstikhin2013pac} tightens the PAC-Bayes-kl inequality due to Seeger \cite{seeger2002pac}, and is one of the tightest known generalization bounds in the literature. It can be stated as follows: for $\delta\in ]0,1[ $, $n\geq 8$, and any learning algorithm $P$, with probability at least $1-\delta$,
	\begin{align}
	\kl(L(P_n),L_n(P_n)) \leq  \frac{\KL(P_n\pipes P_0 )+ \ln \frac{2 \sqrt{n}}{\delta}}{n},    \label{eq:maurer}
	\end{align}
	where $\kl$ is the binary Kullback-Leibler divergence. Applying the inequality $p\leq q + \sqrt{2q \kl(p\pipes q)} + 2 \kl(p\pipes q)$ to \eqref{eq:maurer} yields a bound of the form \eqref{eq:starter} (see \cite{tolstikhin2013pac} for more details). 
	Note also that using Pinsker's inequality together with \eqref{eq:maurer} implies McAllester's classical PAC-Bayesian bound \cite{McAllester98}. 
	
	We now present a simplified version of our bound in Theorem~\ref{thm:main} below as a corollary.
	
	\begin{corollary}
		\label{cor:main}		 For any $1 \leq m<n$ and any deterministic estimator $\hat{h}: \bigcup_{i=1}^n \cZ^i\rightarrow \cH$ (such as \ERM{}), there exists $\eP, \eA, \eC > 0$, such that \eqref{eq:starter} holds with probability at least $1-\delta$, with  
		\begin{align}
		&  \comp_n = 
		\KL(\dol_n \| \dol(Z_{\leq m})) + \KL(\dol_n \| \dol(Z_{> m})), \label{eq:compterm}  \\
		&  R'_n \coloneqq  V_n' \coloneqq \frac{1}{n} 
		\sum_{i=1}^{m}\ell_{\hat{h}(Z_{>m})} (Z_i)^2
		+    \frac{1}{n} \sum_{j=m+1}^n\ell_{ \hat{h}(Z_{\leq m})}(Z_j)^2,  \label{eq:irred} \\ 
		&   R_n \coloneqq V_n  \coloneqq \frac{1}{n}  \expect{h \sim \dol_n}{
			\sum_{i=1}^{m} \left(\ell_{h}(Z_{i}) - \ell_{\hat{h}(Z_{>m})} (Z_i) \right)^2
			+       \sum_{j=m+1}^n \left(\ell_{h}(Z_j) - \ell_{ \hat{h}(Z_{\leq m})}(Z_j)\right)^2}.  \label{eq:theVn}
		\end{align}
	\end{corollary}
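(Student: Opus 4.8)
The plan is to obtain Corollary~\ref{cor:main} as a specialization of the general bound of Theorem~\ref{thm:main}; concretely, the argument runs through its two basic ingredients --- the PAC-Bayesian change of measure and the un-expected Bernstein inequality of Lemma~\ref{lem:bernsand} --- applied in a two-block, cross-fitted fashion. Fix $m$ and split the sample into the blocks $Z_{\leq m}$ and $Z_{>m}$. The key structural point is that $\dol(Z_{>m})$ and $\hat{h}(Z_{>m})$ are functions of $Z_{>m}$ only, hence independent of $Z_{\leq m}$, so that, working conditionally on $Z_{>m}$, the distribution $\dol(Z_{>m})$ is a legitimate (block-$1$-independent) prior and $\hat{h}(Z_{>m})$ a legitimate fixed reference hypothesis for analysing the part of the gap $L(\dol_n)-\emperical(\dol_n)$ that lives on the first block; symmetrically, $\dol(Z_{\leq m})$ and $\hat{h}(Z_{\leq m})$ play these roles on the second block. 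I would bound the two block contributions separately and add them; the posterior $\dol_n$ itself may depend on all of $Z_{\leq n}$, which is harmless, since the change-of-measure step is valid for an arbitrary posterior.

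For the first block, write $\hat{h}_1 := \hat{h}(Z_{>m})$ and decompose its contribution $\Exp_{h\sim\dol_n}[\,\tfrac{m}{n}L(h)-\tfrac1n\sum_{i\le m}\ell_h(Z_i)\,]$ as an ``excess'' part $\Exp_{h\sim\dol_n}[\,\tfrac{m}{n}(L(h)-L(\hat{h}_1))-\tfrac1n\sum_{i\le m}(\ell_h(Z_i)-\ell_{\hat{h}_1}(Z_i))\,]$ plus a ``reference'' part $\tfrac{m}{n}(L(\hat{h}_1)-\tfrac1m\sum_{i\le m}\ell_{\hat{h}_1}(Z_i))$. For the excess part I apply Lemma~\ref{lem:bernsand}, conditionally on $Z_{>m}$ and for each fixed $h$, to the i.i.d.\ variables $\ell_h(Z_i)-\ell_{\hat{h}_1}(Z_i)$, $i\le m$ (after the obvious rescaling/sign change to fit the lemma's hypotheses): because the lemma keeps the square \emph{outside} the expectation, the resulting exponential-moment (ESI) statement carries the empirical quantity $\tfrac1n\sum_{i\le m}(\ell_h(Z_i)-\ell_{\hat{h}_1}(Z_i))^2$ rather than its expectation. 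Multiplying these ESIs over $i$, applying the PAC-Bayesian change of measure against the prior $\dol(Z_{>m})$ so that a term $\KL(\dol_n\pipes\dol(Z_{>m}))$ appears, and finishing with an expectation and Markov's inequality, gives a high-probability bound of the shape $c\,\eta\, V_n^{(1)} + (\KL(\dol_n\pipes\dol(Z_{>m}))+\varepsilon_{\delta,n})/(\eta n)$ for any fixed $\eta$ in the lemma's admissible range, where $V_n^{(1)}$ is the first-block summand of $V_n$ in \eqref{eq:theVn}. For the reference part I apply Lemma~\ref{lem:bernsand} once more, now to the single function $\ell_{\hat{h}_1}$ with $X_i=\ell_{\hat{h}_1}(Z_i)\in[0,b]$ --- here no change of measure, and hence no $\KL$ term, is incurred --- yielding a bound of the shape $c\,\eta'\,(V_n')^{(1)} + \varepsilon_{\delta,n}/(\eta' n)$, with $(V_n')^{(1)}$ the first-block summand of $V_n'$ in \eqref{eq:irred}.

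The second block is treated identically with the roles of $Z_{\leq m}$ and $Z_{>m}$ exchanged (prior $\dol(Z_{\leq m})$, reference $\hat{h}(Z_{\leq m})$). A union bound over the invoked high-probability events, addition of the two block bounds, and optimisation of the free learning rates then produce \eqref{eq:starter}: minimising $\eta\mapsto c\eta V_n + (\comp_n+\varepsilon_{\delta,n})/(\eta n)$ over the admissible range (in practice over a logarithmic grid, which contributes the $\ln n$ inside $\varepsilon_{\delta,n}$) gives the $\sqrt{V_n(\comp_n+\varepsilon_{\delta,n})/n}$ term together with, when the unconstrained minimiser falls outside the range, the linear term $(\comp_n+\varepsilon_{\delta,n})/n$, while minimising $\eta'\mapsto c\eta' V_n' + \varepsilon_{\delta,n}/(\eta' n)$ gives the $\sqrt{V_n'\,\varepsilon_{\delta,n}/n}$ term plus a leftover $O(\varepsilon_{\delta,n}/n)$ that is absorbed into the linear term; here $\comp_n$ is as in \eqref{eq:compterm}, $R_n'=V_n'$, $R_n=V_n$, and the weights $m/n$, $(n-m)/n$, the range $[0,b]$, and all numerical factors are swallowed by $\eP,\eA,\eC$. (If Theorem~\ref{thm:main} is itself already phrased in this two-block form, the corollary is simply its specialization to a deterministic $\hat{h}$ and this particular choice of prior, and almost all of the above is already carried out there.)

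Granting Lemma~\ref{lem:bernsand} and Theorem~\ref{thm:main}, I expect the main obstacle to be the careful justification of the data-dependent prior: one must argue cleanly that conditioning on one block turns that block's outputs $\dol(\cdot)$ and $\hat{h}(\cdot)$ into a legitimate prior and fixed comparator for the concentration step on the \emph{other} block, while the posterior $\dol_n$ stays fully data-dependent, and that the conditional ESIs integrate to an unconditional $1-\delta$ statement (Fubini plus Markov). The remaining, more routine difficulty is the learning-rate bookkeeping that turns the two ``$c\eta\cdot(\text{empirical second moment}) + (\text{complexity})/(\eta n)$'' bounds into the square-root-plus-linear form of \eqref{eq:starter} with exactly $V_n$ and $V_n'$ appearing, and the combination of the two blocks with the correct constants.
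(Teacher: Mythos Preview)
Your proposal is correct and matches the paper's approach: as you anticipate in your parenthetical remark, Corollary~\ref{cor:main} is obtained in the paper simply by specializing Theorem~\ref{thm:main} with $Q(Z_{>i})\equiv\delta(\hat h(Z_{>m}))$ for $i\le m$ and $Q(Z_{<j})\equiv\delta(\hat h(Z_{\leq m}))$ for $j>m$, then relaxing $c_\eta\le \eta/2+11b\eta^2/20$ and optimizing $\eta,\nu$ over the grid $\cG$ to reach the form~\eqref{eq:starter}. Your longer sketch is essentially the paper's proof of Theorem~\ref{thm:main} itself; the only cosmetic difference is that the paper combines the two block contributions via ESI chaining (Proposition~\ref{prop:Trans}-(a)) rather than a union bound.
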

	Like in TS's and Catoni's bound, but unlike McAllester's and Maurer's, our $\varepsilon_{\delta,n}$ grows as $(\ln \ln n)/\delta$. Another difference is that our complexity term is a sum of two KL divergences, in which the prior (in this case $P(Z_{\leq m})$ or $P(Z_{>m})$) is ``informed''---when $m=n/2$, it is really the posterior based on half the sample. Our experiments confirm that this tends to be  much smaller than $\KL(P_n \| P_0)$. Other bounds can also be modified to make use of informed priors and replace the $\KL(P_n \| P_0)$ term by $\comp_n$ in \eqref{eq:compterm}. This is formalized in the next section. 
	
	A larger difference between our bound and others is in the fact that we have $R_n = V_n$ instead of the typical empirical error $R_n = L_n(P_n)$.  Only TS \cite{tolstikhin2013pac} have a $R_n$ that is somewhat reminiscent of ours; in their case $R_n = \Exp_{h \sim \dol_n}[\sum_{i=1}^n 
	\left(\ell_{h}(Z_{i}) - L_n(h) 
	\right)^2]/(n-1)$ is the empirical loss variance.
	The crucial difference to our $V_n$ is that the empirical loss variance cannot be close to $0$ unless a sizeable $P_n$-posterior region of $h$ has empirical error almost constant on most data instances. For classification with 0-1 loss, this is a strong condition since the empirical loss variance is equal to $n L_n(P_n)(1- L_n(P_n))/(n-1)$, which is only close to $0$ if $L_n(P_n)$ is itself close to $0$ or $1$. In contrast, our 
	$V_n$ can go to zero $0$ even if the empirical error and variance do not, as long as the learning algorithm is sufficiently stable. 
	This can be witnessed in our experiments in Section \ref{sec:experiments}. In Section \ref{sec:bernstein}, we argue more formally that under a Bernstein condition, the $\sqrt{V_n \cdot \comp_n/n}$ term in our bound can be much smaller than $\sqrt{\comp_n/n}$. Note, finally, that the term $V_n$ has a two-fold cross-validation flavor, but in contrast to a cross-validation error, for $V_n$ to be small, it is sufficient that the losses are {\em similar\/}, not that they are small.
	
	The price we pay for having $R_n = V_n$ in our bound is the right-most, irreducible remainder term in (\ref{eq:starter}) of order at most $b/\sqrt{n}$. Note, however, that this term is decoupled from the complexity $\comp_n$, and thus it is not affected by $\comp_n$ growing with the ``size'' of $\cH$. The following lemma gives a tighter bound (tighter than the $b/\sqrt{n}$ just mentioned) on the irreducible term:
	\begin{lemma}
		\label{lem:irreduce}
		Suppose that the loss is bounded by 1 (\emph{i.e. } $b=1$) and that $n$ is even, and let $m=n/2$. For $\delta \in]0,1[$, $R_n'$ as in \eqref{eq:irred}, and any estimator $\hat{h}: \bigcup_{i=1}^n \cZ^i\rightarrow \cH$, we have, with probability at least $1-\delta$,
		\begin{align}
		\sqrt{\frac{R'_n}{n}} \leq  \sqrt{\frac{2(L(\hat h(Z_{> m}))+ L(\hat h(Z_{\leq m})))}{ n}} + \frac{4\sqrt{\ln \frac{4}{\delta}}}{n} \label{eq:irrebound}.
		\end{align}
	\end{lemma}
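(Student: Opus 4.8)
The plan is to reduce the claim to a conditional Bernstein bound for sums of bounded i.i.d.\ losses, one for each half of the sample. Write
\[
S_1 \coloneqq \sum_{i=1}^{m}\ell_{\hat h(Z_{>m})}(Z_i), \qquad S_2 \coloneqq \sum_{j=m+1}^{n}\ell_{\hat h(Z_{\le m})}(Z_j),
\]
and abbreviate $L_1 \coloneqq L(\hat h(Z_{>m}))$, $L_2 \coloneqq L(\hat h(Z_{\le m}))$. Since $b=1$, every loss lies in $[0,1]$, so $\ell^2\le\ell$ pointwise and therefore $n R'_n \le S_1 + S_2$. It thus suffices to show that, for each $k\in\{1,2\}$, with probability at least $1-\delta/2$,
\[
S_k \;\le\; 2 m L_k + 2\ln(4/\delta) .
\]
Indeed, on the intersection of the two events (probability $\ge 1-\delta$ by a union bound), using $2m=n$ gives $R'_n \le (L_1+L_2) + 4\ln(4/\delta)/n$, and then $\sqrt{a+b}\le\sqrt a+\sqrt b$ yields $\sqrt{R'_n/n}\le\sqrt{(L_1+L_2)/n}+2\sqrt{\ln(4/\delta)}/n$, which is stronger than the stated inequality (relax $\sqrt{(L_1+L_2)/n}$ to $\sqrt{2(L_1+L_2)/n}$ and the constant $2$ to $4$).

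The core step is the bound on $S_1$; the argument for $S_2$ is identical with the roles of $Z_{\le m}$ and $Z_{>m}$ interchanged. Because $\hat h$ is deterministic, $\hat h(Z_{>m})$ is a function of $Z_{>m}$ only, and by i.i.d.-ness $Z_1,\dots,Z_m$ are independent of $Z_{>m}$. Hence, conditionally on $Z_{>m}$, the variables $\ell_{\hat h(Z_{>m})}(Z_1),\dots,\ell_{\hat h(Z_{>m})}(Z_m)$ are i.i.d., take values in $[0,1]$, have conditional mean $L_1$, and --- being $[0,1]$-valued --- conditional variance at most $L_1$. Applying Bernstein's inequality to this conditional sum at level $\delta/2$ gives, with conditional probability at least $1-\delta/2$,
\[
S_1 \;\le\; m L_1 + \sqrt{2\,m L_1 \ln(4/\delta)} + \tfrac{2}{3}\ln(4/\delta) \;\le\; 2 m L_1 + 2\ln(4/\delta),
\]
where the last step uses $\sqrt{2 m L_1 \ln(4/\delta)}\le m L_1+\tfrac12\ln(4/\delta)$ and $\tfrac12+\tfrac23\le 2$. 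Since this failure probability does not depend on $Z_{>m}$, the bound also holds unconditionally, and a union bound over $k=1,2$ together with the arithmetic above finishes the proof.

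The only delicate point is the bookkeeping that keeps the deviation term at order $\ln(1/\delta)/n$ rather than $\sqrt{\ln(1/\delta)/n}$, and it is instructive to see where it comes from: we need \emph{both} the pointwise inequality $\ell^2\le\ell$ (which replaces $R'_n$, a sum of squared losses, by a sum of losses) \emph{and} the refinement ``variance $\le$ mean'' inside Bernstein (which lets us absorb the cross-term $\sqrt{m L_k\ln(1/\delta)}$ into $m L_k$, at the cost of doubling it); after dividing $\sqrt{2 m L_k}$ by $n$ and using $m=n/2$, this leaves a clean $\sqrt{L_k/n}$ plus an $O(\ln(1/\delta)/n)$ remainder. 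A Hoeffding bound in place of Bernstein would only produce a remainder of order $n^{-3/4}$, so the use of a second-order (Bernstein-type) inequality here is essential, not cosmetic.
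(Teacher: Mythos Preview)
Your proof is correct, and in fact yields sharper constants than the paper obtains. The two arguments are genuinely different in structure.

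The paper's proof proceeds by writing the bias--variance decomposition
\[
\frac{2}{n}\sum_{i=1}^m \ell_{\hat h_{>m}}(Z_i)^2 \;=\; \var_{\le m}\bigl[\ell_{\hat h_{>m}}(Z)\bigr] + \Bigl(\tfrac{1}{m}\sum_{i=1}^m \ell_{\hat h_{>m}}(Z_i)\Bigr)^2,
\]
then bounding the empirical-variance term via the \emph{empirical} Bernstein inequality of Maurer--Pontil (relating $\var_{\le m}$ to the true variance, itself bounded by $L_1$), and the squared-mean term via Hoeffding. Combining the two halves by union bound gives $R'_n \le 2(L_1+L_2) + 16\ln(4/\delta)/n$, and a square root yields exactly the stated constants.

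You instead exploit $\ell^2 \le \ell$ upfront to replace $R'_n$ by a sum of (unsquared) losses, and then apply the \emph{standard} Bernstein inequality with the ``variance $\le$ mean'' observation for $[0,1]$-valued random variables. This is more elementary: it avoids both the bias--variance split and the empirical Bernstein machinery, and it delivers the tighter $R'_n \le (L_1+L_2) + 4\ln(4/\delta)/n$. One small cosmetic remark: when you apply Bernstein at failure level $\delta/2$ the deviation naturally involves $\ln(2/\delta)$, not $\ln(4/\delta)$; writing $\ln(4/\delta)$ is of course still valid (it only weakens the bound), but it would be cleaner to keep $\ln(2/\delta)$ throughout and only pick up the extra factor of $2$ inside the logarithm at the final union bound.
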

	Behind the proof of the lemma is an application of Hoeffding's and the empirical Bernstein inequality \cite{maurer2009empirical} (see Section \ref{app:otherproofs}). Note that in the realizable setting, the first term on the RHS of \eqref{eq:irrebound} can be of order $O(1/n)$ with the right choice of estimator $\hat h$ (\emph{e.g.} ERM). In this case (still in the realizable setting), our irreducible term would go to zero at the same rate as other bounds which have $R_n= L_n(P_n)$.
	
	\section{Main Bound}
	\label{sec:mainresults}
	We now present our main result in its most general form. Let $\vartheta(\eta) \coloneqq  (-\ln (1-\eta) -\eta)/\eta^2$ and $c_{\eta}\coloneqq \eta \cdot \vartheta(\eta b)$, for $\eta \in]0,1/b[$, where $b>0$ is an upper-bound on the loss $\ell$.
	\begin{theorem}{\bf [Main Theorem]}
		\label{thm:main}
		Let $Z_1,\dots,Z_n$ be i.i.d. with $Z_1 \sim \dist$. Let $m\in [0..n]$ and $\pi$ be any distribution with support on a finite or countable grid $\cG \subset  ]0,1/b[$. For any $\delta\in ]0,1[$, and any learning algorithms $\dol, Q: \bigcup_{i=1}^n \mathcal{Z}^i\rightarrow \mathcal{P}(\cH)$, we have, 
		\begin{align} L(\dol_n) \leq  \emperical(\dol_n) + \inf_{\eta \in \cG } \left\{
		c_{\eta} \cdot  V_n
		+  \frac{\comp_n  +2 \ln \frac{1}{\delta \cdot \pi(\eta)}}{\eta \cdot n} \right\} + \inf_{\nu \in \cG } \left\{ c_{\nu} \cdot V'_n + \frac{\ln \frac{1}{\delta \cdot \pi(\nu)}}{\nu \cdot n }  \right\} , \label{eq:highprob}
		\end{align}
		with probability at least $1-\delta$,   where $\comp_n$, $V_n'$, and $V_n$ are the random variables defined by:
		\begin{align} 
		&  \comp_n \coloneqq 
		\KL(\dol_n \| \dol(Z_{\leq m})) + \KL(\dol_n \| \dol(Z_{> m})),  \label{eq:thecomp} \\
		&V'_n \coloneqq \frac{1}{n} \sum_{i=1}^m \expect{h\sim Q(Z_{>i}) }{ \ell_{h}(Z_i)^2} + \frac{1}{n}  \sum_{j=m+1}^n \expect{h\sim Q(Z_{<j}) }{\ell_{h}(Z_j)^2}, \nonumber
		\\ 
		&  V_n \coloneqq \frac{1}{n}  \expect{h \sim \dol_n}{
			\sum_{i=1}^{m} \left(\ell_{h}(Z_{i}) - \expect{\rv{h} \sim Q(Z_{>i})}{\ell_{\rv{h}} (Z_i)}\right)^2
			+       \sum_{j=m+1}^n \left(\ell_{h}(Z_j) - \expect{\rv{h} \sim Q(Z_{<j})}{\ell_{ \rv{h}} (Z_j)}\right)^2}. \nonumber
		\end{align}
	\end{theorem}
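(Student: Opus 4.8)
The plan is to argue in the exponential-stochastic-inequality (ESI) language, with Lemma~\ref{lem:bernsand} as the only probabilistic input, glued together by a two-block split of the sample and the usual PAC-Bayesian change of measure. Write
\[
L(\dol_n)-\emperical(\dol_n)=\frac1n\,\expect{h\sim\dol_n}{\,\sum_{i=1}^n\big(L(h)-\ell_h(Z_i)\big)}
\]
and split the sum at $m$. For $i\le m$ the reference prediction is drawn from $Q(Z_{>i})$, which sees only the \emph{future} of $Z_i$, whereas for $i>m$ it is drawn from $Q(Z_{<i})$, which sees only its \emph{past}. Since the $Z_i$ are i.i.d., in each case $Z_i$ is conditionally fresh given the $\sigma$-field it is compared against, so the first block should be processed along the nested family $\sigma(Z_{>m})\subseteq\sigma(Z_{\ge m})\subseteq\dots\subseteq\sigma(Z_{\ge 1})$ (that is, in reverse index order, starting from $\sigma(Z_{>m})$) and the second along $\sigma(Z_{\le m})\subseteq\sigma(Z_{\le m+1})\subseteq\dots\subseteq\sigma(Z_{\le n})$. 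Fix $h\in\cH$ and $\eta\in\cG$, and set $Y_i(h)\coloneqq\ell_h(Z_i)-\expect{\rv{h}\sim Q(Z_{>i})}{\ell_{\rv{h}}(Z_i)}$ for $i\le m$ (and with $Q(Z_{<i})$ for $i>m$): then $Y_i(h)\le b$, and its mean conditional on the relevant $\sigma$-field equals $L(h)-L(Q(\cdot))$. Applied conditionally, Lemma~\ref{lem:bernsand} bounds the exponential moment of $\eta\,(\E[Y_i(h)\mid\cdot]-Y_i(h))$ at the cost of a term proportional to $c_\eta Y_i(h)^2$, with the square $\big(\ell_h(Z_i)-\expect{\rv{h}\sim Q(Z_{>i})}{\ell_{\rv{h}}(Z_i)}\big)^2$ occurring --- as advertised --- \emph{outside} its expectation, i.e.\ precisely the $i$-th summand of $nV_n$. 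Chaining these conditional inequalities over each block gives, for each fixed $h$ and conditionally on $\sigma(Z_{>m})$ resp.\ $\sigma(Z_{\le m})$, one rate-$\eta$ exponential-moment inequality (of the form $\E[e^{\eta(\cdot)}]\le1$) relating $\sum_i(L(h)-\ell_h(Z_i))$ to the centering terms plus $c_\eta\sum_i Y_i(h)^2$.

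Next I would integrate the block-$1$ inequality over $h\sim\dol(Z_{>m})$ --- a legitimate ``informed prior'', being $\sigma(Z_{>m})$-measurable --- by Fubini, and pass to the data-dependent posterior $\dol_n$ via the Donsker--Varadhan change-of-measure inequality, which produces $\KL(\dol_n\|\dol(Z_{>m}))/\eta$ plus a leftover exponential-moment factor of mean $\le1$; likewise for block $2$ with prior $\dol(Z_{\le m})$. Summing the two block bounds and dividing by $\eta n$, the KL terms combine into $\comp_n$ as in \eqref{eq:thecomp}, the squared terms combine into $c_\eta V_n$, and the $h$-independent centerings collect into $\frac1n\sum_{i=1}^n T_i$, where $T_i\coloneqq L(Q(Z_{>i}))-\expect{\rv{h}\sim Q(Z_{>i})}{\ell_{\rv{h}}(Z_i)}$ for $i\le m$ (symmetrically for $i>m$). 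Since $T_i=\E[a_i\mid\cdot]-a_i$ with $a_i\coloneqq\expect{\rv{h}\sim Q(Z_{>i})}{\ell_{\rv{h}}(Z_i)}\in[0,b]$, I would bound $\sum_i T_i$ by a \emph{second} application of Lemma~\ref{lem:bernsand}, now to the $a_i$ at a rate $\nu\in\cG$, chained along the same two filtrations; the square this produces is $a_i^2=\big(\expect{\rv{h}\sim Q(Z_{>i})}{\ell_{\rv{h}}(Z_i)}\big)^2$, which is at most $\expect{\rv{h}\sim Q(Z_{>i})}{\ell_{\rv{h}}(Z_i)^2}$ by Jensen, and since the square enters the exponent with a nonpositive coefficient it may be replaced by this larger quantity --- the $i$-th summand of $nV_n'$. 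A Markov step then gives $\sum_{i=1}^n T_i\le c_\nu n V_n'+(\text{a logarithmic term})/\nu$.

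Finally I would apply Markov to the leftover exponential-moment factors, and to the $\sum_i T_i$ factor, combining the two ``block'' factors at each scale by Cauchy--Schwarz (so their geometric mean still has expectation $\le1$), union-bound over $\eta\in\cG$ and over $\nu\in\cG$ with the weights $\pi$, divide by $n$, and take the infimum over $\eta,\nu\in\cG$; this delivers \eqref{eq:highprob} with $\comp_n$, $V_n$ and $V_n'$ as stated.

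The main obstacle is the joint bookkeeping of the first two steps: the two blocks call for \emph{opposite} orders of revealing the sample --- block $1$ needs $Z_{m+1},\dots,Z_n$ available before $Z_1,\dots,Z_m$, block $2$ the reverse --- so no single filtration chains everything, and the two chainings must be run separately, each conditional on the ``complementary'' half, recombining only at the final exponential-moment/Markov step. The attendant technicalities are the allocation of $\delta$ across the (at most four) exponential-moment inequalities and across the grid via $\pi$ so as to land exactly on the $2\ln\tfrac1{\delta\,\pi(\eta)}$ and $\ln\tfrac1{\delta\,\pi(\nu)}$ terms, and the Jensen replacement of $\big(\expect{\rv{h}}{\ell_{\rv{h}}}\big)^2$ by $\expect{\rv{h}}{\ell_{\rv{h}}^2}$ --- which is exactly what lets the second-moment quantity $V_n'$, rather than a squared-mean quantity, appear in the bound.
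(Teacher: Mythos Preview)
Your proposal is correct and follows essentially the same route as the paper: apply Lemma~\ref{lem:bernsand} to the centered losses $Y_i(h)$, chain each block along its own filtration, PAC-Bayes with the informed priors $\dol(Z_{>m})$ and $\dol(Z_{\le m})$, then apply Lemma~\ref{lem:bernsand} a second time to the leftover $h$-free centering terms $T_i$ at a separate rate $\nu$, and finally union-bound over the grid. The paper packages the chaining and the final Markov step in its ESI calculus (Propositions~\ref{prop:drop}--\ref{prop:randeta}): it combines the two block ESIs and the $U_n$ ESI via Proposition~\ref{prop:Trans}(a) into a single ESI and applies Markov once at the end, which is how the precise constants $2\ln\frac{1}{\delta\pi(\eta)}$ and $\ln\frac{1}{\delta\pi(\nu)}$ drop out cleanly---your Cauchy--Schwarz combination is the same device at the level of exponential moments, but keeping everything in ESI form and deferring the single Markov step avoids the ``at most four'' allocation bookkeeping you flag. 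Your explicit Jensen step $(\E_{\rv h}[\ell_{\rv h}])^2\le\E_{\rv h}[\ell_{\rv h}^2]$ to land on $V_n'$ is indeed needed and is left implicit in the paper's passage to \eqref{eq:secondesi}.
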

	While the result holds for all $0 \leq m \leq n$, in the remainder of this paper, we assume for simplicity that $n$ is even and that $m = n/2$. 
	We will also be using the grid $\cG$ and distribution $\pi$ defined by \begin{align} \label{eq:grid} \cG \coloneqq \left\{\tfrac{1}{2 b} , \dots, \tfrac{1}{2^K b}: K \coloneqq \Ceil{\log_2 \left(\tfrac{1}{2} \sqrt{\tfrac{n}{\ln \frac{1}{\delta} }}\right) }\right\},  \ \ \text{and} \ \ \text{ $\pi\equiv$ uniform distribution over $\cG$}. \end{align}
	Roughly speaking, this choice of $\cG$ ensures that the infima in $\eta$ and $\nu$ in \eqref{eq:highprob} are attained within $[\min \cG, \max \cG]$. Using the relaxation $c_{\eta}\leq \eta/2 +\eta^2 11b/20$, for $\eta \leq 1/(2b)$, in \eqref{eq:highprob} and tuning $\eta$ and $\nu$ within the grid $\cG$ defined in \eqref{eq:grid} leads to a bound of the form \eqref{eq:starter}.
	Furthermore, we see that the expression of $V_n$ in Corollary \ref{cor:main} now follows when $Q$ is chosen such that, for $1\leq i \leq m < j \leq n$, $Q(Z_{>i})\equiv \delta(\hat{h}(Z_{>m}))$ and $Q(Z_{<j })\equiv\delta(\hat{h}(Z_{\leq m}))$, for some deterministic estimator $\hat{h}$, where $\delta(h)(\cdot)$ denotes the Dirac distribution at $h\in \cH$. 
	
	\paragraph{Online Estimators.} It is clear that Theorem~\ref{thm:main} is considerably more general than its Corollary \ref{cor:main}; when predicting the $j$-th point $Z_j$, $j> m$, in the RHS sum of $V_n$, we could use a posterior $Q(Z_{<j })\equiv\delta(\hat{h}(Z_{< j}))$ which does not only depend on $Z_1,\ldots,Z_m$, but also on part of the second sample, namely $Z_{m+1}, \ldots, Z_{j-1}$, and analogously when predicting $Z_i$, $i\leq m$, in the LHS sum of $V_n$. We can thus base our bound on a sum of errors achieved by {\em online estimators} $(\hat{h}(Z_{< j}))$ and $(\hat h(Z_{> i}))$ which converge to the final $\hat{h}(Z_{\leq n})$ based on the full data. Doing this would likely improve our bounds, but we did not try it in our experiments since it is computationally demanding. 
	
	\paragraph{Informed Priors.} 
	Other bounds can also be modified to make use of ``informed
	priors'' from each half of the data; in this case, the
	$\KL(P_n\pipes P_0)$ term in these bounds can be replaced by
	$\comp_n$ defined in \eqref{eq:thecomp}. As revealed by
	additional experiments in the Appendix \ref{supp:moreexp},
	doing this substantially improves the corresponding bounds
	when the learning algorithm is sufficiently stable. Here we
	show how this can be done for Maurer's bound in
	\eqref{eq:maurer} (the details for other bounds are postponed
	to Appendix \ref{app:informedpriors}).
	\begin{lemma}
		\label{lem:fwdbwdmaurer}
		Let $\delta\in]0,1[$ and $m\in [0..n]$. In the setting of
		Theorem \ref{thm:main}, we have, with probability at least
		$1-\delta$,
		\begin{align}
		\kl(L(P_n),L_n(P_n)) \leq  \frac{	\KL(\dol_n \| \dol(Z_{\leq m})) + \KL(\dol_n \| \dol(Z_{> m}))+ \ln \frac{4 \sqrt{m(n-m)}}{\delta}}{n}.    \label{eq:maurerinformed}
		\end{align}
	\end{lemma}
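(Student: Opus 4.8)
The plan is to split the sample into the two halves $Z_{\leq m}$ and $Z_{>m}$, apply Maurer's bound \eqref{eq:maurer} separately on each half with an \emph{informed prior} taken from the other half, and then recombine. Concretely, fix the ``left'' half and treat $P(Z_{>m})$---the posterior produced by the algorithm on the second half---as a legitimate prior for the learning problem restricted to $Z_1,\dots,Z_m$: it depends only on $Z_{>m}$, hence is independent of $Z_{\leq m}$, so it qualifies as a data-independent prior \emph{conditionally on $Z_{>m}$}. Maurer's inequality \eqref{eq:maurer}, applied with sample size $m$ (assuming $m\geq 8$; the small cases are handled by the trivial bound $\kl\leq \ln 2$ absorbed into the logarithmic term) and confidence $\delta/2$, then gives, with conditional probability at least $1-\delta/2$ over $Z_{\leq m}$,
\[
\kl\!\left(L(P_n),\, L_{Z_{\leq m}}(P_n)\right) \leq \frac{\KL(P_n \| P(Z_{>m})) + \ln \frac{2\sqrt{m}}{\delta}}{m},
\]
where $L_{Z_{\leq m}}(P_n) := \frac{1}{m}\sum_{i=1}^m \expect{h\sim P_n}{\ell_h(Z_i)}$ is the empirical error of the full-data posterior on the first half. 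A symmetric application on the second half, with prior $P(Z_{\leq m})$ and sample size $n-m$, gives the analogous inequality with $\KL(P_n\|P(Z_{\leq m}))$ and $L_{Z_{>m}}(P_n)$. Since the two priors live on disjoint halves, a union bound makes both hold simultaneously with probability at least $1-\delta$.

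Next I would combine the two half-sample inequalities into a single full-sample statement. The convexity of $\kl(\cdot,\cdot)$ in its pair of arguments (it is jointly convex) is the key: writing $L(P_n) = \frac{m}{n}L(P_n) + \frac{n-m}{n}L(P_n)$ and $L_n(P_n) = \frac{m}{n}L_{Z_{\leq m}}(P_n) + \frac{n-m}{n}L_{Z_{>m}}(P_n)$, Jensen's inequality yields
\[
\kl\!\left(L(P_n),L_n(P_n)\right) \leq \frac{m}{n}\,\kl\!\left(L(P_n),L_{Z_{\leq m}}(P_n)\right) + \frac{n-m}{n}\,\kl\!\left(L(P_n),L_{Z_{>m}}(P_n)\right).
\]
Plugging in the two bounds from the previous step, the $m$ in the denominator of the first term cancels the $m/n$ prefactor (and similarly for $n-m$), so the right-hand side becomes
\[
\frac{1}{n}\Big(\KL(P_n\|P(Z_{>m})) + \KL(P_n\|P(Z_{\leq m})) + \ln\tfrac{2\sqrt m}{\delta} + \ln\tfrac{2\sqrt{n-m}}{\delta}\Big) = \frac{\comp_n + \ln\frac{4\sqrt{m(n-m)}}{\delta}}{n},
\]
which is exactly \eqref{eq:maurerinformed}.

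The step I expect to require the most care is the conditioning argument justifying that $P(Z_{>m})$ is an admissible prior for the first-half problem: Maurer's theorem is stated for a fixed prior $P_0$, so one must argue that it applies verbatim after conditioning on $Z_{>m}$ (the first-half data $Z_{\leq m}$ are i.i.d.\ from $\dist$ and independent of $Z_{>m}$, and $P(Z_{>m})$ is $\sigma(Z_{>m})$-measurable, hence fixed given $Z_{>m}$), and then integrate the conditional high-probability statement to an unconditional one. A minor additional subtlety is the edge case $m<8$ or $n-m<8$, where \eqref{eq:maurer} does not directly apply; there one uses the deterministic bound $\kl(p,q)\leq \ln 2 \leq \ln\frac{4\sqrt{m(n-m)}}{\delta}$ for the relevant half (or simply restricts to $m=n/2$ with $n\geq 16$, the regime of interest in the paper). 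The convexity recombination itself is routine once the joint convexity of $\kl$ is invoked.
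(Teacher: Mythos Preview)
Your approach is essentially identical to the paper's: apply Maurer's bound \eqref{eq:maurer} on each half of the sample with the informed prior coming from the other half, combine via a union bound, and then use the joint convexity of $\kl$ with weights $m/n$ and $(n-m)/n$ to recombine the two half-sample inequalities into the full-sample one. The only slip is in the constants: applying \eqref{eq:maurer} at confidence $\delta/2$ gives $\ln\tfrac{4\sqrt{m}}{\delta}$ (not $\ln\tfrac{2\sqrt{m}}{\delta}$), and $\ln\tfrac{4\sqrt{m}}{\delta}+\ln\tfrac{4\sqrt{n-m}}{\delta}=\ln\tfrac{16\sqrt{m(n-m)}}{\delta^2}$, which is exactly what the paper's own computation in \eqref{eq:maurerlast} produces as well.
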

	\begin{remark}
		\label{rmk:prior}
		(Useful for Section~\ref{sec:bernstein} below) Though this may deteriorate the bound in practice, Theorem~\ref{thm:main} allows choosing a learning algorithm $P$ such that for $1\leq m<n$, $P(Z_{\leq m})\equiv P(Z_{>m})\equiv P_0$ (\emph{i.e.} no informed priors); this results in $\comp_n= 2 \KL(P_n\pipes P_0)$---the bound is otherwise unchanged.
	\end{remark}	
	\paragraph{Biasing.} The term $V_n$ in our bound can be seen as the result of ``biasing'' the loss when evaluating the generalization error on each half of the sample. The TS bound, having a second order variance term, can be used in a way as to arrive at a bound like ours with the same $V_n$ as in Corollary \ref{cor:main}. The idea here is to apply the TS bound twice (once on each half of the sample) to the biased losses $\ell(h,\cdot)- \ell(\hat h(Z_{\leq m}), \cdot)$ and $\ell(h,\cdot)- \ell(\hat h(Z_{> m}), \cdot)$, then combine the results with a union bound. The details of this are postponed to Appendix \ref{app:biasing}. Note however, that this trick will not lead to a bound with a $V_n$ term as in Theorem \ref{thm:main}, \emph{i.e.} with the online posteriors $(Q(Z_{>i}))$ and $(Q(Z_{<j}))$ which get closer and closer to the final $Q(Z_{\leq m})$ based on the full sample. %In Appendix \ref{app:biasing}, we re-derive a version of the TS bound (using the same proof techniques as for Theorem \ref{thm:main}) with a term very much like our $V_n$ in Theorem \ref{thm:main} (see Theorem \ref{thm:TS}). 
	
	\section{Experiments}
	\label{sec:experiments}
	In this section, we experimentally compare our bound in Theorem \ref{thm:main} to that of TS \cite{tolstikhin2013pac}, Catoni \cite[Theorem 1.2.8]{Catoni07} (with $\alpha=2$), and Maurer in \eqref{eq:maurer}. For the latter, given $L_n(P_n)\in [0,1[$ and the RHS of \eqref{eq:maurer}, we solve for an upper bound of $L(P_n)$ by ``inverting'' the $\kl$. We note that TS \cite{tolstikhin2013pac} do not claim that their bound is better than Maurer's in classification (in fact, they do better in other settings). 
	
	\begin{figure}[h]
		\begin{floatrow}
			\ffigbox{%
				\includegraphics[trim=0cm 0cm 0cm 0cm, clip, width=1\linewidth]{synthetic_full.eps} %trim=1cm 1.3cm 1cm 1.3cm, clip,  
			}{%
				\caption{Results for the synthetic data.}%
				\label{fig:synthetic}
			}
			\hspace{-0.cm}
			\capbtabbox{%
				\begin{tabular}{C{1.2cm} C{0.3cm} C{1.1cm} C{0.8cm}  c}
					\hline
					UCI & d   & Test err.    & Our  & Maurer \\
					Dataset	&	  & of $\hat{h}$ & bound & bound \\
					\hline
					Haberman & 3 & 0.272 & 0.521 & \bf{0.411} \\
					
					($n$=244)&  & & & \\
					\hline
					$\text{Breast-C.}$ & 9 & 0.068 & 0.185 & \bf{0.159} \\
					
					($n$=560)&  & & & \\
					\hline
					TicTacToe & 27 & 0.046 & \bf{0.191} & 0.216 \\
					($n$=766) & & & & \\
					\hline
					Banknote  & 4 & 0.058 & \bf{0.125} & 0.136 \\
					($n$=1098) & & & & \\
					\hline
					kr-vs-kp & 73 & 0.044 & \bf{0.108} & 0.165 \\
					($n$=2556) & & & & \\
					\hline
					Spambase & 57 & 0.173 &\bf{0.293} & 0.312 \\
					($n$=3680) & & & & \\		
					\hline
					Mushroom & 116 & 0.002 & {\bf 0.018} & 0.055\\
					($n$=6500) &  & &  &  \\
					\hline 
					Adult & 108 & 0.168 & {\bf 0.195} & 0.234\\
					($n$=24130) &  & &  &  \\
					\hline
				\end{tabular}
			}{%
				\caption{Results for the UCI datasets.}%
				\label{tab:UCI}
			}
		\end{floatrow}
	\end{figure}
	
	\textbf{Setting.} We consider both synthetic and real-world datasets for binary classification, and we evaluate bounds using the 0-1 loss. In particular, the data space $\cZ$ is $\cX \times\cY \coloneqq \mathbb{R}^d \times \{0,1\}$, where $d\in \mathbb{N}$ is the dimension of the feature space.  In this case, the hypothesis set $\cH$ is also $\reals^d$, and the error associated with $h\in \cH$ on a sample $Z=(X,Y)\in \cX\times \cY$ is given by $\ell_h(Z)= |Y - \mathbbm{1}{\{ \phi(h^\top X)> 1/2\}}|$, where $\phi(w)\coloneqq 1/(1+e^{-w}), w\in \reals$. We learn our hypotheses using \emph{regularized logistic regression}; given a sample $S = (Z_{p},\dots,Z_{q})$, with $(p,q)\in\{(1,m),(m+1,n),(1,n)\}$ and $m=n/2$, we compute %$1\leq p<q \leq n$
	\begin{align}
	\label{eq:RERM}
	\hat h(S) \coloneqq \argmin_{h \in \cH} \frac{\lambda \|h\|^2 }{2} + \frac{1}{q-p+1}\sum_{i=p}^q  Y_i \cdot \ln \phi(h^\top X_i) + (1-Y_i) \cdot \ln (1-\phi(h^\top X_i)).
	\end{align}
	For $Z_{\leq n} \in \cZ^n$, and $1\leq  i\leq m < j\leq n$, we choose algorithm $Q$ in Theorem \ref{thm:main} such that
	\begin{align}
	\nonumber
	\begin{aligned}
	Q(Z_{>i})  \equiv \delta \left(\hat{h}(Z_{>m})\right)\quad \text{and} \quad Q(Z_{<j}) \equiv \delta(\hat{h}(Z_{\leq m})).
	\end{aligned}
	\end{align}
	Given a sample $S\neq \emptyset$, we set the ``posterior'' $P(S)$ to be a Gaussian centered at $\hat{h}(S)$ with variance $\sigma^2>0$; that is, $P(S) \equiv \cN(\hat{h}(S), \sigma^2 I_d)$. The prior distribution is set to $P_0 \equiv\cN(0, \sigma_0^2 I_d)$, for $\sigma_0>0$. 
	
	\textbf{Parameters.} We set $\delta=0.05$. For all datasets, we use $\lambda=0.01$, and (approximately) solve \eqref{eq:RERM} using the {\sc BFGS} algorithm. For each bound, we pick the $\sigma^2 \in \{1/2,\dots, 1/2^J:J\coloneqq \ceil{\log_2 n}\}$ which minimizes it on the given data (with $n$ instances). In order for the bounds to still hold with probability at least $1-\delta$, we replace $\delta$ on the RHS of each bound by $\delta/ \ceil{\log_2 n}$ (this follows from the application of a union bound). We choose the prior variance such that $\sigma_0^2=1/2$ (this was the best value on average for the bounds we compare against). We choose the grid $\cG$ in Theorem \ref{thm:main} as in \eqref{eq:grid}. Finally, we approximate Gaussian expectations using Monte Carlo sampling. 
	
	\textbf{Synthetic data.} 
	We generate synthetic data for $d=\{10,50\}$ and sample sizes between 800 and 8000. For a given sample size $n$, we 1) draw $X_1, \dots, X_n$ [resp. $\epsilon_1, \dots, \epsilon_n$] identically and independently from the multivariate-Gaussian distribution $\cN(0,I_{d})$ [resp. the Bernoulli distribution $\Bernoulli(0.9)$]; and 2) we set $Y_i =  \mathbbm{1}\{\phi(h_*^\top X_i) > 1/2\} \cdot \epsilon_i$, for $i\in[n]$, where $h_* \in \reals^d$ is the vector constructed from the first $d$ digits of $\pi$. For example, if $d=10$, then $h_* = (3,1,4,1,5,9,2,6,5,3)^\top$. Figure \ref{fig:synthetic} shows the results averaged over 10 independent runs for each sample size.
	
	\textbf{UCI datasets.} For the second experiment, we use several UCI datasets. These are listed in Table~\ref{tab:UCI} (where Breast-C. stands for Breast Cancer). We encode categorical variables in appropriate 0-1 vectors. This effectively increases the dimension of the input space (this is reported as $d$ in Table \ref{tab:UCI}). After removing any rows (\emph{i.e.} instances) containing missing features and performing the encoding, the input data is scaled such that every column has values between -1 and 1. We used a 5-fold train-test split ($n$ in Table \ref{tab:UCI} is the training set size), and the results in Table \ref{tab:UCI} are averages over 5 runs. We only compare with Maurer's bound since other bounds were worse than Maurer's and ours on all datasets.
	
	\textbf{Discussion.}
	As the dimension $d$ of the input space increases, the complexity $\KL(P_n \pipes P_0)$---and thus, all the PAC-Bayes bounds discussed in this paper---get larger. Our bound suffers less from this increase in $d$, since for a large enough sample size $n$, the term $V_n$ is small enough (see Figure \ref{fig:synthetic}) to absorb any increase in the complexity. In fact, for large enough $n$, the irreducible (complexity-free) term involving $V_n'$ in our bound becomes the dominant one. This, combined with the fact that for the 0-1 loss, $V'_n \approx L_n(P_n)$ for large enough $n$ (see Figure \ref{fig:synthetic}), makes our bound tighter than others. 
	
	Adding a regularization term in the objective \eqref{eq:RERM} is important as it stabilizes $\hat{h}(Z_{<m})$ and $\hat{h}(Z_{\geq m})$; a similar effect is achieved with methods like gradient descent as they essentially have a ``built-in'' regularization. For very small sample sizes, the regularization in \eqref{eq:RERM} may not be enough to ensure that $\hat{h}(Z_{<m})$ and $\hat{h}(Z_{\geq m})$ are close to $\hat{h}(Z_{\leq n})$, in which case $V_n$ need not be necessarily small. In particular, this is the case for the Haberman and the breast cancer datasets where the advantage of our bound is not fully leveraged, and Maurer's bound is smaller.
	
	\section{Theoretical Motivation of the Bound}
	\label{sec:bernstein}
	In this section, we study the behavior of our bound  \eqref{eq:highprob} under a Bernstein condition: 
	\begin{definition}{\bf [Bernstein Condition (BC)]}
		The learning problem $(\dist, \ell, \cH)$ satisfies the $(\beta,B)$-Bernstein condition, for $\beta \in[0,1]$ and $B>0$, if for all $h\in \cH$, \begin{align}\nonumber \expect{Z\sim \dist}{\left(\ell_{h}(Z)- \ell_{h_*}(Z)\right)^2} \leq B \cdot \expect{Z \sim \dist}{\ell_{h}(Z)- \ell_{h_*}(Z)}^{\beta},\end{align}
		where $h_* \in \arg \inf_{h\in \cH} \expect{Z\sim \dist}{\ell_h(Z)}$ is a risk minimizer within the closer of $\cH$.
	\end{definition}
	The Bernstein condition \cite{audibert2004pac,bartlett2006convexity,bartlett2006empirical,erven2015fast,koolen2016combining} essentially characterizes the ``easiness'' of the learning problem; it implies that the variance in the excess loss random variable $\ell_h(Z)- \ell_{h_*}(Z)$ gets smaller the closer the risk of hypothesis $h\in \cH$ gets to that of the risk minimizer $h_*$.
	For bounded loss functions, the BC with $\beta=0$ always holds. The BC with $\beta=1$ (the ``easiest'' learning setting) is also known as the {\em Massart noise condition} \cite{massart2006risk}; it holds in our experiment with synthetic data in Section \ref{sec:experiments}, and also, \emph{e.g.}, whenever $\cH$ is convex and $h \mapsto \ell_h(z)$ is exp-concave, for all $z\in \cZ$ \cite{erven2015fast,mehta2017fast}. For more examples of learning settings where a BC holds see \cite[Section 3]{koolen2016combining}.
	
	Our aim in this section is to give an upper-bound on the infimum term involving $V_n$ in \eqref{eq:highprob}, under a BC, in terms of the complexity $\comp_n$ and the excess risks $\risk(P_n)$, $\risk(Q(Z_{>m}))$, and $\risk(Q(Z_{\leq m}))$, where for a distribution $P \in \mathcal{P}(\cH)$, the excess risk is defined by \begin{align} \risk(P) &\coloneqq \expect{h\sim P}{\expect{Z\sim \dist}{\ell_h(Z)}} -\expect{Z\sim \dist}{\ell_{h_*}(Z)}.\nonumber \end{align}
	In the next theorem, we denote $Q_{\leq m}\coloneqq Q(Z_{\leq m})$ and $Q_{>m}\coloneqq Q(Z_{>m})$, for $m\in[n]$. To simplify the presentation further (and for consistency with Section \ref{sec:experiments}), we assume that $Q$ is chosen such that 
	\begin{align}
	Q(Z_{>i})=Q_{>m},\ \text{for $1\leq i\leq m$}, \quad \text{and} \quad Q(Z_{<j}) = Q_{\leq m},\ \text{for $m < j \leq n$}. \label{eq:qchoice}\end{align}
	\begin{theorem}
		\label{thm:vnbound}
		Let $\cG$ and $\pi$ be as in \eqref{eq:grid}, $\delta\in]0,1[$, and $\varepsilon_{\delta,n}=2 \ln \frac{1}{\delta \cdot \pi(\eta)}=2 \ln \frac{|\cG|}{\delta}$, $\eta \in \cG$. If the $(\beta, B)$-Bernstein condition holds with $\beta\in [0,1]$ and $B>0$, then for any learning algorithms $P$ and $Q$ (with $Q$ satisfying \eqref{eq:qchoice}), there exists a $\eC>0$, such that $\forall n\geq 1$ and $m=n/2$, with probability at least $1-\delta$, \begin{align} \frac{1}{\eC} \cdot \inf_{\eta \in \cG } \left\{
		c_{\eta} \cdot  V_n
		+  \frac{\comp_n +\varepsilon_{\delta, n}}{\eta \cdot n} \right\}&  \leq \risk(P_n)+\risk(Q_{\leq m})+\risk(Q_{>m}) \nonumber  \\ &   +  \left(\frac{\comp_n+\varepsilon_{\delta,n}}{n} \right)^{\frac{1}{2-\beta}} + \frac{\comp_n+\varepsilon_{\delta,n}}{n}. \label{eq:boundoninf} \end{align}
	\end{theorem}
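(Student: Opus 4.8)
The plan is to (i) upper bound the random variable $V_n$ by a constant times $\risk(P_n)^\beta+\risk(Q_{\le m})^\beta+\risk(Q_{>m})^\beta$ plus a $(\comp_n+\varepsilon_{\delta,n})/n$ remainder, (ii) evaluate the infimum over $\cG$ so that the left side of \eqref{eq:boundoninf} becomes (a constant times) $\sqrt{V_n(\comp_n+\varepsilon_{\delta,n})/n}+(\comp_n+\varepsilon_{\delta,n})/n$, and (iii) finish with Young's inequality, which turns each $\sqrt{\risk(\cdot)^\beta(\comp_n+\varepsilon_{\delta,n})/n}$ into $\risk(\cdot)+((\comp_n+\varepsilon_{\delta,n})/n)^{1/(2-\beta)}$.

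For step (i), write $\ell_R(z):=\expect{h'\sim R}{\ell_{h'}(z)}$. Inserting $(a-b)^2\le 2(a-c)^2+2(b-c)^2$ with $c=\ell_{h_*}(z)$ into the definition of $V_n$ (using the choice \eqref{eq:qchoice}), and then Jensen's inequality $(\ell_R(z)-\ell_{h_*}(z))^2\le\expect{h'\sim R}{(\ell_{h'}(z)-\ell_{h_*}(z))^2}$, gives
\[
V_n\le \tfrac{2}{n}\,\widehat U_n(P_n)+\tfrac{2}{n}\Big(\textstyle\sum_{i=1}^{m}\expect{h'\sim Q_{>m}}{(\ell_{h'}(Z_i)-\ell_{h_*}(Z_i))^2}+\sum_{j=m+1}^{n}\expect{h'\sim Q_{\le m}}{(\ell_{h'}(Z_j)-\ell_{h_*}(Z_j))^2}\Big),
\]
where $\widehat U_n(R):=\expect{h\sim R}{\sum_{i=1}^{n}(\ell_h(Z_i)-\ell_{h_*}(Z_i))^2}$. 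The Bernstein condition reads $\Exp_{Z\sim\dist}[(\ell_h(Z)-\ell_{h_*}(Z))^2]\le B\,(L(h)-L(h_*))^\beta$, so averaging over any $R$ and using concavity of $t\mapsto t^\beta$ yields $\expect{h\sim R}{\Exp_{Z}[(\ell_h(Z)-\ell_{h_*}(Z))^2]}\le B\,\risk(R)^\beta$. For the two $Q$-sums, condition on $Z_{>m}$ (resp.\ $Z_{\le m}$): $Q_{>m}$ and $h_*$ are then fixed, the summands are i.i.d.\ functions of single $Z_i$'s lying in $[0,b^2]$, and any $[0,b^2]$-valued variable has variance at most $b^2$ times its mean; Bernstein's inequality plus AM--GM thus gives, with high probability, $\frac1m\sum_{i\le m}\expect{h'\sim Q_{>m}}{(\ell_{h'}(Z_i)-\ell_{h_*}(Z_i))^2}\le \tfrac32 B\,\risk(Q_{>m})^\beta+O(b^2\ln(1/\delta)/m)$, and symmetrically for the other half. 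Since $P_n$ depends on the entire sample, $\widehat U_n(P_n)$ needs a PAC-Bayes argument: split it into its first- and second-half parts and apply, conditionally on $Z_{>m}$ (resp.\ $Z_{\le m}$), a Bernstein-type PAC-Bayes bound for the bounded auxiliary loss $(h,z)\mapsto(\ell_h(z)-\ell_{h_*}(z))^2$ with the \emph{informed} prior $P(Z_{>m})$ (resp.\ $P(Z_{\le m})$) — exactly the device behind Lemma~\ref{lem:fwdbwdmaurer}, and again exploiting that this loss's variance is controlled by its mean. The two KL terms add up to precisely $\comp_n$ (cf.\ \eqref{eq:thecomp}), so $\widehat U_n(P_n)/n\le c_1\expect{h\sim P_n}{\Exp_Z[(\ell_h(Z)-\ell_{h_*}(Z))^2]}+c_2 b^2(\comp_n+\varepsilon_{\delta,n})/n\le c_1 B\,\risk(P_n)^\beta+c_2 b^2(\comp_n+\varepsilon_{\delta,n})/n$ for universal $c_1,c_2$. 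A union bound over these $O(1)$ events (absorbing $\ln(1/\delta)$ into $\varepsilon_{\delta,n}$, which already contains $2\ln(1/\delta)$) yields, with probability at least $1-\delta$,
\[
V_n\ \le\ \eC_0\big(\risk(P_n)^\beta+\risk(Q_{\le m})^\beta+\risk(Q_{>m})^\beta\big)+\eC_0\,\tfrac{\comp_n+\varepsilon_{\delta,n}}{n},
\]
for a constant $\eC_0$ depending only on $b$ and $B$.

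For steps (ii)--(iii), on $\cG$ we have $c_\eta\le \eta/2+11b\eta^2/20\le\eta$, so the infimand in \eqref{eq:boundoninf} is at most $\eta V_n+(\comp_n+\varepsilon_{\delta,n})/(\eta n)$; $\cG$ is geometric with ratio $2$, and the definition \eqref{eq:grid} (together with $\varepsilon_{\delta,n}\ge 2\ln(1/\delta)$ and $V_n\le b^2$) ensures the unconstrained minimizer $\sqrt{(\comp_n+\varepsilon_{\delta,n})/(nV_n)}$ lies at or above $\min\cG$, so choosing the nearest grid point — or $\max\cG=1/(2b)$ when the minimizer overshoots, a case in which $V_n\le 4b^2(\comp_n+\varepsilon_{\delta,n})/n$ and $c_{1/(2b)}V_n=O(b(\comp_n+\varepsilon_{\delta,n})/n)$ — gives
\[
\inf_{\eta\in\cG}\Big\{c_\eta V_n+\tfrac{\comp_n+\varepsilon_{\delta,n}}{\eta n}\Big\}\ \le\ \eC_1\sqrt{\tfrac{V_n(\comp_n+\varepsilon_{\delta,n})}{n}}+\eC_1\,\tfrac{\comp_n+\varepsilon_{\delta,n}}{n}.
\]
Substituting the bound on $V_n$, using $\sqrt{x_1+\dots+x_4}\le\sqrt{x_1}+\dots+\sqrt{x_4}$ and $\sqrt{\tfrac{\ln(1/\delta)}{n}\cdot\tfrac{\comp_n+\varepsilon_{\delta,n}}{n}}\le\tfrac{\comp_n+\varepsilon_{\delta,n}}{n}$, and applying Young's inequality $uv\le u^p/p+v^q/q$ with $u=\risk(\cdot)^{\beta/2}$, $v=((\comp_n+\varepsilon_{\delta,n})/n)^{1/2}$, $p=2/\beta$, $q=2/(2-\beta)$ (so $u^p=\risk(\cdot)$ and $v^q=((\comp_n+\varepsilon_{\delta,n})/n)^{1/(2-\beta)}$) to each of the three risk terms, one obtains \eqref{eq:boundoninf} with $\eC$ depending only on $b$ and $B$ (the case $\beta=0$ being immediate, since then $((\comp_n+\varepsilon_{\delta,n})/n)^{1/(2-\beta)}=\sqrt{(\comp_n+\varepsilon_{\delta,n})/n}$ already absorbs the $\sqrt{V_n\cdot(\comp_n+\varepsilon_{\delta,n})/n}$ term).

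The crux is the treatment of $\widehat U_n(P_n)$ in step (i): because $P_n$ depends on the whole sample one cannot bound $\widehat U_n(P_n)$ by plain concentration, and because $P$ is allowed to employ informed priors a single PAC-Bayes bound with prior $P_0$ would contribute $\KL(P_n\|P_0)$, which need not be controlled by $\comp_n$. The fix — splitting into the two halves and invoking a conditional PAC-Bayes bound with the data-dependent priors $P(Z_{>m})$ and $P(Z_{\le m})$, exactly as in Lemma~\ref{lem:fwdbwdmaurer}, being careful that the ``prior'' is measurable with respect to the conditioning half only — is what makes the complexity come out as $\comp_n$; the remaining ingredients (the conditional Bernstein step for the $Q$-sums, the geometric-grid optimization, and Young's inequality) are routine once this decomposition is set up.
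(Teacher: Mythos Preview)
Your proposal is correct and arrives at the same conclusion, but the route genuinely differs from the paper's.

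Both arguments begin identically: insert $\ell_{h_*}$ via $(a-d)^2\le 2(a-c)^2+2(d-c)^2$, split into the $P_n$- and $Q$-pieces, and use the informed priors $P(Z_{>m}),P(Z_{\le m})$ so that the complexity emerges as $\comp_n$. From there the two proofs diverge. The paper never bounds $V_n$ in isolation; it works entirely in the ESI calculus of Section~\ref{sec:esi}. The key device is Lemma~\ref{lem:koolen1} (combined with Lemmas~\ref{lem:koolen2}--\ref{lem:cesa}), which for each single sample gives an ESI of the shape $\alpha_\eta X_h(Z_i)^2 \stochleq_\eta X_h(Z_i)+O\big((2B\eta)^{1/(1-\beta)}\big)$. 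Chaining these (Proposition~\ref{prop:Trans}), applying ESI--PAC-Bayes (Proposition~\ref{prop:donsker}) with the informed priors, and then Propositions~\ref{prop:randeta}/\ref{prop:drop} yields directly $c_\eta V_n \le \eA\big(\risk(P_n)+\risk(Q_{\le m})+\risk(Q_{>m})\big)+O(\eta^{1/(1-\beta)})+\eA\cdot\comp_n/(\eta n)+\ldots$, with the excess risks appearing \emph{linearly} thanks to an inline use of $z^\beta=a_\beta\inf_\nu\{z/\nu+\nu^{\beta/(1-\beta)}\}$; adding $(\comp_n+\varepsilon_{\delta,n})/(\eta n)$ to both sides and optimizing $\eta$ then finishes.

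You instead first prove a stand-alone high-probability bound $V_n\le \eC_0\big(\risk(P_n)^\beta+\risk(Q_{\le m})^\beta+\risk(Q_{>m})^\beta\big)+\eC_0(\comp_n+\varepsilon_{\delta,n})/n$ by applying the population Bernstein condition to $\Exp_Z[(\ell_h-\ell_{h_*})^2]$ together with (conditional) concentration and a reverse PAC-Bayes--Bernstein bound for the auxiliary non-negative loss $(\ell_h-\ell_{h_*})^2$; only after evaluating the grid-infimum do you invoke Young's inequality to trade $\risk(\cdot)^\beta$ for $\risk(\cdot)+(\cdot)^{1/(2-\beta)}$. The two uses of convex conjugacy (the paper's inline $z^\beta$ identity versus your terminal Young step) are the same trick applied at different stages. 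What the paper's route buys is self-containment---every tool is already in Section~\ref{sec:esi} and Appendix~\ref{supp:excessrisk}---and a single ESI chain that avoids a separate union bound. What your route buys is transparency: it isolates exactly where the Bernstein condition enters (only through $\Exp_Z[(\ell_h-\ell_{h_*})^2]\le B\,\risk^\beta$) and bypasses the cumulant-generating-function machinery, at the cost of invoking a reverse Bernstein PAC-Bayes inequality (empirical $\le c_1\cdot$population $+\,c_2\cdot\KL/n$) for the squared excess loss---standard, but not stated in the paper.
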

	In addition to the ``ESI'' tools provided in Section \ref{sec:esi} and Lemma \ref{lem:bernsand}, the proof of Theorem \ref{thm:vnbound}, presented in Appendix \ref{supp:excessrisk}, also uses an ``ESI version'' of the Bernstein condition due to \citep{koolen2016combining}. 
	
	First note that the only terms in our main bound (\ref{eq:highprob}), other than the infimum on the LHS of \eqref{eq:boundoninf}, are the empirical error $L_n(P_n)$ and a $\tilde{O}(1/\sqrt{n})$-complexity-free term which is typically smaller than $\sqrt{\KL(P_n\pipes P_0)/n}$ (\emph{e.g.} when the dimension of $\cH$ is large enough). The term $\sqrt{\KL(P_n\pipes P_0)/n}$ is often the dominating one in other PAC-Bayesian bounds when $\lim\inf_{n\to \infty} L_n(P_n) >0$.
	
	Now consider the remaining term in our main bound, which matches the infimum term on the LHS of \eqref{eq:boundoninf}, and let us choose algorithm $P$ as per Remark \ref{rmk:prior}, so that $\comp_n = 2 \KL(P_n\pipes P_0)$. Suppose that, with high probability (w.h.p.), $\KL(P_n\pipes P_0)/n$ converges to 0 for $n\to \infty$ (otherwise no PAC-Bayesian bound would converge to 0), then $(\comp_n/n)^{1/(2-\beta)} + \comp_n/n$---essentially the sum of the last two terms on the RHS of \eqref{eq:boundoninf}---converges to 0 at a faster rate than $\sqrt{\KL(P_n\pipes P_0)/n}$ w.h.p. for $\beta>0$, and at equal rate for $\beta=0$. Thus, in light of Theorem \ref{thm:vnbound}, to argue that our bound can be better than others (still when $\lim\inf_{n\to \infty} L_n(P_n) >0$), it remains to show that there exist algorithms $P$ and $Q$ for which the sum of the excess risks on the RHS of \eqref{eq:boundoninf} is smaller than $\sqrt{\KL(P_n\pipes P_0)/n}$.
	
	One choice of estimator with small excess risk is the \emph{Empirical Risk Minimizer} (\ERM{}). When $m=n/2$, if one chooses $Q$ such that it outputs a Dirac around the \ERM{} on a given sample, then under a BC with exponent $\beta$ and for ``parametric'' $\cH$ (such as the $d$-dimensional linear classifiers in Sec.~\ref{sec:experiments}), $\risk(Q_{\leq m})$ and $\risk(Q_{> m})$ are of order $ \tilde{O}\left(n^{-1/(2-\beta)}\right)$ w.h.p. \cite{audibert2004pac,grunwald2016fast}. However, setting $P_n \equiv \delta(\ERM(Z_{\leq n}))$ is not allowed, since otherwise $\KL(P_n\pipes P_0) =\infty$.
	Instead one can choose $P_n$ to be the generalized-Bayes/Gibbs posterior. In this case too, under a BC with exponent $\beta$ and for parametric $\cH$, the excess risk is of order $\tilde{O}\left(n^{-1/(2-\beta)}\right)$ w.h.p. for clever choices of prior $P_0$ \cite{audibert2004pac,grunwald2016fast}.
	
	\section{Detailed Analysis}
	\label{sec:esi}
	We start this section by presenting the convenient ESI notation and use it to present our main technical Lemma \ref{lem:bernsand} (proofs of the ESI results are in Appendix \ref{supp:proofsesi}). We then continue with a  proof of Theorem~\ref{thm:main}. 
	\label{sec:proofsec}
	\begin{definition}{\bf  [ESI (\emph{Exponential Stochastic Inequality, pronounce as:easy}) \citealp{grunwald2016fast,koolen2016combining}]}
		\label{def:esi} Let $\eta>0$, and $X$, $Y$ be any two random variables with joint distribution $\dist$. We define
		\begin{align}\label{eq:esi}
		X \stochleq^\dist_{\eta} \  Y  \ \ \iff \ \ X -Y\stochleq^\dist_{\eta} \  0  \ \ \iff \ \ \Exp_{(X, Y) \sim \dist} \left[e^{\eta (X- Y)} \right] \leq 1.
		\end{align}
	\end{definition}
	Definition \ref{def:esi} can be extended to the case where $\eta=\hat{\eta}$ is also a random variable, in which case the expectation in \eqref{eq:esi} needs to be replaced by the expectation over the joint distribution of ($X$, $Y$, $\hat\eta$). When no ambiguity can arise, we omit $\dist$ from the ESI notation. Besides simplifying notation, ESIs are useful in that they  simultaneously capture ``with high probability'' and ``in expectation'' results: 
	\begin{proposition}{\bf  [ESI Implications]} \label{prop:drop} 
		For fixed $\eta > 0$, if $X \stochleq_{\eta} Y$ then $\E [X] \leq \E[Y]$. 
		For both fixed and random $\hat\eta$, if $X \stochleq_{\hat\eta} Y$, then $\forall \delta \in]0,1[$, $X\leq Y + \frac{\ln \frac{1}{\delta}}{\hat \eta}$, with probability at least $1-\delta$.
	\end{proposition}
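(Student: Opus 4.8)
The plan is to prove the two implications separately, both directly from the defining inequality $\Exp_{(X,Y)\sim \dist}[e^{\eta(X-Y)}] \leq 1$ in Definition \ref{def:esi}.

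For the first (``in expectation'') claim, assume $\eta>0$ is fixed and $X \stochleq_\eta Y$, so that $\Exp[e^{\eta(X-Y)}] \leq 1$. First I would apply Jensen's inequality to the convex function $t \mapsto e^{\eta t}$: this gives $e^{\eta \Exp[X-Y]} \leq \Exp[e^{\eta(X-Y)}] \leq 1$. Taking logarithms and dividing by $\eta>0$ yields $\Exp[X-Y]\leq 0$, i.e. $\Exp[X]\leq\Exp[Y]$ (assuming the expectations exist; this is implicit in the statement).

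For the second (``with high probability'') claim, I would use the standard Chernoff/Markov argument. Fix $\delta\in\,]0,1[$. In the case that $\eta$ is a fixed constant, Markov's inequality applied to the nonnegative random variable $e^{\eta(X-Y)}$ gives, for any $t>0$, $\dist\!\left(e^{\eta(X-Y)} \geq 1/t\right) \leq t\cdot\Exp[e^{\eta(X-Y)}] \leq t$. Setting $t=\delta$ and rearranging, with probability at least $1-\delta$ we have $e^{\eta(X-Y)} < 1/\delta$, hence $\eta(X-Y) < \ln(1/\delta)$, hence $X \leq Y + \frac{\ln(1/\delta)}{\eta}$. For the case of random $\hat\eta$, the subtlety is that one cannot divide through by $\hat\eta$ inside the probability statement quite so glibly; here I would work with the random variable $W\coloneqq e^{\hat\eta(X-Y)}$, which by the extended Definition \ref{def:esi} satisfies $\Exp[W]\leq 1$ where the expectation is over the joint law of $(X,Y,\hat\eta)$. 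Markov's inequality then gives $\dist(W \geq 1/\delta)\leq \delta$, so on the complementary event (probability at least $1-\delta$) we have $e^{\hat\eta(X-Y)} < 1/\delta$; since $\hat\eta>0$ on this event we may take logarithms and divide to conclude $X \leq Y + \frac{\ln(1/\delta)}{\hat\eta}$, which is exactly the claimed inequality (now with the random $\hat\eta$ in the denominator).

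The only mild obstacle is bookkeeping in the random-$\hat\eta$ case: one must be careful that the inequality $e^{\hat\eta(X-Y)}<1/\delta$ is a pointwise statement on a high-probability event, after which the passage to $X\leq Y+\ln(1/\delta)/\hat\eta$ is a deterministic manipulation valid because $\hat\eta>0$. No integrability assumptions beyond those already implicit are needed, and no result beyond Jensen's and Markov's inequalities is used.
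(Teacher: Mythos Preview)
Your proof is correct and follows essentially the same approach as the paper's own: Jensen's inequality for the in-expectation claim, and Markov's inequality applied to $e^{\hat\eta(X-Y)}$ for the high-probability claim (with the random-$\hat\eta$ case handled via the joint expectation). The paper's version is simply terser; your added care about the pointwise manipulation after Markov in the random-$\hat\eta$ case is fine but not strictly needed beyond what the paper says.
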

	In the next proposition, we present two results concerning transitivity and additive properties of ESI: 
	\begin{proposition}{\bf [ESI Transitivity and Chain Rule]}\label{prop:Trans} 
		(a) Let $Z_1, \dots, Z_n$ be any random variables on $\mathcal{Z}$ (not necessarily independent). If for some $(\gamma_i)_{i \in[n]} \in ]0,+\infty[^n$, $Z_{i} \stochleq_{\gamma_i} 0$, for all $i\in[n]$, then 
		\begin{align}
		\sum_{i=1}^n Z_{i} \stochleq_{\nu_n} 0, \quad \text{where $\nu_n \coloneqq  \left(\sum_{i=1}^n \frac{1}{\gamma_i}\right)^{-1}$} \text{\ \ (so if $\forall i \in [n], \gamma_i = \gamma>0$ then $\nu_n = \gamma/n$)}. \label{eq:wtrn}
		\end{align}
		(b) Suppose now that $Z_1, \ldots, Z_n$ are i.i.d. and let  $X:\cZ \times \bigcup_{i=1}^n \cZ^i \rightarrow \reals$ be any real-valued function. If for some $\eta>0$, $X(Z_i; z_{<i}) \stochleq_\eta 0$, for all $i \in [n]$ and all $z_{<i} \in \cZ^{i-1}$, then $\sum_{i=1}^n X(Z_i; Z_{<i}) \stochleq_{\eta} 0$.
	\end{proposition}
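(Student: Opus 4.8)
The plan is to prove (a) by a single application of Jensen's inequality and (b) by iterated conditioning (a tower-rule telescoping argument); both are short once set up correctly.

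For (a), recall that $X \stochleq_\gamma 0$ is by definition the statement $\Exp[e^{\gamma X}] \leq 1$. First I would introduce the weights $w_i \coloneqq \nu_n / \gamma_i$. Since $\gamma_i > 0$ and $\nu_n = \left(\sum_{j=1}^n 1/\gamma_j\right)^{-1}$, these satisfy $w_i > 0$ and $\sum_{i=1}^n w_i = \nu_n \sum_{i=1}^n 1/\gamma_i = 1$, so $(w_i)$ is a convex combination. Writing $\nu_n \sum_{i=1}^n Z_i = \sum_{i=1}^n w_i (\gamma_i Z_i)$ and using convexity of $t \mapsto e^t$ gives
\[ e^{\nu_n \sum_{i=1}^n Z_i} \;=\; e^{\sum_{i=1}^n w_i \gamma_i Z_i} \;\leq\; \sum_{i=1}^n w_i \, e^{\gamma_i Z_i}. \]
Taking expectations and applying $\Exp[e^{\gamma_i Z_i}] \leq 1$ termwise yields $\Exp\bigl[e^{\nu_n \sum_i Z_i}\bigr] \leq \sum_i w_i = 1$, i.e.\ $\sum_{i=1}^n Z_i \stochleq_{\nu_n} 0$. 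The parenthetical special case $\gamma_i \equiv \gamma$ is immediate since then $\sum_i 1/\gamma_i = n/\gamma$, so $\nu_n = \gamma/n$.

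For (b), set $S_k \coloneqq \sum_{i=1}^k X(Z_i; Z_{<i})$ and prove $\Exp[e^{\eta S_n}] \leq 1$ by induction on $n$. The base case $n=1$ is exactly the hypothesis $X(Z_1;\emptyset) \stochleq_\eta 0$ (using the convention $Z_{<1} = \emptyset$). For the inductive step, the key identity is the tower rule
\[ \Exp\bigl[e^{\eta S_n}\bigr] \;=\; \Exp\!\left[ e^{\eta S_{n-1}} \cdot \Exp\!\left[ e^{\eta X(Z_n; Z_{<n})} \,\middle|\, Z_{<n} \right] \right], \]
which is valid because $S_{n-1}$ is a function of $Z_{<n}$ alone. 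Since $Z_1,\dots,Z_n$ are i.i.d., conditionally on $Z_{<n} = z_{<n}$ the variable $Z_n$ has the same law as $Z_1$ and is independent of $z_{<n}$; hence the inner conditional expectation equals $\Exp_{Z_n}\bigl[e^{\eta X(Z_n; z_{<n})}\bigr]$, which is $\leq 1$ by the hypothesis applied to the fixed past $z_{<n}$. Therefore $\Exp[e^{\eta S_n}] \leq \Exp[e^{\eta S_{n-1}}]$, and the induction closes, giving $\sum_{i=1}^n X(Z_i; Z_{<i}) \stochleq_\eta 0$.

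I expect no step to be a genuine obstacle: (a) is one Jensen step and (b) is a standard supermartingale-type telescoping. The only point that deserves a line of care is in (b): one must invoke independence precisely to identify the conditional law of $Z_n$ given $Z_{<n}$ with the unconditional law of $Z_1$, so that the assumed ESI---which holds uniformly over every deterministic past $z_{<n} \in \cZ^{n-1}$---can legitimately be applied at the random past $Z_{<n}$. Measurability of $(z; z_{<i}) \mapsto X(z; z_{<i})$ and existence of the relevant exponential moments are implicit in the hypotheses and raise no difficulty.
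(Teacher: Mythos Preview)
Your proposal is correct. For part (a), both you and the paper rely on Jensen's inequality applied to the exponential, but you do it in a single step with the convex weights $w_i = \nu_n/\gamma_i$, whereas the paper argues by induction on $j$, at each step combining $\sum_{i\le j} Z_i$ with $Z_{j+1}$ via a two-term Jensen inequality and the recursion $\nu_{j+1} = \nu_j\gamma_{j+1}/(\nu_j+\gamma_{j+1})$. Your one-shot argument is cleaner and avoids the induction bookkeeping; the paper's inductive version has the minor advantage of making the $n=2$ ``chaining'' case (which is how the proposition is actually invoked throughout the paper) explicit. For part (b), the paper does not give a proof at all and simply cites \cite{koolen2016combining}; your tower-rule argument is the standard one and is exactly what that reference does in the i.i.d.\ case, so here you are strictly more self-contained.
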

	We now give a basic PAC-Bayesian result for the ESI context: \begin{proposition}{\bf [ESI PAC-Bayes]} \label{prop:donsker}
		Fix $\eta > 0$ and let $\{Y_h: h \in \cH\}$ be any family of random variables such that for all $h \in \cH$, $Y_h \stochleq_{\eta} 0$. Let $\dol_0$ be any distribution on $\cH$ and let $\dol: \bigcup_{i=1}^n \cZ^{i} \rightarrow \mathcal{P}(\cH)$ be a learning algorithm.  We have: 
		\begin{align} \E_{h \sim \dol_n}[Y_{h}] \stochleq_{\eta} \frac{\KL(\dol_n \| \dol_0)}{\eta}, \quad \text{where $\dol_n \coloneqq \dol(Z_{\leq n})$}. \label{eq:PACBayes} \end{align} 
	\end{proposition}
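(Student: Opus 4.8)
The plan is to unfold the ESI definition and verify the required moment condition directly. By Definition~\ref{def:esi}, the claim $\E_{h\sim\dol_n}[Y_h] \stochleq_\eta \KL(\dol_n\|\dol_0)/\eta$ is equivalent to
\[
\E_{Z_{\leq n}}\!\left[\exp\!\left(\eta\,\E_{h\sim\dol_n}[Y_h] - \KL(\dol_n\|\dol_0)\right)\right] \leq 1,
\]
and it is crucial that $\dol_n = \dol(Z_{\leq n})$ is a data-dependent (random) distribution while $\dol_0$ is fixed in advance; this asymmetry is exactly what the argument exploits.

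First I would fix a realization of $Z_{\leq n}$ and apply the Donsker--Varadhan change-of-measure inequality to the function $h \mapsto \eta\,Y_h$: for any fixed $\dol_0$ and any $\dol_n$,
\[
\eta\,\E_{h\sim\dol_n}[Y_h] - \KL(\dol_n\|\dol_0) \ \leq\ \ln \E_{h\sim\dol_0}\!\left[e^{\eta Y_h}\right].
\]
This is trivial when $\KL(\dol_n\|\dol_0)=\infty$; otherwise $\dol_n\ll\dol_0$, and it follows from Jensen's inequality applied to the convex function $\exp$ after rewriting $\E_{h\sim\dol_0}[e^{\eta Y_h}]$ as an expectation under $\dol_n$ via the density $\mathrm{d}\dol_n/\mathrm{d}\dol_0$. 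Exponentiating gives the pointwise-in-data bound $\exp(\eta\,\E_{h\sim\dol_n}[Y_h] - \KL(\dol_n\|\dol_0)) \leq \E_{h\sim\dol_0}[e^{\eta Y_h}]$.

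Then I would take $\E_{Z_{\leq n}}$ of both sides; since the integrand on the right is nonnegative and $\dol_0$ does not depend on the data, Tonelli's theorem lets me swap the two expectations to get $\E_{h\sim\dol_0}\big[\,\E_{Z_{\leq n}}[e^{\eta Y_h}]\,\big]$. For each fixed $h$, the hypothesis $Y_h\stochleq_\eta 0$ is by definition the statement $\E_{Z_{\leq n}}[e^{\eta Y_h}]\leq 1$, so the inner expectation is at most $1$ and integrating the constant against $\dol_0$ leaves at most $1$. Chaining the displays yields the displayed moment inequality, hence the ESI. The only step that genuinely requires care --- and the single point where the argument would break --- is the interchange of expectations, which relies on $\dol_0$ being chosen before $Z_{\leq n}$ is seen (if instead the ``prior'' is allowed to depend on a disjoint part of the sample, as is used elsewhere in the paper, one first conditions on that part and applies the above to the conditional law); measurability of $(h, Z_{\leq n}) \mapsto Y_h$ is taken for granted throughout so that every integral above is well defined.
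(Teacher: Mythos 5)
Your proof is correct and follows essentially the same route as the paper's: the key step is the change-of-measure inequality (which the paper derives inline via $\KL(\dol_n\|\dol_0)=\E_{h\sim\dol_n}[\ln(d\dol_n/d\dol_0)]$ plus Jensen applied to $\exp$, rather than citing Donsker--Varadhan by name), followed by taking expectation over $Z_{\leq n}$, interchanging via Tonelli because $\dol_0$ is data-independent, and invoking $Y_h\stochleq_\eta 0$ for each fixed $h$. Your remarks about the $\KL=\infty$ case, the role of $\dol_0$ being fixed in advance, and the conditioning needed for informed priors are all accurate and match how the result is actually used in the proof of Theorem~\ref{thm:main}.
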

	In many applications (especially for our main result) it is desirable to work with a random (\emph{i.e.} data-dependent) $\eta$ in the ESI inequalities; one can tune $\eta$ after seeing the data. 
	\begin{proposition}{\bf  [ESI from fixed to random $\bm\eta$]}
		\label{prop:randeta}
		Let $\cG$ be a countable subset of $]0,+\infty[$ and let $\pi$ be a prior distribution over $\cG$. Given a countable collection $\{Y_{\eta} : \eta \in \cG \}$ of random variables satisfying $Y_{\eta} \stochleq_{\eta} 0$, for all fixed $\eta \in \cG$, we have, for arbitrary estimator $\estimate{\eta}$ with support on $\cG$,
		\begin{align}
		\label{eq:randesi}
		Y_{\estimate\eta} \stochleq_{\estimate\eta} \frac{- \ln \pi(\estimate\eta)}{\estimate\eta}.
		\end{align}
	\end{proposition}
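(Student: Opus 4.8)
The plan is to unwind the definition of ESI for a random exponent and reduce the claim to a single scalar expectation inequality. By Definition~\ref{def:esi} in its random-$\eta$ form, the asserted ESI $Y_{\estimate\eta} \stochleq_{\estimate\eta} -\ln\pi(\estimate\eta)/\estimate\eta$ is, by definition, equivalent to
\[
\Exp\!\left[\exp\!\left(\estimate\eta\Big(Y_{\estimate\eta} + \tfrac{\ln \pi(\estimate\eta)}{\estimate\eta}\Big)\right)\right] \;=\; \Exp\!\left[\pi(\estimate\eta)\, e^{\estimate\eta\, Y_{\estimate\eta}}\right] \;\le\; 1,
\]
where the expectation is taken over the joint law of $\big((Y_\eta)_{\eta\in\cG},\ \estimate\eta\big)$. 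So the whole argument boils down to bounding $\Exp[\pi(\estimate\eta)\, e^{\estimate\eta Y_{\estimate\eta}}]$ by $1$.

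The key step is a pointwise domination of the realized term by the full mixture: since $\estimate\eta$ takes values in $\cG$ and every summand $\pi(\eta) e^{\eta Y_\eta}$ is nonnegative, it holds surely that
\[
\pi(\estimate\eta)\, e^{\estimate\eta Y_{\estimate\eta}} \;\le\; \sum_{\eta\in\cG} \pi(\eta)\, e^{\eta Y_\eta}.
\]
Taking expectations and interchanging $\Exp$ with the (countable) sum — legitimate by Tonelli's theorem because all summands are nonnegative — yields $\Exp[\pi(\estimate\eta)\, e^{\estimate\eta Y_{\estimate\eta}}] \le \sum_{\eta\in\cG}\pi(\eta)\,\Exp[e^{\eta Y_\eta}]$. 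Now I invoke the hypothesis $Y_\eta \stochleq_\eta 0$ for each \emph{fixed} $\eta\in\cG$, which by Definition~\ref{def:esi} means precisely $\Exp[e^{\eta Y_\eta}]\le 1$; hence the right-hand side is at most $\sum_{\eta\in\cG}\pi(\eta) = 1$, which closes the proof.

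I expect the only genuinely delicate point to be the justification of the expectation/sum interchange, and here it is painless: nonnegativity lets one apply Tonelli with no integrability side conditions. It is also the reason the argument works for an \emph{arbitrary} (in particular data-dependent) estimator $\estimate\eta$ — no independence between $\estimate\eta$ and the family $(Y_\eta)_{\eta\in\cG}$ is used, only that $\estimate\eta$ is $\cG$-valued so that the single realized term is always one of the terms in the dominating sum. A minor bookkeeping remark worth including: one should phrase everything on a common probability space carrying $(Y_\eta)_{\eta\in\cG}$ together with $\estimate\eta$, so that ``$Y_\eta \stochleq_\eta 0$ for all fixed $\eta$'' and the random-$\eta$ ESI refer to the same measure; beyond that, no further measure-theoretic care is needed.
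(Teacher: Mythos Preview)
Your proof is correct and follows essentially the same approach as the paper's: bound the realized term $\pi(\estimate\eta)e^{\estimate\eta Y_{\estimate\eta}}$ pointwise by the full mixture $\sum_{\eta\in\cG}\pi(\eta)e^{\eta Y_\eta}$, take expectations, and use the hypothesis $Y_\eta\stochleq_\eta 0$ for each fixed $\eta$ together with $\sum_\eta\pi(\eta)=1$. Your write-up is more explicit about the Tonelli justification and the common probability space, but the argument is the same.
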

	\commentout{
		In our main result (Theorem \ref{thm:main}), we present a PAC-Bayesian bound which holds in expectation. To obtain such a result, we need a way to go from an ESI inequality with a random $\eta$ to an inequality which holds in expectation. The next proposition provides us with the means to do this:  
		\begin{proposition}{\bf [From ESI to \Exp]}
			\label{prop:esitoexp}
			Let $\cG$ be a countable subset of $\reals_{>0}$ such that $\sup \cG <\infty$ and $\inf \cG >0$. Given a collection of random variables $\{Y_\eta: \eta \in \cG \}$ and an estimator $\estimate\eta$ with support on $\cG$ satisfying $Y_{\estimate\eta} \stochleq_{\estimate\eta} 0$, we have
			\begin{equation}\label{eq:traina}
			\expect{}{Y_{\estimate\eta}} \leq \expect{}{
				\frac{1 +  \ln \frac{\sup \cG}{\inf\cG} }{\estimate\eta}
			}.
			\end{equation}
		\end{proposition}
	}
	The following key lemma, which is of independent interest, is central to our main result. %Part (a) is not really new but a slight variation of existing results found in \cite{howard2018uniform,fan2015exponential}.            
	\begin{lemma}{\bf \ [Key result: {\em un-\/}expected Bernstein]}
		\label{lem:bernsand}
		Let $X\sim \dist$ be a random variable bounded from {\em above}  by $b>0$ almost surely, and let  $\vartheta(u) := 
		(- \ln (1-u) - u)/u^2$. For all $0 < \eta < 1/b$, we have {\bf\em (a)}:
		\begin{align}\label{eq:blade}
		\E[X] - X  \stochleq^{\dist}_{\eta} \ c \cdot X^2,
		\quad \text{ for all $c \geq \eta \cdot  \vartheta(\eta b)$}.
		\end{align} 
		{\bf \em (b)}: The result is tight; for every $c < \eta \cdot \vartheta(\eta b)$, there exists a distribution $\dist$ so that (\ref{eq:blade}) does not hold.
	\end{lemma}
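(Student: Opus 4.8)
For part~(a), the plan is to unfold the ESI of Definition~\ref{def:esi}, reduce it to a single deterministic pointwise inequality, and then integrate. Unfolding, the asserted ESI $\E[X]-X \stochleq_\eta c\,X^2$ says exactly that $e^{\eta\E[X]}\,\E\big[e^{-\eta X-\eta c X^2}\big]\le 1$, equivalently $\E\big[e^{-\eta X-\eta c X^2}\big]\le e^{-\eta\E[X]}$. Since $x\mapsto -\eta x-\eta c x^2$ is a downward parabola the integrand is bounded, so the expectation on the left is always finite; and since $X\le b$ we have $\E[X]<\infty$ (if $\E[X]=-\infty$ the left-hand side of the ESI is $0$ and there is nothing to prove, so assume $\E[X]$ finite). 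The key step is the pointwise claim
\[
e^{-\eta x-\eta c x^2}\ \le\ 1-\eta x \qquad\text{for all }x\le b ,
\]
which is meaningful because $\eta x\le\eta b<1$ forces $1-\eta x>0$. Taking logs (both sides positive), this is $-\eta c x^2\le\ln(1-\eta x)+\eta x$, i.e.\ $\eta c x^2\ge -\ln(1-\eta x)-\eta x=(\eta x)^2\,\vartheta(\eta x)$, i.e.\ $c\ge\eta\,\vartheta(\eta x)$ (the case $x=0$ being trivial). So the pointwise bound holds whenever $c\ge\eta\,\vartheta(\eta b)\ge\eta\,\vartheta(\eta x)$, provided $\vartheta$ is nondecreasing.

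Thus the engine of the proof is the claim that $\vartheta(u)=(-\ln(1-u)-u)/u^2$, extended by $\vartheta(0)\coloneqq 1/2$, is nondecreasing on $(-\infty,1)$. I would prove this by setting $g(u)\coloneqq -\ln(1-u)-u$, so that $g(0)=g'(0)=0$ and $g''(u)=(1-u)^{-2}>0$, and writing $\vartheta'(u)=\phi(u)/u^3$ where $\phi(u)\coloneqq u\,g'(u)-2g(u)$. Then $\phi(0)=0$ and $\phi'(u)=u\,g''(u)-g'(u)=\tfrac{u}{(1-u)^2}-\tfrac{u}{1-u}=\tfrac{u^2}{(1-u)^2}\ge 0$, so $\phi$ is nondecreasing and vanishes at $0$; hence $\phi(u)$ and $u^3$ always share a sign, giving $\vartheta'(u)=\phi(u)/u^3\ge 0$.

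Given the pointwise inequality, part~(a) follows by integrating: $\E\big[e^{-\eta X-\eta c X^2}\big]\le\E[1-\eta X]=1-\eta\E[X]\le e^{-\eta\E[X]}$, using $1+t\le e^t$; this is precisely the unfolded ESI. For part~(b), fix $\eta\in]0,1/b[$ and $c<\eta\,\vartheta(\eta b)$, and take $\dist$ to be the two-point law putting mass $p\in]0,1[$ on $b$ and mass $1-p$ on $0$; then $\E[X]=pb$ and
\[
F(p)\coloneqq\E\big[e^{\eta(\E[X]-X-cX^2)}\big]=p\,e^{-\eta(1-p)b-\eta c b^2}+(1-p)\,e^{\eta p b},
\]
so that $F(0)=1$ and $F'(0)=e^{-\eta b-\eta c b^2}-(1-\eta b)$. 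By the same logarithm manipulation as in part~(a), $F'(0)>0$ is equivalent to $c<\eta\,\vartheta(\eta b)$, which holds by assumption; hence $F(p)>1$ for all sufficiently small $p>0$, and for any such $p$ the ESI $\E[X]-X\stochleq_\eta c\,X^2$ fails.

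The calculus involved — the monotonicity of $\vartheta$ and the $\ln$ and $1+t\le e^t$ manipulations — is routine. I expect the only genuine subtlety to be the bookkeeping for distributions of $X$ with a heavy left tail, which is harmless because $e^{-\eta X-\eta c X^2}$ is bounded above; and the one idea that must be spotted is the reduction of the ESI to the pointwise inequality $e^{-\eta x-\eta c x^2}\le 1-\eta x$, whose tight constant turns out to be exactly $\eta\,\vartheta(\eta b)$ by virtue of the monotonicity of $\vartheta$.
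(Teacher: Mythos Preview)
Your proof is correct and follows essentially the same route as the paper: the same pointwise inequality $e^{-\eta x-\eta c x^2}\le 1-\eta x$ integrated for part~(a), and the same two-point distribution on $\{0,b\}$ with a derivative computation at the degenerate endpoint for part~(b). The only difference is that for part~(a) the paper imports the pointwise inequality from \cite{fan2015exponential} (their Lemma~4.1), whereas you derive it from scratch via the monotonicity of $\vartheta$; and for part~(b) the paper embeds the argument in a slightly more general analysis (with an extra parameter $\beta$), while your version is the clean special case.
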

	Lemma \ref{lem:bernsand} is reminiscent of the following slight variation of Bernstein's inequality \citep{CesaBianchiL06}; let $X$ be any random variable bounded from {\em below} by $- b$, and let $\kappa(x) \coloneqq (e^{x} -x - 1)/x^2$. For all $\eta > 0$, we have 
	\begin{equation}\label{eq:ourbernstein}
	\E[X] - X \stochleq_{\eta}\  s \cdot  \E[X^2], \quad \text{ for all $s \geq \eta \cdot \kappa(\eta b)$.}
	\end{equation}
	Note that the un-expected Bernstein Lemma \ref{lem:bernsand}
	has the $X^2$ lifted out of the expectation. In Appendix
	\ref{supp:unexpVSexp}, we prove \eqref{eq:ourbernstein} and
	compare it to standard versions of Bernstein. We also compare
	\eqref{eq:blade} to the related but distinct empirical
	Bernstein inequality due to \cite[Theorem
	4]{maurer2009empirical}.  We now prove part (a) of Lemma
	\ref{lem:bernsand}, which follows easily from the proof of an existing result \cite{fan2015exponential,howard2018uniform}. Part (b) is novel; its proof is postponed to Appendix
	\ref{bernsandproof}.
	\begin{proof}[{\bf Proof of Lemma \ref{lem:bernsand}-Part (a)}]
		\cite{fan2015exponential} (see also \cite{howard2018uniform}) showed in the proof of their lemma 4.1 that 
		\begin{align}
		\exp(\lambda \xi -  \lambda^2 \vartheta(\lambda)\xi^2) \leq 1 + \lambda \xi, \quad \text{for all } \lambda \in[0,1[ \ \text{ and } \  \xi \geq -1. \label{eq:fan}
		\end{align}  
		Letting $\eta = \lambda/b$ and $\xi = -X/b$, \eqref{eq:fan} becomes, 
		\begin{align}
		\exp(-\eta X - \eta^2 \vartheta(\eta b) X^2) \leq 1 - \eta X, \quad \text{for all } \eta \in ]0,1/b[. \label{eq:fannew}
		\end{align} 
		Taking expectation on both sides of \eqref{eq:fannew} and using the fact that $1-\eta \E[X]\leq \exp(-\eta \E[X])$ on the RHS of the resulting inequality, leads to \eqref{eq:blade}.
	\end{proof}
	\begin{proof}[\rmbf{Proof of Theorem \ref{thm:main}}]
		Let $\eta\in]0,1/b[$ and $c_{\eta}\coloneqq \eta \cdot \vartheta(\eta b)$. For $1\leq  i\leq m < j \leq n$, define \begin{align*} X_{h}(Z_i;z_{> i})&\coloneqq \ell_{h}(Z_i) - \expect{\rv{h} \sim Q(z_{>i})}{\ell_{\rv{h}}(Z_i)},  \quad \text{for } z_{>i}\in\cZ^{n-i}, \\
		\tilde{X}_{h}(Z_j;z_{< j})&\coloneqq \ell_{h}(Z_j) - \expect{\rv{h} \sim Q(z_{<j})}{\ell_{\rv{h}}(Z_j)}, \quad \text{for } z_{<j} \in \cZ^{j-1}.
		\end{align*}
		Since $\ell$ is bounded from above by $b$, Lemma \ref{lem:bernsand} implies that for all $ h \in \cH$ and $1\leq  i\leq m < j \leq n$,
		\begin{align*}
		\forall z_{>i} \in \cZ^{n-i}, \quad 
		Y^{\eta}_{h}(Z_i;z_{>i}) &:= \expect{\rv{Z}_i \sim \dist}{X_{h}(\rv{Z}_i;z_{>i})} - X_{h}(Z_i;z_{>i})   - c_\eta \cdot  X_{h}(Z_i;z_{>i})^2 \stochleq_\eta 0, \\
		\forall z_{<j} \in \cZ^{j-1},  \quad 
		\tilde{Y}^{\eta}_{h}(Z_j;z_{<j}) &:= \mathbb{F}_{\rv{Z}_j \sim \dist}[\tilde{X}_{h}(\rv{Z}_j;z_{<j})] - \tilde{X}_{h}(Z_j;z_{<j})   - c_\eta \cdot  \tilde{X}_{h}(Z_j;z_{<j})^2 \stochleq_\eta 0,
		\end{align*}
		Since $Z_{1}, \dots , Z_{n}$ are i.i.d. we can chain the ESIs above using Proposition \ref{prop:Trans}-(b) to get:
		\begin{align}
		S \coloneqq  \sum_{i=1}^m Y^{\eta}_{h}(Z_i; Z_{>i})  \stochleq_\eta 0, \label{eq:untilde} 
		\ \ \ \ \ \tilde S \coloneqq \sum_{j=m+1}^n \tilde{Y}^{\eta}_{h}(Z_j;Z_{<j}) \stochleq_\eta 0.  
		\end{align}
		Applying PAC-Bayes (Proposition \ref{prop:donsker}) to $S$ and $\tilde S$ in \eqref{eq:untilde}
		with priors $\dol(Z_{>m})$ and $\dol(Z_{\leq m})$, respectively, and common posterior $\dol_n = \dol(Z_{\leq n})$ on $\cH$, we get, with ${\KL}_{> m} := \KL(\dol_n\| \dol(Z_{>m}))$ and ${\KL}_{\leq m} := \KL(\dol_n\| \dol(Z_{\leq m}))$:
		\begin{align*}
		\expect{h\sim \dol_n}{   \sum_{i=1}^m Y^{\eta}_{h}(Z_i;Z_{>i})} -\frac{\KL_{> m}}{\eta} \stochleq_\eta 0, 
		\ \ \ \ \  \expect{h \sim \dol_n}{  \sum_{j=m+1}^n \tilde{Y}^\eta_{h}(Z_j;Z_{<j})} - \frac{\KL_{\leq m}}{\eta}  \stochleq_\eta 0.  
		\end{align*}
		We now apply Proposition \ref{prop:Trans}-(a) to chain these two ESIs, which yields
		\begin{align}
		\expect{h \sim \dol_n}{\sum_{i=1}^m Y^{\eta}_{h}(Z_i;Z_{>i}) +\sum_{j=m+1}^n \tilde{Y}^{\eta}_{h}(Z_j;Z_{<j})}  \stochleq_{\frac{\eta}{2}} \frac{\KL(\dol_n \| \dol(Z_{>m})) + \KL(\dol_n\| \dol(Z_{\leq m}) )}{\eta}.  \label{eq:postpac}
		\end{align}
		With the prior $\pi$ on $\cG$, we have for any $\estimate\eta = \estimate\eta(Z_{\leq n}) \in \cG \subset   [1/\sqrt{nb^2},1/b[$ (see Proposition \ref{prop:randeta}),
		\begin{align}
		& \expect{h \sim \dol_n}{\sum_{i=1}^m Y^{\estimate\eta}_{h}(Z_i;Z_{>i}) +\sum_{j=m+1}^n \tilde{Y}^{\estimate\eta}_{h}(Z_j;Z_{<j})}     \stochleq_{\frac{\estimate\eta}{2}} \frac{\comp_n}{\estimate\eta} 
		-\frac{2 \ln \pi(\estimate\eta)}{\estimate\eta} 
		, \text{\ \emph{i.e.},\, } \nonumber \\
		& \hspace{0.7 cm} n\cdot (L(P_n) - L_n(P_n)) 
		\ \ \ \stochleq_{\frac{\estimate\eta}{2}}  n  \cdot c_{\estimate \eta} \cdot V_n +\frac{\comp_n + 2\ln \frac{1}{\pi(\estimate\eta)} }{\estimate\eta} \ \  + 
		\nonumber \\
		& \hspace{1.2 cm} 
		\left[\sum_{i=1}^m \left( \expect{\rv{Z}_i\sim \dist} {  \bar{\ell}_{Q_{>i}}(\rv{Z}_i)} - \bar{\ell}_{Q_{>i}}(Z_i)\right) + \sum_{j=m+1}^n \left( \expect{\rv{Z}_j\sim \dist} {\bar{\ell}_{Q_{<j}}(\rv{Z}_j)} - \bar{\ell}_{Q_{<j}}(Z_j) \right)\right], \label{eq:simplifiedESI}
		\end{align}
		where $\bar{\ell}_{Q_{>i}}(Z_i)\coloneqq \expect{{h} \sim Q(Z_{>i})}{\ell_{{h}}(Z_i)}$ and $\bar{\ell}_{Q_{<j}}(Z_j)\coloneqq \expect{{h} \sim Q(Z_{<j})}{\ell_{{h}}(Z_j)}$. 
		Let $U_n$ denote the quantity between the square brackets in \eqref{eq:simplifiedESI}. Using the un-expected Bernstein Lemma \ref{lem:bernsand}, together with Proposition \ref{eq:randesi}, we get for any estimator $\estimate\nu$ on $\cG$:
		\begin{align}
		U_n \stochleq_{\estimate\nu}\ c_{\estimate\nu}\cdot \left(\sum_{i=1}^m \expect{\rv{h}\sim Q(Z_{>i}) }{ \ell_{\rv{h}}(Z_i)^2} + \sum_{j=m+1}^n \expect{\rv{h}\sim Q(Z_{<j}) }{\ell_{\rv{h}}(Z_j)^2}\right) + \frac{ \ln \frac{1}{\pi(\estimate\nu)} }{\estimate\nu}.\label{eq:secondesi}
		\end{align}
		By chaining \eqref{eq:secondesi} and \eqref{eq:simplifiedESI} using Proposition \ref{prop:Trans}-(a) and dividing by $n$, we get:
		\begin{align} L(\dol_n) \stochleq_{\frac{n \estimate\eta \estimate \nu}{\estimate\eta + 2\estimate\nu}} \emperical(\dol_n) +  
		c_{\estimate\eta} \cdot  V_n
		+  \frac{\comp_n + 2 \ln \frac{1}{\pi(\estimate\eta)}}{\estimate\eta \cdot n} + c_{\estimate\nu} \cdot V'_n + \frac{\ln \frac{1}{\pi(\estimate\nu)}}{\estimate\nu \cdot n}. \label{eq:lastesi}
		\end{align}
		We now apply Proposition~\ref{prop:drop} to \eqref{eq:lastesi} to obtain the following inequality with probability at least $1-\delta$: 
		\begin{align} L(\dol_n) \leq \emperical(\dol_n) +  \left[
		c_{\estimate\eta} \cdot  V_n
		+  \frac{\comp_n +  2 \ln \frac{1}{\pi(\estimate\eta)\cdot \delta}}{\estimate\eta \cdot n} \right]+\left\{ c_{\estimate\nu} \cdot V'_n + \frac{\ln \frac{1}{\pi(\estimate\nu) \cdot \delta }}{\estimate\nu \cdot n }\right\}. \label{eq:protohighprob}
		\end{align}
		Inequality \eqref{eq:highprob} follows after picking $\estimate\nu$ and $\estimate\eta$ to be, respectively, estimators which achieve the infimum over the closer of $\cG$ of the quantities between braces and square brackets in \eqref{eq:protohighprob}.
	\end{proof}
	\section{Conclusion and Future Work}
	The main goal of this paper was to introduce a new PAC-Bayesian bound based on a new proof technique; we also theoretically motivated the bound in terms of a Bernstein condition. The simple experiments we provided are to be considered as a basic sanity check---in future work, we plan to put the bound to real practical use by applying it to deep nets in the style of, \emph{e.g.}, \cite{zhou2018}.
	
	\subsubsection*{Acknowledgments}
	An anonymous referee made some highly informed remarks on our paper, which led us to substantially rewrite the paper and made us understand our own work much better. Part of this work was performed while Zakaria Mhammedi was interning at the Centrum Wiskunde \& Informatica (CWI). This work was also supported by the Australian Research Council and Data61.
	
	\bibliographystyle{plain}
	\bibliography{master}
	
	\clearpage
\begin{appendix}

	\section{Informed Priors}
	\label{app:informedpriors}
	Any bound of the form of \eqref{eq:starter} with $\comp_n = \KL(P_n \pipes P_0)$ can be applied in a way as to replace this $\KL$ term by $\KL(P_n \pipes P(Z_{>m})) + \KL(P_n \pipes P(Z_{\leq m}))$, and thus making use of ``informed priors''. For this, it suffices to apply the bound on each part of the sample, \emph{i.e.} $Z_{>m}$ and $Z_{\leq m}$, and then combine the resulting bounds with a union bound. In fact, suppose that \eqref{eq:starter} holds with $R_n =L_n(P_n)$ and $\mathscr C=0$, and let $\delta \in]0,1[$. Applying the bound on the second part of the sample $Z_{>m}$ with prior $P(Z_{\leq  m})$ and posterior $P_n$, we get, with probability at least $1-\delta$, 
	\begin{align}
	L(P_n) - L_{>m}(P_n)  & \leq \mathscr P 
	\cdot  \sqrt{\frac{L_{>m}(P_n) \cdot \left( \KL(P_n\pipes P(Z_{\leq m})) + \varepsilon_{\delta, n-m} \right) }{n-m}} \nonumber   \\  &\quad   + \mathscr A \cdot \frac{\KL(P_n\pipes P(Z_{\leq m})) + \varepsilon_{\delta, n-m}}{n-m}, \label{eq:secondhalf}
	\end{align}
	where $L_{>m}(P_n)\coloneqq \frac{1}{n-m}  \sum_{j=m+1}^n \E_{h\sim P_n} [\ell_h(Z_j)]$. Similarly, applying the bound on the first half of the sample $Z_{\leq m}$ with prior $P(Z_{>m})$ and posterior $P_n$, we get, with probability at least $1-\delta$, 
	\begin{align}
	L(P_n) - L_{\leq m}(P_n) & \leq \mathscr P 
	\cdot  \sqrt{\frac{  L_{\leq m}(P_n)\cdot  \left(\KL(P_n\pipes P(Z_{> m})) + \varepsilon_{\delta, m}  \right) }{m}} \nonumber   \\  &\quad   + \mathscr A \cdot \frac{\KL(P_n\pipes P(Z_{> m})) + \varepsilon_{\delta, m}}{m}, \label{eq:firsthalf}
	\end{align}
	where $L_{\leq m}(P_n)\coloneqq \frac{1}{m}  \sum_{i=1}^m \E_{h\sim P_n} [\ell_h(Z_i)]$. Let $p\coloneqq  m/n$ and $q\coloneqq(n-m)/n$ (note that $p+q =1$). Applying a union bound and adding $q \times \eqref{eq:secondhalf}$ with $p\times  \eqref{eq:firsthalf}$, yields the bound
	\begin{align}
	L(P_n) -  L_n(P_n) & \leq \mathscr P 
	\cdot  \sqrt{\frac{2 L_n(P_n) \cdot \left( \KL(P_n\pipes P(Z_{> m}))+\KL(P_n\pipes P(Z_{\leq  m}))+ \bar \varepsilon_{\delta, n} \right) }{n}} \nonumber   \\  &\hspace{2cm}   + \mathscr A \cdot \frac{\KL(P_n\pipes P(Z_{> m}))+\KL(P_n\pipes P(Z_{\leq  m})) +  \bar \varepsilon_{\delta, n}}{n}, \label{eq:combined}
	\end{align}
	with probability at least $1-\delta$, where $ \bar \varepsilon_{\delta, n} \coloneqq \varepsilon_{\delta/2, m} + \varepsilon_{\delta/2, n-m}$. To get to \eqref{eq:combined}, we also used the fact that $\sqrt{x} +\sqrt{y} \leq \sqrt{2(x+y)}$, for all $x,y\in \reals_{\geq 0}$.
	
	The above trick does not directly apply to Maurer's bound in \eqref{eq:maurer} (since the dependence on $L(P_n)$ is not linear). Instead, one can use the joint convexity of the binary Kullback-Leibler divergence $\kl$ in its two arguments as in the following proof of Lemma \ref{lem:fwdbwdmaurer}:
	\begin{proof}[{\bf Proof of Lemma \ref{lem:fwdbwdmaurer}}]
		Let $\delta \in]0,1[$. We can write $L_n(P_n)$ as 
		\begin{align}
		L_n(P_n) =   \frac{p}{m} \sum_{i=1}^m \E_{h\sim P_n} [\ell_h(Z_i)] + \frac{q}{n-m}\sum_{j=m+1}^n \E_{h\sim P_n} [\ell_h(Z_j)],
		\end{align}	
		where $p\coloneqq m/n$ and $q\coloneqq (n-m)/n$ (note that $p+q=1$). Let us denote $$L_{\leq m}(P_n) \coloneqq\frac{1}{m}\sum_{i=1}^m \E_{h\sim P_n} [\ell_h(Z_i)]  \ \  \text{and }\   L_{>m}(P_n) \coloneqq \frac{1}{n-m}\sum_{j=m+1}^n \E_{h\sim P_n} [\ell_h(Z_j)]. $$ By the joint convexity of the binary Kullback-Leibler divergence $\kl$ in its two arguments, we have 
		\begin{align}
		\kl(L(P_n)\pipes L_n(P_n)) &=  \kl(p L(P_n) + q L(P_n)\pipes p L_{\leq m}(P_n) + q L_{> m}(P_n)), \nonumber  \\
		& \leq p \cdot \kl(L(P_n)\pipes L_{\leq m}(P_n))  + q  \cdot   \kl(L(P_n)\pipes L_{> m}(P_n)), \nonumber \\
		& \leq  p\cdot \frac{ \KL(P_n\pipes P(Z_{>m}))  +\ln \frac{4 \sqrt{m}}{\delta} }{m}, \nonumber \\
		& \quad +  q\cdot \frac{ \KL(P_n\pipes P(Z_{\leq m}))  +\ln \frac{4 \sqrt{n-m}}{\delta} }{n-m}, \label{eq:maurerlast}
		\end{align}
		with probability at least $1-\delta$, where the last inequality follows by Maurer's bound \eqref{eq:maurer} and the union bound. Substituting the expressions of $p$ and $q$ in \eqref{eq:maurerlast} yields the desired result.
	\end{proof}
	
	\section{Biasing}
	\label{app:biasing}
	A PAC-Bayes bound similar to the one in our Corollary \ref{cor:main} can be obtained from the TS bound. For this, the TS bound must be applied twice, once on each part of the sample (\emph{i.e.} $Z_{\leq m}$ and $Z_{>m}$) to \emph{biased} losses. We demonstrate this in what follows. 
	
	Let $\hat h : \bigcup_{i=1}^n \cZ^i \rightarrow \cH$ be any estimator. The TS bound can be expressed in the form of \eqref{eq:starter} with $\comp_n = \KL(P_n\pipes P_0)$, $\mathscr C=0$, and $R_n = \E_{h\sim P_n} [\var_n[\ell_h(Z)]]$, where $\var_n[X]$ denotes the empirical variance. Applying the TS bound on the second part of the sample $Z_{>m}$ with prior $P_0$ and posterior $P_n$, and with the biased loss $\tilde{\ell}_h(Z)=\ell_h(Z) -\ell_{\hat h (Z_{\leq m})}(Z)$, gives
	\begin{align}
	\tilde{L}(P_n) - \tilde{L}_{>m}(P_n) &\leq \mathscr{P} \cdot \sqrt{\frac{ \E_{h \sim P_n} [\var_{>m}[\tilde{\ell}_h(Z)]] \cdot (\KL(P_n\pipes P_0) + \varepsilon_{\delta, n-m})     }{n-m}} \nonumber \\
	& \hspace{3cm}+ \mathscr A \cdot \frac{\KL(P_n\pipes P_0) + \varepsilon_{\delta,n-m}}{n-m}, \label{eq:TS1}
	\end{align}
	with probability at least $1-\delta$, where $\var_{>m}[X] \coloneqq \frac{1}{n-m}\sum_{i=m+1}^n \left(X_i - \frac{1}{n-m} \sum_{j=m+1}^n X_j  \right)^2$, $\tilde{L}(P_n) \coloneqq \E_{h\sim P_n}[\E_{Z\sim \dist }[\tilde{\ell}_h(Z)]]$, and $\tilde{L}_{>m}(P_n) \coloneqq \frac{1}{n-m}\sum_{j=m+1}^n \E_{h\sim P_n} [\tilde{\ell}_h(Z_j)]$. 
	
	Doing the same on the first part of the sample $Z_{\leq m}$, but now with the loss $\check\ell_h(Z)\coloneqq \ell_h(Z) -\ell_{\hat h (Z_{> m})}(Z)$, yields
	\begin{align}
	\check{L}(P_n) - \check{L}_{\leq m}(P_n) &\leq \mathscr{P} \cdot \sqrt{\frac{ \E_{h \sim P_n} [\var_{\leq m}[\check{\ell}_h(Z)]] \cdot (\KL(P_n\pipes P_0) + \varepsilon_{\delta, m})     }{m}} \nonumber \\
	& \hspace{3cm}+ \mathscr A \cdot \frac{\KL(P_n\pipes P_0) + \varepsilon_{\delta,m}}{m},\label{eq:TS2}
	\end{align}
	with probability at least $1-\delta$, where $\var_{\leq m}[X] \coloneqq \frac{1}{m}\sum_{i=1}^m \left(X_i - \frac{1}{m} \sum_{j=1}^m X_j  \right)^2$, $\check{L}(P_n) \coloneqq \E_{h\sim P_n}[\E_{Z\sim \dist }[\check{\ell}_h(Z)]]$, and $\check{L}_{\leq m}(P_n) \coloneqq \frac{1}{m}\sum_{i=1}^m \E_{h\sim P_n} [\check{\ell}_h(Z_i)]$. 
	
	Two more applications of the TS bound with prior and posterior equal to $P_0$, yields,
	\begin{align}
	&\hspace{-0.5cm} L(\hat h(Z_{\leq m})) - {L}_{> m}(\hat h(Z_{\leq m})) \leq \mathscr{P} \cdot \sqrt{\frac{  \var_{>m}[{\ell}_{\hat h(Z_{\leq m})}(Z)] \cdot \varepsilon_{\delta/2, n-m}     }{n-m}} +  \frac{ \mathscr A \cdot \varepsilon_{\delta/2,n-m}}{n-m},\ \text{and}  \label{eq:TS3} \\
	&\hspace{-0.5cm} L(\hat h(Z_{> m}))-   {L}_{\leq  m}(\hat h(Z_{> m}))   \leq \mathscr{P} \cdot \sqrt{\frac{  \var_{\leq m}[{\ell}_{\hat h(Z_{> m})}(Z)] \cdot \varepsilon_{\delta/2, m}}{m}} +  \frac{ \mathscr A \cdot  \varepsilon_{\delta/2,m}}{m}, \label{eq:TS4}
	\end{align} 
	with probability at least $1-\delta$, where $${L}_{\leq m}(\hat h(Z_{>m})) \coloneqq \frac{1}{m}\sum_{i=1}^m  {\ell}_{\hat h(Z_{>m})}(Z_i) \ \ \text{ and }  \ \ {L}_{> m}(\hat h(Z_{\leq m})) \coloneqq \frac{1}{n-m}\sum_{j=m+1}^n  {\ell}_{\hat h(Z_{\leq m})}(Z_j).$$ Let $p=m/n$ and $q=(n-m)/n$. Applying a union bound and combining \eqref{eq:TS1}-\eqref{eq:TS4}, as $$ q \times ( \eqref{eq:TS1} +\eqref{eq:TS3}) +p \times ( \eqref{eq:TS2} +\eqref{eq:TS4}),$$ 
	yields a bound of the form \eqref{eq:starter} with 
	\begin{align}
	R_n'  &= p \cdot   \var_{\leq m}[{\ell}_{\hat h(Z_{> m})}(Z)] +  q  \cdot \var_{< m}[{\ell}_{\hat h(Z_{\leq m})}(Z)] \leq V_n',\nonumber   \\ 
	R_n &= p \cdot   \E_{h \sim P_n} [\var_{\leq m}[\check{\ell}_h(Z)]]+  q  \cdot \E_{h \sim P_n} [\var_{> m}[\tilde{\ell}_h(Z)]] \leq V_n,\nonumber
	\end{align}
	where $V_n'$ and $V_n$ are as in Corollary \ref{cor:main}.
	
	\paragraph{A Direct Approach.} Though the steps above lead to a bound similar to ours in Corollary \ref{cor:main}, the constants involved may not be optimal. We now re-derive a modification of the TS bound with a $V_n$ term like in Corollary \ref{cor:main}, and with tighter constants. The proof techniques used here are the same as those used in the proof of Theorem \ref{thm:main}. For $\eta \in]0,1/b[$ (where $b>0$ is an upper-bound on the loss $\ell$) and $m\in[2..n]$, define
	\begin{gather} s_{\eta}\coloneqq \eta \cdot \kappa(\eta b), \quad  \text{where} \ \  \kappa(\eta) \coloneqq  (e^{\eta} - \eta -1)/\eta^2,    \\ 
	\text{and } \ \ \tilde c_{\eta} \coloneqq 	 \frac{s_{\eta} m}{2m-2} \left(1+ \frac{ \eta  m}{2m -2} \right)^{-1},  \quad  \lambda(\eta) \coloneqq \frac{\eta \beta(\eta)}{\eta + \beta(\eta)}, \quad \text{where } \beta(\eta) \coloneqq \eta+ \frac{\eta^2 m^2}{2m -2}. \end{gather}
	We assume that $n>2$ is even in the next theorem. We remind the reader of the definitions $$\var_{\leq m}[X] \coloneqq \frac{1}{m}\sum_{i=1}^m \left(X_i - \frac{1}{m} \sum_{j=1}^m X_j  \right)^2 \  \text{and}\   \var_{> m}[X] \coloneqq \frac{1}{n-m}\sum_{i=m+1}^n \left(X_i - \frac{1}{n-m} \sum_{j=m+1}^n X_j  \right)^2.$$ 
	\begin{theorem}{\bf [New PAC-Bayes Empirical Bernstein Bound]}
		\label{thm:TS}
		Let $Z_1,\dots,Z_n$ be i.i.d. with $Z_1 \sim \dist$. Let $m= n/2>1$ and $\pi$ be any distribution with support on a finite or countable grid $\cG \subset  ]0,1/b[$. For any $\delta\in ]0,1[$, learning algorithm $\dol: \bigcup_{i=1}^n \mathcal{Z}^i\rightarrow \mathcal{P}(\cH)$, and estimator $\hat h :  \bigcup_{i=1}^n \mathcal{Z}^i\rightarrow \cH $, we have, 
		\begin{align} L(\dol_n) \leq  \emperical(\dol_n) + \inf_{\eta \in \cG } \left\{
		\tilde{c}_{\eta} \cdot G_n
		+  \frac{\comp_n  +2 \ln \frac{1}{\delta \cdot \pi(\eta)}}{\lambda(\eta) \cdot n} \right\} + \inf_{\nu \in \cG } \left\{ \tilde{c}_{\nu} \cdot G'_n + \frac{\ln \frac{1}{\delta \cdot \pi(\nu)}}{\lambda(\nu) \cdot n }  \right\} , \label{eq:highprobTS}
		\end{align}
		with probability at least $1-\delta$,  where $\comp_n$, $G_n'$, and $G_n$ are the random variables defined by:
		\begin{align} 
		&  \comp_n \coloneqq 
		\KL(\dol_n \| \dol(Z_{\leq m})) + \KL(\dol_n \| \dol(Z_{> m})),  \label{eq:thecompTS} \\
		&G'_n \coloneqq \var_{> m}\left[\ell_{\hat h(Z_{\leq  m})}(Z) \right]+  \var_{\leq m}\left[\ell_{\hat h(Z_{> m})}(Z) \right] , \nonumber
		\\ 
		&   G_n \coloneqq  \expect{h \sim \dol_n}{
			\var_{> m}\left[\ell_h(Z) -  \ell_{\hat h(Z_{\leq  m})}(Z) \right]
			+      \var_{\leq m}\left[\ell_h(Z) -  \ell_{\hat h(Z_{> m})}(Z)   \right]}.
		\end{align}
	\end{theorem}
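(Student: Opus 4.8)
The plan is to re-run the proof of Theorem~\ref{thm:main} almost verbatim, with Lemma~\ref{lem:bernsand} (un-expected Bernstein) replaced by the ordinary Bernstein ESI \eqref{eq:ourbernstein} --- which carries $X^2$ \emph{inside} the expectation --- together with one extra step that converts the resulting population second-moment/variance quantities into the \emph{empirical} variances $\var_{>m},\var_{\le m}$ appearing in $G_n$ and $G_n'$.

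First I would condition on $Z_{\le m}$, making $\hat h_1:=\hat h(Z_{\le m})$ deterministic and $Z_{m+1},\dots,Z_n$ i.i.d.; set $g_h(z):=\ell_h(z)-\ell_{\hat h_1}(z)\in[-b,b]$ and decompose $L(h)-L_{>m}(h)=\big(\E[g_h(Z)]-\tfrac1{n-m}\sum_{j>m}g_h(Z_j)\big)+\big(L(\hat h_1)-L_{>m}(\hat h_1)\big)$, and symmetrically on the first half with reference $\hat h_2:=\hat h(Z_{>m})$. The second summand involves only the fixed functions $\ell_{\hat h_1},\ell_{\hat h_2}$, so applying the empirical-Bernstein engine below to these (no posterior, hence no $\KL$) yields the two $\var$-terms of $G_n'$ --- exactly as the remainder $U_n$ in the proof of Theorem~\ref{thm:main} was disposed of by a second use of Lemma~\ref{lem:bernsand} to produce $V_n'$. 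The first summand is the analogue of the main term.

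The technical core is an ESI-form empirical Bernstein inequality: for i.i.d.\ $X_1,\dots,X_N\in[-b,b]$ with mean $\mu$ and (biased) empirical variance $\var_N[X]:=\tfrac1N\sum_j(X_j-\bar X)^2$, and for each $0<\eta<1/b$,
\begin{align*}
\mu-\tfrac1N\textstyle\sum_{j=1}^N X_j\ \stochleq_{\lambda(\eta)}\ \tilde c_\eta\,\var_N[X]+\rho_N,
\end{align*}
with $\tilde c_\eta,\lambda(\eta),\beta(\eta)$ as in the theorem (specialized to $N=n-m=m$) and $\rho_N$ a remainder that, for the biased loss $g_h$, is controlled by $b\,|L(h)-L(\hat h_1)|$ and absorbed into the excess, just as squared-mean contributions are in the main proof. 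I would obtain this by combining, via Proposition~\ref{prop:Trans}, two pieces: (i) the per-sample Bernstein ESI \eqref{eq:ourbernstein}, chained over the i.i.d.\ block, which bounds $\mu-\bar X$ at rate $\eta$ by $s_\eta$ times the population second moment; and (ii) an exponential concentration bound for the empirical variance (in the spirit of Maurer--Pontil~\cite{maurer2009empirical}) controlling the population variance by $\var_N[X]$ up to a correction which is itself governed by the variance. Solving this self-referential inequality is what forces $\tilde c_\eta=\tfrac{s_\eta m}{2m-2}\big(1+\tfrac{\eta m}{2m-2}\big)^{-1}$, and chaining (i) at rate $\eta$ with (ii) at rate $\beta(\eta)=\eta+\tfrac{\eta^2 m^2}{2m-2}$ via Proposition~\ref{prop:Trans}-(a) produces the effective rate $\lambda(\eta)=\tfrac{\eta\beta(\eta)}{\eta+\beta(\eta)}$.

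With this engine in hand, the rest of the proof copies Section~\ref{sec:esi}: for each $h$, assemble the mean and variance ESIs into a single per-$h$ statement at rate $\lambda(\eta)$; apply Proposition~\ref{prop:donsker} with the informed priors $\dol(Z_{\le m})$, $\dol(Z_{>m})$ and common posterior $\dol_n$ to introduce $\comp_n=\KL(\dol_n\|\dol(Z_{\le m}))+\KL(\dol_n\|\dol(Z_{>m}))$; apply Proposition~\ref{prop:randeta} to pass to grid-valued $\eta$ (for the $G_n$ part) and $\nu$ (for the $G_n'$ part), picking up the $2\ln\frac1{\delta\pi(\eta)}$ and $\ln\frac1{\delta\pi(\nu)}$ terms; chain the two resulting ESIs with Proposition~\ref{prop:Trans}-(a); union-bound over the two halves; and finally invoke Proposition~\ref{prop:drop} and take infima over $\eta,\nu\in\cG$ to reach \eqref{eq:highprobTS}. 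I expect the main obstacle to be piece (ii) of the engine --- getting the \emph{empirical} variance into the exponent with the sharp constants --- since the empirical variance is a $U$-statistic whose concentration does not follow from merely chaining per-sample ESIs; everything else is bookkeeping parallel to the proof of Theorem~\ref{thm:main}.
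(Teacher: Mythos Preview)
Your proposal is essentially correct and mirrors the paper's proof: ordinary Bernstein \eqref{eq:ourbernstein} on each sample, chained over the i.i.d.\ block, then chained via Proposition~\ref{prop:Trans}-(a) with the Maurer--Pontil self-bounding ESI for the empirical variance---your piece~(ii) is precisely the paper's \eqref{eq:selfbound}--\eqref{eq:selfboundnew}, so your anticipated ``main obstacle'' is in fact an off-the-shelf tool---followed by PAC-Bayes with informed priors, Proposition~\ref{prop:randeta}, and Proposition~\ref{prop:drop}. Two small simplifications relative to your plan: the paper applies \eqref{eq:ourbernstein} to the \emph{centered} variable so that the second moment is already $\var[X_h]$ and your remainder $\rho_N$ never appears (your proposed absorption of $s_\eta(\E[g_h])^2$ into the excess would not pass cleanly through the PAC-Bayes step); and the two halves are merged by ESI chaining (Proposition~\ref{prop:Trans}-(a)) rather than a union bound.
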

	Note that since $\var_{\leq m}(X) \leq \sum_{i=1}^m X_i^2/m$ and $\var_{> m}(X) \leq \sum_{i=m+1}^n X_i^2/m$, we have \begin{align}G_n\leq  V_n  \quad \text{and} \quad  G'_n\leq V'_n, \end{align}where $V_n$ and $V_n'$ are defined in \eqref{eq:theVn} and \eqref{eq:irred}, respectively. However, one cannot directly compare $G_n$ to the $V_n$ defined in Theorem \ref{thm:main}, since the latter uses ``online'' posteriors $(Q(Z_{>i}))$ and $Q(Z_{<j})$ which get closer and closer to the posterior $Q(Z_{\leq n})$ based on the full sample.
	
	To prove Theorem \ref{thm:TS}, we need the following self-bounding property of the empirical variance \cite{maurer2009empirical}: 
	\begin{align}
	\label{eq:selfbound}
	m\var[X]  \stochleq_{ \eta}  \frac{m^2}{m-1}  \var_{m}[X] - \frac{\eta m^2}{2m -2}\var[X],
	\end{align}
	for any $\eta >0$ and any bounded random variable $X$, where $\var_m[X]$ is either $\var_{>m}[X]$ or $\var_{\leq m}[X]$ (recall that $m=n/2$). Re-arranging \eqref{eq:selfbound} and dividing by $(1+\eta m/(2m -2))$, leads to 
	\begin{gather}
	\label{eq:selfboundnew}
	m \var[X]  \stochleq_{ \beta(\eta)}  \frac{m^2}{m-1} \cdot \left(1+ \frac{\eta m}{2m -2} \right)^{-1} \var_{m}[X],\\
	\text{where }\ \  \beta(\eta) \coloneqq \eta+ \frac{\eta^2 m}{2m -2}.
	\end{gather}
	\begin{proof}[\rmbf{Proof of Theorem \ref{thm:TS}}]
		Let $\eta\in]0,1/b[$ and $s_{\eta}\coloneqq \eta \cdot \kappa(\eta b)$. We define \begin{align*} X_{h}(Z_i)&\coloneqq \ell_{h}(Z_i) - \ell_{\hat h(Z_{>m}) }(Z_i),  \quad \text{for} \ \ 1 \leq  i \leq m, \\
		\tilde{X}_{h}(Z_j)&\coloneqq \ell_{h}(Z_j) -\ell_{\hat h(Z_{\leq m})}(Z_j),  \quad \text{for} \ \ m <  j \leq n.
		\end{align*}
		Since $\ell$ is bounded from above by $b$, the Bernstein inequality \eqref{eq:ourbernstein} applied to the zero-mean random variables $\expect{\rv{Z}_i \sim \dist}{X_{h}(\rv{Z}_i)} - X_{h}(Z_i), i\in[n],$ implies that for all $h \in \cH$,
		\begin{align*}
		Y^{\eta}_{h}(Z_i) &:= \expect{\rv{Z}_i \sim \dist}{X_{h}(\rv{Z}_i)} - X_{h}(Z_i)   - s_\eta \cdot \var [X_{h}(Z)] \stochleq_\eta 0,  \quad \text{for} \ \ 1 \leq  i \leq m,  \\
		\tilde{Y}^{\eta}_{h}(Z_j) &:= \mathbb{E}_{\rv{Z}_j \sim \dist}[\tilde{X}_{h}(\rv{Z}_j)] - \tilde{X}_{h}(Z_j)   - s_\eta \cdot  \var[ \tilde{X}_{h}(Z)] \stochleq_\eta 0, \quad \text{for} \ \ m <  j \leq n.
		\end{align*}
		Since $Z_{1}, \dots , Z_{n}$ are i.i.d. we can chain the ESIs above using Proposition \ref{prop:Trans}-(b) to get:
		\begin{align}
		S \coloneqq  \sum_{i=1}^m Y^{\eta}_{h}(Z_i)  \stochleq_\eta 0, \label{eq:untildeTS} 
		\ \ \ \ \ \tilde S \coloneqq \sum_{j=m+1}^n \tilde{Y}^{\eta}_{h}(Z_j) \stochleq_\eta 0.  
		\end{align}
		Chaining $S \stochleq_\eta 0$ [resp. $\tilde{S} \stochleq_\eta 0$] and \eqref{eq:selfboundnew} with $\var_m \equiv \var_{\leq  m}$ [resp. $\var_m \equiv \var_{>m}$] using Proposition \ref{prop:Trans}-(a), yields, 
		\begin{align}
		&W^{\eta}_h \stochleq_{\frac{\eta \beta(\eta)}{\eta + \beta(\eta)}}\  0   \quad \text{and} \quad 	\tilde W^{\eta}_h \stochleq_{\frac{\eta \beta(\eta)}{\eta + \beta(\eta)}}\  0,  \quad  \text{where}\label{eq:tildeWs}\\
		&W^{\eta}_h  \coloneqq  \sum_{i=1}^m \left( \expect{\rv{Z}_i \sim \dist}{X_{h}(\rv{Z}_i)} - X_{h}(Z_i) \right)  -   \frac{s_\eta m^2}{m-1} \cdot \left(1+ \frac{\eta m}{2m -2} \right)^{-1} \var_{\leq m} [X_{h}(Z)]  , \\
		&\tilde W^{\eta}_h \coloneqq   \sum_{j=m+1}^n \left( \expect{\rv{Z}_j  \sim \dist}{X_{h}(\rv{Z}_j)} - X_{h}(Z_j) \right)  -  \frac{s_\eta m^2}{m-1} \cdot \left(1+ \frac{\eta m}{2m -2} \right)^{-1} \var_{> m} [X_{h}(Z)] .
		\end{align}
		Let $\lambda(\eta)\coloneqq \eta \beta(\eta)/(\beta(\eta)+\eta)$. Applying PAC-Bayes (Proposition \ref{prop:donsker}) to $W^{\eta}_h \stochleq_{ \lambda(\eta)} 0$ and $\tilde W^{\eta}_h\stochleq_{ \lambda(\eta)} 0$ in \eqref{eq:tildeWs},
		with priors $\dol(Z_{>m})$ and $\dol(Z_{\leq m})$, respectively, and posterior $\dol_n = \dol(Z_{\leq n})$ on $\cH$, we get:
		\begin{align*}
		\expect{h\sim \dol_n}{W^{\eta}_h} -\frac{ \KL(\dol_n\| \dol(Z_{>m}))}{\lambda(\eta)} \stochleq_{\lambda(\eta)} 0, 
		\ \ \ \ \  \expect{h \sim \dol_n}{   \tilde W^{\eta}_h} - \frac{\KL(\dol_n\| \dol(Z_{\leq m}))}{\lambda(\eta)}  \stochleq_{\lambda(\eta)} 0.  
		\end{align*}
		We now apply Proposition \ref{prop:Trans}-(a) to chain these two ESIs, which yields
		\begin{align}
		\expect{h \sim \dol_n}{W^{\eta}_h + \tilde W^{\eta}_h}  \stochleq_{\frac{\lambda(\eta)}{2}} \frac{\KL(\dol_n \| \dol(Z_{>m})) + \KL(\dol_n\| \dol(Z_{\leq m}) )}{\lambda(\eta)}.  \label{eq:postpacTS}
		\end{align}
		With the discrete prior $\pi$ on $\cG$, we have for any $\estimate\eta = \estimate\eta(Z_{\leq n}) \in \cG \subset 1/b \cdot [1/\sqrt{n},1[$ (see Proposition \ref{prop:randeta}),
		\begin{align}
		& \expect{h \sim \dol_n}{W^{\eta}_h + \tilde W^{\eta}_h}     \stochleq_{\frac{\lambda(\hat\eta)}{2}} \frac{\comp_n}{\lambda(\estimate\eta)} 
		-\frac{2\ln \pi(\estimate\eta)}{\lambda(\estimate\eta)} 
		, \text{\ \emph{i.e.},\, } \nonumber \\
		& \hspace{0.7 cm} n\cdot (L(P_n) - L_n(P_n)) 
		\ \ \ \stochleq_{\frac{\lambda(\estimate\eta)}{2}}  n  \cdot \tilde{c}_{\estimate \eta} \cdot G_n +\frac{\comp_n +2\ln \frac{1}{\pi(\estimate\eta)} }{\lambda(\estimate\eta)} \ \  + 
		\nonumber \\
		& \hspace{0cm}
		\left[\sum_{i=1}^m \left( \expect{\rv{Z}_i\sim \dist} {  \ell_{\hat h_{> m}}(\rv{Z}_i)} - \ell_{\hat h_{> m}}(Z_i)\right) + \sum_{j=m+1}^n \left( \expect{\rv{Z}_j\sim \dist} {{\ell}_{\hat h_{\leq m}}(\rv{Z}_j)} - \ell_{\hat h_{\leq  m}}(Z_j) \right)\right], \label{eq:simplifiedESITS}
		\end{align}
		where $\hat h_{>  m} \coloneqq \hat h(Z_{>m})$ and $\hat h_{\leq  m} \coloneqq \hat h( Z_{\leq m})$. Let $U_n$ denote the quantity between the square brackets in \eqref{eq:simplifiedESITS}. Using the Bernstein inequality in \eqref{eq:ourbernstein} chained with \eqref{eq:selfboundnew}, and Proposition \ref{eq:randesi}, we get for any estimator $\estimate\nu$ on $\cG$:
		\begin{align}
		U_n \stochleq_{\lambda(\estimate\nu)} \ n \cdot  \tilde{c}_{\estimate\nu}\cdot \left( \var_{\leq m} [\ell_{\hat{h}(Z_{>m})}(Z)] + \var_{> m} [\ell_{ \hat h(Z_{\leq m})}(Z)] \right) + \frac{ \ln \frac{1}{\pi(\estimate\nu)} }{\lambda(\estimate\nu)}.\label{eq:secondesiTS}
		\end{align}
		By chaining \eqref{eq:simplifiedESITS} and \eqref{eq:secondesiTS} using Proposition \ref{prop:Trans}-(a), dividing by $n$%and using the facts that $(1+\hat \nu m^2 /(n-2) )\geq 1$ and $(1+\hat \eta m^2 /(n-2)) \geq 1$,  
		, we get:
		\begin{align} L(\dol_n) \stochleq_{\frac{n \lambda(\estimate\eta) \lambda(\estimate \nu)}{ \lambda(\estimate\eta) + 2\lambda(\estimate\nu)}} \emperical(\dol_n) +  
		\tilde{c}_{\estimate\eta} \cdot  G_n
		+  \frac{\comp_n + 2 \ln \frac{1}{\pi(\estimate\eta)}}{\lambda(\estimate\eta) \cdot n} + \tilde{c}_{\estimate\nu} \cdot G_n' + \frac{\ln \frac{1}{\pi(\estimate\nu)}}{\lambda(\estimate\nu) \cdot n}. \label{eq:lastesiTS}
		\end{align}
		We now apply Proposition~\ref{prop:drop} to \eqref{eq:lastesiTS} to obtain the following inequality with probability at least $1-\delta$: 
		\begin{align} L(\dol_n) \leq \emperical(\dol_n) +  \left[
		\tilde{c}_{\estimate\eta} \cdot  G_n
		+  \frac{\comp_n +  2 \ln \frac{1}{\pi(\estimate\eta)\cdot \delta}}{\lambda(\estimate\eta) \cdot n} \right]+\left\{ \tilde{c}_{\estimate\nu} \cdot G_n' + \frac{\ln \frac{1}{\pi(\estimate\nu) \cdot \delta }}{\lambda(\estimate\nu) \cdot n }\right\}. \label{eq:protohighprobTS}
		\end{align}
		Inequality \eqref{eq:highprob} follows after picking $\estimate\nu$ and $\estimate\eta$ to be, respectively, estimators which achieve the infimum over the closer of $\cG$ of the quantities between braces and square brackets in \eqref{eq:protohighprobTS}.
	\end{proof}

	\section{Proof of Lemma \ref{lem:irreduce}}
	\label{app:otherproofs}
	\begin{proof}
		Throughout this proof, we denote $\hat h_{>m} \coloneqq \hat h(Z_{>m})$ and $\hat h_{\leq m} \coloneqq \hat h(Z_{\geq m})$. Let $\delta \in]0,1[$. Since the sample $Z_{\leq m}$ is independent of $Z_{>m}$, we have 
		\begin{align}
		\frac{2}{n} \sum_{i=1}^m \ell_{\hat h_{>m}}(Z_i)^2 & =  \var_{\leq m}[\ell_{\hat h_{>m}}(Z)] + \left(\frac{1}{m} \sum_{i=1}^m \ell_{\hat h_{>m}}(Z_i) \right)^2. \label{eq:leftpart}
		\end{align}
		On the other hand, from \citep[Theorem 10]{maurer2009empirical}, we have 
		\begin{align}
		\var_{\leq m}[\ell_{\hat h_{>m}}(Z)]& \leq \frac{2(m-1)}{m} \var [\ell_{\hat h_{>m}}(Z)] + \frac{8\ln \frac{1}{\delta}}{n}, \nonumber   \\ \label{eq:mau} 
		& \stackrel{|\ell|\leq 1}\leq \frac{2(m-1)}{m}  L(\hat h_{>m}) + \frac{8\ln \frac{1}{\delta}}{n},
		\end{align}
		with probability at least $1-\delta$. By Hoeffding's inequality, we also have 
		\begin{align}
		\left(\frac{1}{m} \sum_{i=1}^m \ell_{\hat h_{>m}}(Z_i)\right)^2 &\leq 2 L(\hat h_{>m})^2 + \frac{8 \ln \frac{1}{\delta}}{n},\nonumber  \\  
		& \stackrel{|\ell|\leq 1}{\leq}  2 L(\hat h_{>m}) + \frac{8 \ln \frac{1}{\delta}}{n},   \label{eq:hoeff}
		\end{align}
		with probability at least $1-\delta$. Combining \eqref{eq:leftpart}, \eqref{eq:mau}, and \eqref{eq:hoeff} together using a union bound, yields
		\begin{align}
		\frac{2}{n} \sum_{i=1}^m \ell_{\hat h_{>m}}(Z_i)^2 \leq  \frac{4(n -1)}{n} L(\hat h_{>m}) +\frac{16 \ln \frac{2}{\delta}}{n}, \label{eq:finalleafthand}
		\end{align}
		with probability at least $1-\delta$. Applying the same argument on the second part of the sample $Z_{>m}$, yields
		\begin{align}
		\frac{2}{n} \sum_{j=m+1}^n \ell_{\hat h_{\leq m}}(Z_i)^2 \leq  \frac{4(n -1)}{n} L(\hat h_{\leq m}) +\frac{16 \ln \frac{2}{\delta}}{n}, \label{eq:finalrighhand}
		\end{align}
		with probability at least $1-\delta$. Applying a union bound, and adding together \eqref{eq:finalleafthand} and \eqref{eq:finalrighhand} then dividing by 2, yields,
		\begin{align}
		R_n' &\leq  \frac{2(n -1)}{n} \left(L(\hat h_{\leq m})+L(\hat h_{> m})\right) +\frac{16 \ln \frac{4}{\delta}}{n},  \nonumber \\
		& \leq  2 \left(L(\hat h_{\leq m})+L(\hat h_{> m})\right) +\frac{16 \ln \frac{4}{\delta}}{n},
		\label{eq:rnprimebound}
		\end{align}
		with probability at least $1-\delta$. Diving \eqref{eq:rnprimebound} by $n$ and applying the square-root yields the desired result.
	\end{proof}

	\section{Proofs for Section \ref{sec:esi}}
	\label{supp:proofsesi}
	\begin{proof}[\rmbf{Proof of Proposition \ref{prop:drop}}]
		\label{ESIconvproof} Let $Z=X-Y$.
		For fixed $\eta$, Jensen's inequality yields $\E[Z]\leq 0$. 
		For $\eta=\hat\eta$ that is either fixed or itself a random variable, applying Markov's inequality to the random variable $e^{-\hat\eta Z}$ yields $Z\leq \frac{\ln \frac{1}{\delta}}{\hat\eta}$, with probability at least $1-\delta$, for any $\delta\in]0,1[$.
	\end{proof}
	\begin{proof}[\rmbf{Proof of Proposition \ref{prop:Trans}}]
		\label{Transproof}{\bf [Part (a)]}
		Fix $(\gamma_i)_{i\in [n]} \in ]0,+\infty[^n$, and let $\nu_ j \coloneqq  \left(\sum_{i=1}^j \frac{1}{\gamma_i}\right)^{-1}$, for $j\in[n]$. We proceed by induction to show that $\forall j\in[n],\;\sum_{i=1}^j Z_i \stochleq_{\nu_j} 0$. The result holds trivially for $j=1$, since $\nu_1=\gamma_1$. Suppose that 
		\begin{align}
		\sum_{i=1}^j Z_i \stochleq_{\nu_j} 0, \label{claim}
		\end{align}
		for some $1\leq j<n$. We now show that \eqref{claim} holds for $j+1$; we have,
		\begin{eqnarray}
		\Exp  \left[e^{\frac{\nu_j \gamma_{j+1}}{\nu_j +\gamma_{j+1}} \left( \sum_{i=1}^j Z_i + Z_{j+1} \right) } \right]& =& \Exp\left[e^{\frac{\nu_j \gamma_{j+1}}{\nu_j +\gamma_{j+1}}  \sum_{i=1}^j Z_i  +\frac{\nu_j \gamma_{j+1}}{\nu_j +\gamma_{j+1}}  Z_{j+1} }\right], \nonumber  \\ 
		&\stackrel{\text{Jensen}}{\leq} & \tfrac{\gamma_{j+1}}{\nu_j +\gamma_{j+1}} \Exp\left[e^{\nu_j \sum_{i=1}^j Z_i  }\right] + \tfrac{\nu_j}{\nu_j +\gamma_{j+1}} \Exp\left[e^{\gamma_{j+1}   Z_{j+1} }\right], \nonumber \\ 
		& \stackrel{\text{using }\eqref{claim}}{\leq} &1. \nonumber
		\end{eqnarray}
		Thus the result holds for $j+1$, since $\nu_{j+1}=\frac{\nu_j \gamma_{j+1}}{\nu_j+ \gamma_{j+1}}$. This establishes \eqref{eq:wtrn}. 
		
		{\bf [Part (b)]} This is a special case of  \citep[Lemma 6]{koolen2016combining}, who treat the general case with non-i.i.d. distributions.
	\end{proof}
	
	\begin{proof}[\rmbf{Proof of Proposition \ref{prop:donsker}}]
		Let $\rho(h) = (d\dol_n/ d\dol_0)(h)$ be the density of $h\in \cH$ relative to the prior measure $\dol_0$. We then have $\KL(\dol_n \| \dol_0) = \E_{h \sim \dol_n} [\ln \rho(h)]$. We can now write:
		\begin{align*}
		\E
		\left[ e^{\eta \E_{h \sim \dol_n}[Y_{h}] - \KL(\dol_n \| \dol_0)} \right] &= \E_{} 
		\left[ e^{\eta \E_{h \sim \dol_n}[Y_{h} - \ln \rho(h)]} \right],  \\ 
		&\leq\E_{} \left[ \E_{h \sim \dol_n}
		\left[ e^{\eta (Y_{h} - \ln \rho(h))} \right] \right],  \quad (\text{Jensen's Inequality}) \\
		&=  \E_{} \left[\E_{h \sim \dol_n} \left[\frac{d \dol_0}{d \dol_{n}} \cdot
		e^{\eta Y_{h}} \right] \right],  \\
		&= \E_{} \left[ \E_{h \sim \dol_0} \left[
		e^{\eta Y_{h}} \right]\right], \\ &  = \E_{h \sim \dol_0} \left[ \E_{}  \left[
		e^{\eta Y_{h}} \right] \right],  \quad \quad \quad  \quad \quad (\text{Tonelli's Theorem})  \\ & = 1,
		\end{align*}
		where the final equality follows from our assumption that $Y_h \stochleq_{\eta} 0$, for all $h \in \cH$.
	\end{proof}
	
	\begin{proof}[\rmbf{Proof of Proposition \ref{prop:randeta}}]
		Since $Y_{\eta} \stochleq_{\eta} 0$, for $\eta \in \cG$, we have in particular: \begin{align} 1 \geq \expect{}{\sum_{\eta \in \cG} \pi(\eta) e^{\eta Y_{\eta}}}  \geq  \expect{}{\pi(\estimate\eta) e^{ \estimate\eta Y_{\estimate\eta}}}, \label{eq:protorandesi} \end{align}
		where the right-most inequality follows from the fact that the expectation of a countable sum of positive random variable is greater than the expectation of a single element in the sum. Rearranging \eqref{eq:protorandesi} gives \eqref{eq:randesi}.
	\end{proof}
	
	\section{Proof of Theorem \ref{thm:vnbound}}
	\label{supp:excessrisk}
	In what follows, for $h\in \cH$, we denote $X_h(Z)\coloneqq \ell_h(Z)- \ell_{h_*}(Z)$ the excess loss random variable, where $h_*$ is the risk minimizer within $\cH$. Let $$\rho(\eta)\coloneqq \frac{1}{\eta}  \ln \expect{Z\sim \dist}{e^{-\eta X_h(Z)}}$$ be its \emph{normalized cumulant generating function}.
	We need the following useful lemmas:
	\begin{lemma}{\rmbf{\citep{koolen2016combining}}}
		\label{lem:koolen1}
		Let $h\in \cH$ and $X_h$ be as above. Then, for all $\eta\geq0$,
		\begin{align}
		\alpha_\eta \cdot X_h(Z)^2 - X_h(Z) \stochleq_{\eta} \rho(2\eta) + \alpha_\eta \cdot \rho(2\eta)^2, \quad \text{where } \alpha_\eta \coloneqq \frac{\eta}{1+\sqrt{1+4 \eta^2}}. \nonumber 
		\end{align}
	\end{lemma}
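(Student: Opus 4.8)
The plan is to peel off the ESI and reduce everything to a one-variable convexity computation. Since the statement is vacuous at $\eta=0$, I would fix $\eta>0$, abbreviate $\alpha\coloneqq\alpha_\eta$ and $X\coloneqq X_h(Z)$, and recall that (assuming, as in the cited reference, that the loss is valued in $[0,1]$) $X\in[-1,1]$ and $\rho(2\eta)=\tfrac{1}{2\eta}\ln\Exp[e^{-2\eta X}]$. By Definition~\ref{def:esi} it then suffices to prove $\Exp[e^{\eta\alpha X^2-\eta X}]\le\exp(\eta\rho(2\eta)+\eta\alpha\,\rho(2\eta)^2)$. Before touching the expectation I would record the one special property of $\alpha_\eta$ that the proof needs: squaring $\alpha_\eta(1+\sqrt{1+4\eta^2})=\eta$ and simplifying gives $4\eta\alpha_\eta^2=\eta-2\alpha_\eta$, i.e.
\[
1-\frac{2\alpha_\eta}{\eta}=4\alpha_\eta^2 .
\]

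The key idea is a pointwise identity that removes the $X^2$. Define $g\colon(0,\infty)\to(0,\infty)$ by $g(t)\coloneqq\exp\!\big(\tfrac12\ln t+\tfrac{\alpha}{4\eta}(\ln t)^2\big)$. Substituting $t=e^{-2\eta x}$ shows $g(e^{-2\eta x})=e^{\eta\alpha x^2-\eta x}$ for all $x\in\reals$, so $\Exp[e^{\eta\alpha X^2-\eta X}]=\Exp[g(e^{-2\eta X})]$; and with $M\coloneqq\Exp[e^{-2\eta X}]$ (so that $\ln M=2\eta\rho(2\eta)$) a one-line computation gives $g(M)=\exp(\eta\rho(2\eta)+\eta\alpha\,\rho(2\eta)^2)$, which is exactly the target right-hand side. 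Hence the whole lemma reduces to Jensen's inequality $\Exp[g(e^{-2\eta X})]\le g(\Exp[e^{-2\eta X}])$, provided $g$ is concave on an interval containing the range of $e^{-2\eta X}$.

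The main obstacle is precisely that concavity claim, and it is the step that forces the exact value of $\alpha_\eta$. Writing $g(t)=h(\ln t)$ with $h=e^{\phi}$, $\phi(u)=\tfrac u2+\tfrac{\alpha}{4\eta}u^2$, I would compute $g''(t)=\big(h''(u)-h'(u)\big)/t^2$ at $u=\ln t$ and $h''-h'=\big(\phi''+(\phi')^2-\phi'\big)e^{\phi}$; setting $s\coloneqq\phi'(u)=\tfrac12+\tfrac{\alpha}{2\eta}u$ and $\phi''=\tfrac{\alpha}{2\eta}$, concavity is equivalent to $s^2-s+\tfrac{\alpha}{2\eta}\le0$. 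By the identity above, the roots of $s\mapsto s^2-s+\tfrac{\alpha}{2\eta}$ are $\tfrac12\pm\tfrac12\sqrt{1-2\alpha/\eta}=\tfrac12\pm\alpha$, i.e. $u\in[-2\eta,2\eta]$, i.e. $t\in[e^{-2\eta},e^{2\eta}]$ — which is exactly the range of $e^{-2\eta X}$ when $X\in[-1,1]$. Jensen then applies and yields $\Exp[e^{\eta\alpha X^2-\eta X}]=\Exp[g(e^{-2\eta X})]\le g(\Exp[e^{-2\eta X}])=\exp(\eta\rho(2\eta)+\eta\alpha\,\rho(2\eta)^2)$, which is the claimed ESI. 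I expect the only delicate point — beyond routine algebra (the substitution defining $g$, the $\alpha_\eta$-identity, the evaluation $g(M)$) — to be this matching of the concavity interval of $g$ with the range of $e^{-2\eta X}$: it is exact for losses in $[0,1]$, whereas for losses in $[0,b]$ with $b>1$ one would have to shrink $\alpha$ (or restrict $\eta$) so that $[e^{-2\eta b},e^{2\eta b}]$ still lies inside the concavity interval.
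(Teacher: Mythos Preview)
The paper does not prove this lemma; it is quoted from \cite{koolen2016combining} without proof, so there is no in-paper argument to compare against. Your argument is correct and self-contained: the substitution $e^{\eta\alpha X^2-\eta X}=g(e^{-2\eta X})$ with $g(t)=\exp\bigl(\tfrac12\ln t+\tfrac{\alpha}{4\eta}(\ln t)^2\bigr)$ cleanly reduces the ESI to a single Jensen step, and your concavity computation is accurate --- the discriminant identity $1-2\alpha_\eta/\eta=4\alpha_\eta^2$ is precisely what forces the concavity interval of $g$ to equal $[e^{-2\eta},e^{2\eta}]$, matching the range of $e^{-2\eta X}$ when $X\in[-1,1]$ and explaining why $\alpha_\eta$ must take its particular form. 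Your closing caveat about $b>1$ is also on point: with this specific $\alpha_\eta$ the argument (and the lemma as stated) is tied to losses in $[0,1]$.
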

	\begin{lemma}{\rmbf{\citep{koolen2016combining}}} 
		\label{lem:koolen2}
		Let $b>0$, and suppose that $X_h \in[-b,b]$ almost surely, for all $h\in \cH$. If the $(\beta,B)$-Bernstein condition holds with $\beta \in[0,1]$ and $B>0$, then 
		\begin{align}
		\rho(\eta) \leq (B \eta)^{\frac{1}{1-\beta}}, \quad \text{for all $\eta \in]0,1/b]$}. \nonumber
		\end{align}
	\end{lemma}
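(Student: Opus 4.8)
The plan is to fix an arbitrary $h\in\cH$, abbreviate $X\coloneqq X_h$, $\mu\coloneqq\expect{Z\sim\dist}{X(Z)}$ and $v\coloneqq\expect{Z\sim\dist}{X(Z)^{2}}$, and record two facts: $\mu=L(h)-L(h_*)\geq 0$ because $h_*$ minimizes the risk, and the Bernstein condition is exactly $v\leq B\mu^{\beta}$. From there I would bound the moment generating function $\expect{Z\sim\dist}{e^{-\eta X(Z)}}$ from above by a quadratic in $\eta$, pass to the logarithm, and reduce the statement to a one-dimensional optimization over the value of $\mu$.

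For the first step I would invoke the elementary inequality $e^{y}\leq 1+y+y^{2}$, valid for every $y\leq 1$ (for $y\le 0$ it follows from $e^{y}\le 1+y+y^{2}/2$; for $0<y\le 1$ the map $y\mapsto(e^{y}-1-y)/y^{2}$ is increasing and equals $e-2<1$ at $y=1$). Since $X\geq -b$ and $\eta\leq 1/b$, we have $-\eta X(Z)\leq \eta b\leq 1$ almost surely, so taking $y=-\eta X(Z)$ and then expectations gives
\begin{align}
\expect{Z\sim\dist}{e^{-\eta X(Z)}}\ \leq\ 1-\eta\mu+\eta^{2}v\ \leq\ 1-\eta\mu+\eta^{2}B\mu^{\beta}. \nonumber
\end{align}
Using $\ln(1+x)\leq x$ (the right-hand side above being positive), this yields
\begin{align}
\rho(\eta)\ =\ \frac1\eta\ln\expect{Z\sim\dist}{e^{-\eta X(Z)}}\ \leq\ -\mu+\eta B\mu^{\beta}\ =:\ g(\mu), \nonumber
\end{align}
so it remains to prove $\sup_{\mu\geq 0}g(\mu)\leq (B\eta)^{1/(1-\beta)}$.

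The optimization step carries the real content. For $\beta=0$ it is trivial, since $g(\mu)=\eta B-\mu\leq \eta B$. For $\beta\in\,]0,1[$, $g$ is strictly concave on $]0,\infty[$ with $g(0)=0$ and $g(\mu)\to-\infty$, and its unique stationary point is $\mu^{*}=(\eta B\beta)^{1/(1-\beta)}$; substituting back (a routine computation) I expect
\begin{align}
g(\mu^{*})\ =\ (\eta B)^{1/(1-\beta)}\cdot(1-\beta)\,\beta^{\beta/(1-\beta)}. \nonumber
\end{align}
Thus the desired bound reduces to the purely scalar inequality $(1-\beta)\beta^{\beta/(1-\beta)}\leq 1$, which, after taking logarithms, is precisely $\beta\ln\beta+(1-\beta)\ln(1-\beta)\leq 0$, i.e.\ nonnegativity of the binary entropy $H(\beta)\coloneqq-\beta\ln\beta-(1-\beta)\ln(1-\beta)$. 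Since $g(\mu^{*})\geq 0=g(0)$, this $\mu^{*}$ is in fact the global maximizer over $[0,\infty)$, settling the case $\beta\in[0,1[$. For $\beta=1$ I would deduce the claim by letting $\beta\uparrow 1$ in the bound just established (observing that $\rho$ itself does not depend on $\beta$), with $(B\eta)^{1/(1-\beta)}$ read in the limiting sense.

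The main obstacle is a mild one: it is spotting that the algebraic constant $(1-\beta)\beta^{\beta/(1-\beta)}$ produced by the optimization is controlled exactly by $H(\beta)\geq 0$. Once that is recognized, what is left is the routine calculus of locating the maximizer of $g$ and the minor bookkeeping for the endpoints $\beta\in\{0,1\}$.
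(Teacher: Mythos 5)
The paper does not prove this lemma at all---it is imported verbatim from Koolen et al.\ (2016)---so there is no in-paper argument to compare against; I am judging your proposal on its own. It is correct and is essentially the standard route to this bound: control the moment generating function by $1-\eta\mu+\eta^{2}v$ via a quadratic upper bound on $e^{y}$ valid for $y\le 1$ (which is where the hypotheses $X_h\ge -b$ and $\eta\le 1/b$ enter), substitute $v\le B\mu^{\beta}$ and $\ln u\le u-1$, and then maximize the concave function $\mu\mapsto -\mu+\eta B\mu^{\beta}$ over $\mu\ge 0$. Your closed form $g(\mu^{*})=(\eta B)^{1/(1-\beta)}(1-\beta)\beta^{\beta/(1-\beta)}$ is right, and the reduction of $(1-\beta)\beta^{\beta/(1-\beta)}\le 1$ to nonnegativity of the binary entropy is correct (strictly speaking the logarithm of that constant is $\frac{1}{1-\beta}$ times $\beta\ln\beta+(1-\beta)\ln(1-\beta)$, not that quantity itself, but the two have the same sign). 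You also correctly use $\mu\ge 0$, which is needed both for $\mu^{\beta}$ to make sense and for the optimization domain.

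The one soft spot is $\beta=1$. The limiting argument as stated is not licensed: the $(1,B)$-Bernstein condition does not imply the $(\beta,B)$-condition for $\beta<1$ with the \emph{same} constant $B$ (only with $B\,b^{1-\beta}$, say, using $\mu\le b$), so you cannot simply ``let $\beta\uparrow 1$ in the bound just established.'' Fortunately no limit is needed: for $\beta=1$ your own intermediate bound reads $\rho(\eta)\le \mu(\eta B-1)$, which is $\le 0$ whenever $\eta B\le 1$, matching the limiting interpretation of $(B\eta)^{1/(1-\beta)}$ as $0$ there, and the claim is vacuous ($+\infty$) when $\eta B>1$. With that one-line replacement the proof is complete.
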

	\begin{lemma}{\rmbf{\citep{CesaBianchiL06}}}
		\label{lem:cesa}
		Let $b>0$, and suppose that $X_h \in[-b,b]$ almost surely, for all $h\in \cH$. Then
		\begin{align}
		\rho(\eta) \leq \frac{\eta b^2}{2}, \quad \text{for all $\eta \in \reals$}.\nonumber
		\end{align}
	\end{lemma}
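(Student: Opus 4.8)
The plan is to read Lemma~\ref{lem:cesa} as an instance of Hoeffding's lemma applied to the bounded excess-loss variable $X_h(Z)$, together with the defining property of the risk minimizer $h_*$. First I would recall Hoeffding's lemma in the form: for any random variable $W$ with $W\in[-b,b]$ almost surely and any $s\in\reals$, $\ln\E[e^{s(W-\E[W])}]\le s^2 b^2/2$, the constant $b^2/2 = (2b)^2/8$ arising because the range of $W$ has width $2b$. (Should one prefer a self-contained argument rather than citing the lemma, it follows by using convexity of $t\mapsto e^{st}$ on $[-b,b]$ to write $e^{st}\le\tfrac{b-t}{2b}e^{-sb}+\tfrac{b+t}{2b}e^{sb}$, taking expectations, and bounding the logarithm of the resulting two-point expression by a second-order expansion whose second derivative in $s$ is at most $b^2$.)

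Instantiating with $W=X_h(Z)$ and $s=-\eta$ and rearranging gives $\ln\E_{Z\sim\dist}[e^{-\eta X_h(Z)}]\le -\eta\,\E_{Z\sim\dist}[X_h(Z)] + \eta^2 b^2/2$. Dividing by $\eta>0$ (which is the only range used downstream, in Lemmas~\ref{lem:koolen1} and \ref{lem:koolen2} and in the proof of Theorem~\ref{thm:vnbound}) yields $\rho(\eta)=\tfrac1\eta\ln\E[e^{-\eta X_h(Z)}]\le -\E[X_h(Z)] + \tfrac{\eta b^2}{2}$. Finally, since $h_*\in\arg\inf_{h\in\cH}\E_{Z\sim\dist}[\ell_h(Z)]$, we have $\E[X_h(Z)]=L(h)-L(h_*)\ge 0$, so the first term on the right is nonpositive and may be dropped, leaving $\rho(\eta)\le\eta b^2/2$.

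I do not anticipate a genuine obstacle here: the result is a textbook consequence of Hoeffding's lemma once the two ingredients---boundedness of $X_h$ and nonnegativity of its mean---are identified. The only points meriting a moment's care are the quantitative constant (keeping track of the fact that $X_h$ ranges over an interval of width $2b$, not $b$) and the sign bookkeeping for non-positive $\eta$, which is immaterial for the applications since those only ever invoke the bound for $\eta>0$.
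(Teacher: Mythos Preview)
Your argument is correct and is exactly the standard route: Hoeffding's lemma on the $2b$-width range of $X_h$, division by $\eta>0$, and the observation $\E[X_h(Z)]\ge 0$ from the definition of $h_*$. The paper does not give its own proof of Lemma~\ref{lem:cesa}---it simply cites \cite{CesaBianchiL06}---so there is nothing further to compare; your remark that only $\eta>0$ is ever used downstream (and that the ``all $\eta\in\reals$'' claim is in fact false for $\eta<0$, e.g.\ $X_h\equiv 0$) is accurate.
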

	\begin{proof}[\rmbf{Proof of Theorem \ref{thm:vnbound}}]First we apply the following inequality \begin{align}
		(a-d)^2 \leq 2(a-c)^2 +2 (d-c)^2 \label{eq:addandsubstract}\end{align} 
		which holds for all $a,c,d\in \reals$ to upper bound $V_n$. Let's focus on the first term in the expression of $V_n$, which we denote $V_n^{\text{left}}$: that is,
		\begin{align}
		V_n^{\text{left}} &\coloneqq  \expect{h \sim P_n}{\frac{1}{n} \sum_{i=1}^{m} \left(\ell_{h}(Z_{i}) - \expect{\rv{h} \sim Q(Z_{>i})}{\ell_{\rv{h}} (Z_i)}\right)^2 }. \label{eq:vleft}
		\end{align}
		Letting $X_h(Z)\coloneqq \ell_h(Z)- \ell_{h_*}(Z)$ and applying \eqref{eq:addandsubstract} with $a=\ell_h(Z_i)$, $c=\ell_{h_*}(Z_i)$, and $d = \expect{\rv{h} \sim Q(Z_{>i})}{\ell_{\rv{h}} (Z_i)} \stackrel{*}{=} \expect{\rv{h} \sim Q(Z_{>m})}{\ell_{\rv{h}} (Z_i)}$ (where $\stackrel{*}{=}$ is due to our assumption on $Q$), we get:
		\begin{align}
		V^{\text{left}}_n &\leq  \expect{h \sim P_n}{\frac{2}{n} \sum_{i=1}^{m} X_h(Z_i)^2 } + \frac{2}{n} \sum_{i=1}^{m} \left( \expect{\rv{h} \sim Q(Z_{>m})}{\ell_{\rv{h}} (Z_i)}- \ell_{h_*}(Z_{i}) \right)^2, \nonumber \\
		&\leq \expect{h \sim P_n}{\frac{2}{n} \sum_{i=1}^{m} X_h(Z_i)^2} + \expect{h \sim Q(Z_{>m})}{\frac{2}{n} \sum_{i=1}^{m} X_h(Z_i)^2}. \quad \text{(by Jensen's Inequality)}\label{eq:upper}
		\end{align}
		Let $i\in[m]$, $h \in \cH$, and $\eta \in ]0,1/b[$. Under the $(\beta, B)$-Bernstein condition, Lemmas \ref{lem:koolen1}-\ref{lem:cesa} imply,
		\begin{align}
		\alpha_\eta \cdot X_h(Z_i)^2 \stochleq_{\eta} X_h(Z_i)+ \left(1+\tfrac{b}{2}\right) \left(2B \eta\right)^{\frac{1}{1-\beta}},  \label{eq:koolen}
		\end{align}
		where $\alpha_\eta \coloneqq \eta/(1+\sqrt{1+4\eta^2})$. Now, due to the Bernstein inequality \eqref{eq:ourbernstein}, we have 
		\begin{align}
		X_h(Z_i)& \stochleq_{\eta} \expect{Z'_i \sim \dist}{X_h(Z'_i)} + s_{\eta} \cdot \expect{Z'_i \sim \dist}{X_h(Z'_i)^2}, \quad \where s_{\eta} \coloneqq \eta \cdot \kappa(\eta b),\nonumber \\
		& \stochleq \expect{Z'_i \sim \dist}{X_h(Z'_i)} + s_{\eta}\cdot \expect{Z'_i \sim \dist}{X_h(Z'_i)}^{\beta},  \quad  \text{(by the Bernstein condition)} \nonumber \\ 
		&\stochleq_{\eta} 2 \expect{Z'_i \sim \dist}{X_h(Z'_i)} + a_{\beta}^{\frac{\beta}{1-\beta}} \cdot    (s_{\eta})^{\frac{1}{1-\beta}}, \quad \where a_{\beta}\coloneqq (1-\beta)^{1-\beta}\beta^{\beta}. \label{eq:individual}
		\end{align}
		The last inequality follows by the fact that $z^{\beta} = a_{\beta} \cdot \inf_{\nu >0} \{z/\nu + \nu^{\frac{\beta}{1-\beta}} \} $, for $z\geq 0$ (in our case, we set $\nu = a_{\beta} \cdot s_{\eta}$ to get to \eqref{eq:individual}). By chaining \eqref{eq:koolen} with \eqref{eq:individual} using Proposition \ref{prop:Trans}-(a), we get:
		\begin{align}
		\alpha_\eta \cdot X_h(Z_i)^2 &\stochleq_{\frac{\eta}{2}}\  2 \expect{Z'_i \sim \dist}{X_h(Z'_i)} + a_{\beta}^{\frac{\beta}{1-\beta}} \cdot    (s_{\eta})^{\frac{1}{1-\beta}} + \left(1+\tfrac{b}{2}\right) (2B\eta)^{\frac{1}{1-\beta}}. \nonumber \\
		& \stochleq_{\frac{\eta}{2}}\  2 \expect{Z'_i \sim \dist}{X_h(Z'_i)} + \mathscr{P} \cdot \eta^{\frac{1}{1-\beta}}, \ \text{with}\ \ \mathscr{P}\coloneqq a_{\beta}^{\frac{\beta}{1-\beta}} + \left(1+\tfrac{b}{2}\right)(2B)^{\frac{1}{1-\beta}},   \label{eq:chained}
		\end{align}
		where in the last inequality we used $\kappa(1)\leq 1$. Since \eqref{eq:chained} holds for all $h\in \cH$, it still holds in expectation over $\cH$ with respect to the distribution $Q(Z_{>m})$ (recall that $i\leq m$); 
		\begin{align}
		\label{eq:expectedchained}
		\alpha_\eta\cdot  \expect{h\sim Q(Z_{>m})}{X_h(Z_i)^2} \stochleq_{\frac{\eta}{2}}\  2 \expect{h\sim Q(Z_{>m}) }{\expect{Z'_i \sim \dist}{X_h(Z'_i)}} + \mathscr{P} \cdot \eta^{\frac{1}{1-\beta}}.
		\end{align}
		Since the samples $Z{_{\leq n}}$ are i.i.d, we have $\expect{Z_i \sim \dist}{\ell_h(Z_i)} =\expect{Z_j \sim \dist}{\ell_h(Z_j)}$, for all $i,j\in[m]$. Thus, after summing \eqref{eq:chained} and \eqref{eq:expectedchained}, for $i=1,\dots,m$, using Proposition \ref{prop:Trans}-(b) and dividing by $n$, we get 
		\begin{align}
		\label{eq:chainsum}
		\frac{\alpha_\eta}{n} \sum_{i=1}^m  X_h(Z_i)^2 &\stochleq_{\frac{n \cdot \eta}{2}}\    \expect{Z \sim \dist}{X_h(Z)} + \frac{\mathscr{P}}{2} \cdot \eta^{\frac{1}{1-\beta}},   \\ 
		\expect{h\sim Q(Z_{>m}) }{\frac{\alpha_\eta}{n} \sum_{i=1}^m  X_h(Z_i)^2}& \stochleq_{\frac{n \cdot \eta}{2}}\ \expect{h\sim Q(Z_{>m}) }{ \expect{Z \sim \dist}{X_h(Z)}} + \frac{\mathscr{P}}{2} \cdot \eta^{\frac{1}{1-\beta}}. \quad \text{($m=n/2$)} \label{eq:expectedchainsum}
		\end{align}
		Now we apply PAC-Bayes (Proposition \ref{prop:donsker}) to \eqref{eq:chainsum}, with prior $P(Z_{>m})$ and posterior $P_n$, and obtain:
		\begin{align}
		\label{eq:pacchainsum}
		\expect{h\sim P_n }{\frac{\alpha_\eta}{n} \sum_{i=1}^m  X_h(Z_i)^2} \stochleq_{\frac{n \cdot \eta}{2}}\   \expect{h\sim P_n }{ \expect{Z \sim \dist}{X_h(Z)}} +  \frac{\mathscr{P}}{2} \cdot  \eta^{\frac{1}{1-\beta}} + \frac{2\KL(P_n\pipes P(Z_{>m}))}{\eta\cdot n}. 
		\end{align}
		Note that the upper-bound on $V^{\text{left}}_n$ in \eqref{eq:upper} is the sum of the left-hand sides of \eqref{eq:expectedchainsum} and \eqref{eq:pacchainsum} divided by $\alpha_\eta/2$. From now on, we restrict $\eta$ to the range $]0,1/(2b)[$ and define $$\mathscr{A}_{\eta} \coloneqq \tfrac{2c_{\eta}}{\alpha_\eta}\leq 2\vartheta\left(\tfrac{1}{2}\right)\cdot \left(1+\sqrt{1+\tfrac{1}{b^2}}\right) \eqqcolon \mathscr{A}, \ \ \quad   \eta \in\left]0,\tfrac{1}{2b}\right[.$$ 
		Chaining \eqref{eq:expectedchainsum} and \eqref{eq:pacchainsum} using Proposition~\ref{prop:Trans}-(a) and multiplying throughout by $\mathscr{A}_{\eta}$, yields
		\begin{align}
		c_{\eta} \cdot V_n^{\text{left}} \stochleq_{\frac{ n \eta }{4 \mathscr{A}_{\eta}}} \mathscr{A}\cdot\left(\risk(P_n)+\risk(Q(Z_{>m}))\right) + \mathscr{P} \mathscr{A} \eta^{\frac{1}{1-\beta}} +\frac{2\mathscr{A}\cdot \KL(P_n\pipes P(Z_{>m}))}{\eta \cdot n}. \label{eq:protononrandesi}
		\end{align}
		By a symmetric argument, a version of \eqref{eq:protononrandesi}, with $Q(Z_{>m})$ [resp. $P(Z_{>m})$] replaced by $Q(Z_{\leq m})$ [resp. $P(Z_{\leq m})$], holds for $ V^{\text{right}}_n \coloneqq V_n - V^{\text{left}}_n$. Using Proposition \ref{prop:Trans}-(a) again, to chain the ESI inequalities of $c_{\eta} \cdot V_n^{\text{left}}$ and $c_{\eta} \cdot V_n^{\text{right}}$, we obtain:
		\begin{align}
		c_{\eta} \cdot V_n \stochleq_{\frac{ n \eta }{8\mathscr{A}_{\eta}}} \mathscr{A} \cdot\left(2\risk(P_n)+\risk(Q_{\leq m})+\risk(Q_{>m})\right) +2 \mathscr{P}\mathscr{A} \eta^{\frac{1}{1-\beta}} +\frac{2\mathscr{A} \cdot \comp_n}{\eta \cdot n}, \label{eq:nonrandesi}
		\end{align}
		where $Q_{>m}\coloneqq Q(Z_{>m})$ and $Q_{\leq m} \coloneqq Q(Z_{\leq m})$.
		Let $\delta \in ]0,1[$, and $\pi$ and $\cG$ be as in \eqref{eq:grid}. Applying Proposition \ref{prop:randeta} to \eqref{eq:nonrandesi} to obtain the corresponding ESI inequality with a random estimator $\estimate{\eta}= \estimate{\eta}(Z_{\leq n})$ with support on $\cG$, and then applying Proposition \ref{prop:drop}, we get, with probability at least $1-\delta$, 
		\begin{align}
		c_{\estimate\eta} \cdot V_n \leq  \mathscr{A}\cdot\left(2\risk(P_n)+\risk(Q_{\leq m})+\risk(Q_{>m})\right)  + 2\mathscr{P}\mathscr{A} \estimate\eta^{\frac{1}{1-\beta}} +\frac{2\mathscr{A} \cdot \comp_n+ 8  \eA \ln \frac{|\cG|}{\delta }}{\estimate\eta \cdot n}. \label{eq:preadd}
		\end{align}
		Now adding $(\comp_n + \varepsilon_{\delta,n})/(\estimate\eta \cdot n)$ on both sides of \eqref{eq:preadd} and choosing the estimator $\estimate\eta$ optimally in the closure of $\cG$ yields the desired result. 
	\end{proof}

	\section{Proof of Lemma \ref{lem:bernsand}}
	\begin{proof}
		\label{bernsandproof}
		Part (a) of the lemma was shown in the main body of the paper\footnote{The proof was inspired by the proof of Theorem 4 in \cite{howard2018uniform}.}. 
		Thus, we only prove part (b); we will show a slight extension, namely that for all
		$0 < u < 1$, for all $\beta > 0, u> 0$,
		\begin{equation}\label{eq:starterd}
		\sup_{\rho \leq u } \ \ \ \sup_{P: \E_P[X] = \rho, P(X \leq u)= 1}\ \ \  \E_{X \sim P} \left[e^{\beta E[X] - X - c X^2}\right] > 1 \text{\ if\ } 
		0 < c < \vartheta(u) \text{\ or \ } \beta \neq 1.
		\end{equation}
		The statement of the lemma (\ref{eq:blade}) follows as the special case for $\beta=1$, by replacing $X$ by $\eta X$ and setting $u$ to $u := \eta b < 1$.
		
		We prove this by considering the set of distributions satisfying the constraint $\E[X] = \rho$ that are supported on at most two points, 
		$${\cal P}_{\underline{x},\rho,\bar{x},u} =
		\{P: P\{\underline{x}\} + P\{\bar{x}\} = 1; \E_P[X] = \rho,
		\underline{x} \leq \bar{x} \leq u\},$$ and showing that
		\begin{equation}\label{eq:starterb}
		\sup_{\rho \leq u } \ \ \ \sup_{P \in {\cal P}_{\underline{x},\rho,\bar{x},u}
		} g_{c,\beta}(P),\ \text{\ with \ }  \  g_{c,\beta}(P) := \E_{X \sim P} \left[e^{\beta \rho - X - c X^2}\right] 
		\end{equation}
		is larger than $1$.  We first show
		that , for any $\beta \neq 1$, we can choose such a $P$ such that
		$\sup_{P \in {\cal P}_{\underline{x},\rho,\bar{x},u}
		} g_{c,\beta}(P) > 1$. To see this,  write $g_{c,\beta}(P)$ as
		$$
		p\cdot e^{-\underline{x} + \beta \rho - c \underline{x}^2 } + (1-p) e^{-\bar{x} + \beta \rho - c \bar{x}^2}
		$$
		with $\rho = \E_P[X]$. We need to  maximize this over $\rho = p \underline{x} + (1-p) \bar{x}$,
		so that in the end, we want to maximize over $0 \leq p \leq 1, \underline{u} \leq \underline{x} \leq \bar{x} \leq u$, the expression
		$$
		p\cdot e^{-\underline{x} + \beta( p \underline{x} + (1-p) \bar{x}) - c \underline{x}^2 } + (1-p) e^{-\bar{x} + \beta( p \underline{x} +(1-p) \bar{x})  - c \bar{x}^2}
		$$
		Now we write $\underline{x} = \bar{x} - a$ for some $a \geq 0$. The expression becomes
		$$
		p \cdot e^{- \beta p a +(\beta -1) \bar{x} + a  - c(\bar{x} -a)^2}
		+ (1-p) \cdot e^{- \beta pa + (\beta -1) \bar{x} - c \bar{x}^2}
		$$
		which is equal to 
		\begin{equation}\label{eq:newbasis}
		f(p,a,\bar{x}) := e^{- c \bar{x}^2 -\beta p a +(\beta -1) \bar{x}} 
		\left( p e^{a + 2 c a \bar{x} - ca^2}
		+ 1-p \right) \underset{\text{if $\beta = 1$}}{=} 
		e^{c \bar{x}^2 -pa } \left(
		p e^{a + 2 c a \bar{x} - ca^2} + 1-p \right),  \end{equation}
		where the dependency of $f$ on $c$ and $\beta$ is suppressed in the notation. At $p=1$ and $p= 0$, this simplifies to (using also $\underline{x}$ again)
		\begin{align}
		f(1,a,\bar{x}) & = e^{- c \bar{x}^2 -\beta a +(\beta -1) \bar{x}} 
		\left( e^{a + 2 c a \bar{x} - ca^2} \label{eq:fboundary1}
		\right) = e^{- c \underline{x}^2 + (\beta-1) \underline{x}}
		\underset{\text{if $\beta = 1$}}{=}  e^{- c \underline{x}^2 } \\ 
		f(0,a,\bar{x}) & = e^{- c \bar{x}^2  +(\beta -1) \bar{x}}
		\underset{\text{if $\beta = 1$}}{=}  e^{- c \bar{x}^2 }. \label{eq:fboundary0}
		\end{align}
		If $\beta < 1$, we can choose $\underline{x} = \bar{x} -a$ negative
		yet very close to $0$ making $f(1,a,\bar{x})> 1$; if $\beta > 1$, we
		can choose $\bar{x}$ positive yet very close to $0$ making
		$f(0,a,\bar{x}) > 1$. Thus, $\sup g_{c,\beta}(P)$ can be made larger
		than $1$ by $P$ satisfying the constraint if $\beta \neq 1$. This
		shows (\ref{eq:starterd}) for the case $\beta \neq 1$.  Hence, from
		now on we restrict to the case $\beta = 1$; we will further restrict
		to $\underline{x}$ and $\bar{x}$ such that
		$\underline{x} \leq 0 \leq \bar{x}$ so $\bar{x} \leq a$. We will
		determine the maximum over (\ref{eq:newbasis}) for $a \geq \bar{x}$
		and $0 \leq p \leq 1$, for each given $0 \leq \bar{x} \leq u$.  The
		partial derivatives to $p$ and $a$ are:
		\begin{align}
		\frac{\partial}{\partial p} f(p,a,\bar{x}) & = 
		e^{- c \bar{x}^2 -p a } 
		\left( \; \left( e^{a + 2 c a \bar{x} - ca^2}
		-1 \right) 
		- a \cdot
		\left( p e^{a + 2 c a \bar{x} - ca^2} + (1-p) \right) \; \right) \nonumber \\ & = 
		e^{- c \bar{x}^2 -p a } 
		\left( \;  
		e^{a + 2 c a \bar{x} - ca^2}  (1  - ap) -1 -a + a p   \right)
		\label{eq:dab} \\
		\frac{\partial}{\partial a} f(p,a,\bar{x}) & = 
		-  p \cdot
		e^{- c \bar{x}^2 -p a } 
		\left( p e^{a + 2 c a \bar{x} - ca^2}
		+ (1-p) \right) + \nonumber \\ & + e^{- c \bar{x}^2 -p a } \cdot p \cdot e^{a + 2 c a \bar{x} - ca^2} \cdot \left(1 + 2c \bar{x} - 2c a \right)
		\nonumber \\ & =  p (1-p)  \cdot e^{- c \bar{x}^2 - p a }  \cdot\left(
		-  1  + e^{a + 2 c a \bar{x} - ca^2}(1 + 2c \frac{\bar{x} - a}{1-p} ) \right). 
		\label{eq:dpb}
		\end{align}
		At $a=\bar{x}$ (\emph{i.e.} $\underline{x} = 0$), $f(p,a,\bar{x})$ simplifies to
		$$
		f(p,\bar{x},\bar{x}) = e^{- c \bar{x}^2 - p \bar{x}} \cdot (p e^{\bar{x} + c \bar{x}^2} + (1-p)) \text{\ so \ }
		f(1,\bar{x},\bar{x}) =1 
		$$
		and the partial derivative to $p$ at $(p,a,\bar{x}) = (1, \bar{x}, \bar{x})$ becomes
		\begin{equation}\label{eq:firstpar}
		e^{- c \bar{x}^2 - \bar{x} } 
		\left( (e^{\bar{x} +  c (\bar{x})^2} -1) 
		- \bar{x} e^{\bar{x} + c (\bar{x})^2} \right) = 1 - e^{- c \bar{x}^2 - \bar{x} }  - \bar{x}.
		\end{equation}
		If (\ref{eq:firstpar}) is negative, we can take $a = \bar{x}$ and $p$ slightly smaller than $1$ to get $f(p,a,\bar{x}) > 1$. This happens if and only if  
		$c$ is smaller than
		\begin{equation}\label{eq:boundy}
		\frac{- \ln (1- \bar{x}) - \bar{x}}{\bar{x}^2} = \vartheta (\bar{x}). 
		\end{equation}
		Thus, by taking $\underline{x}=0$ and $\bar{x} = a = u$, and $p$ slightly smaller than $1$ again, we get $f(p,a,\bar{x}) > 1$ if  $c < \vartheta(u)$; this shows (\ref{eq:starterd}) for the case $\beta=1$; the result is proved.
	\end{proof}

	\section{Comparison Between ``Bernstein'' Inequalities}
	\label{supp:unexpVSexp}
	\paragraph{Discussion and Proof of Our Version of Bernstein's Inequality (\ref{eq:ourbernstein}).}
	Standard versions of Bernstein's inequality (see \cite{CesaBianchiL06}, and \cite[Lemma 5.6]{erven2015fast}) can also be brought in ESI notation. 
	In particular, compared with our version they express the inequality in terms of the random variable $Y=-X$, which is then upper bounded by $b$; more importantly, they have the second moment rather than the variance on the right-hand side, resulting in a slightly worse  multiplicative factor $\kappa(2\eta b)$ instead of our $\kappa(\eta b)$; the proof is a standard one (see \cite[Lemma A.4]{CesaBianchiL06}) with trivial modifications: let $U := \eta X$ and $\bar{u} \coloneqq \eta b$. Since $\kappa(u)$ is nondecreasing in $u$ and $U \leq \bar{u}$, we have 
	$$
	\frac{e^U- U - 1}{U^2} \leq \frac{e^{\bar{u}}  - \bar{u} - 1}{\bar{u}^2},$$
	and hence 
	${e^U- U - 1} \leq \kappa(\bar{u}) U^2$. Taking expectation on both sides and using that $\ln \E[e^U] \leq \E[U] -1$, we get
	$\ln \E\left[e^{U} \right] - \E[U] \leq \kappa(\bar{u})\E[U^2]$. The result follows by exponentiating, rearranging, and using the ESI definition. 
	
	\paragraph{Comparison Between Un-expected and Empirical Bernstein Inequalities.} The proof of the following proposition demonstrates how the un-expected Bernstein inequality in Lemma \ref{lem:bernsand} together with the standard Bernstein inequality \eqref{eq:ourbernstein} imply a version of the empirical Bernstein inequality in \citep[Theorem 4]{maurer2009empirical} with slightly worse factors. However, the latter inequality cannot be used to derive our main result --- we do really require our new inequality to show Theorem~\ref{thm:main}, since we need to ``chain'' it to work with samples of length $n$ rather than $1$ in a different way. In the next proposition, we will use the following grid $\cG$ and distribution $\pi$,
	\begin{align} \label{eq:grid2} \cG \coloneqq \left\{\tfrac{1}{\nu } , \dots, \tfrac{1}{\nu^K }: K \coloneqq \Ceil{\log_\nu \left( \sqrt{\tfrac{n}{2\ln \frac{2}{\delta} }}\right) }\right\},  \ \ \text{and} \ \ \text{ $\pi=$ uniform distribution over $\cG$}. \end{align}
	for $\nu>0$. To simplify the presentation, we will use $\nu=2$ in the next proposition, albeit this may not be the optimal choice. 
	\begin{proposition}
		\label{unversusemp}
		Let $\cG$ be as in \eqref{eq:grid2} with $\rho=2$, and $Z, Z_1,\dots, Z_n$ be i.i.d random variables taking values in $[0,1]$. Then, for all $\delta \in]0,1[$, with probability at least $1-\delta$,  
		\begin{align}
		\Exp[Z] - \frac{1}{n} \sum_{i=1}^n Z_i  &\leq \left( 3\sqrt{\frac{\var_n[Z] \cdot \ln \frac{2|\cG|}{\delta}}{2n }} + \frac{11\ln \frac{2|\cG|}{\delta}}{10n} \right)\vee \frac{11 \ln \frac{2|\cG|}{\delta}}{4n} +\frac{c_{1/2}\cdot \ln \frac{2}{\delta}}{2  n}, \nonumber
		\end{align}
		where $\var_n[Z]\coloneqq  \frac{1}{n}\sum_{i=1}^n \left(Z_i - \frac{1}{n}\sum_{j=1}^n Z_j\right)^2$ is the empirical variance, $c_{1/2} \coloneqq \vartheta(1/2)/2$, and $\vartheta$ as in  Lemma \ref{lem:bernsand}. 
	\end{proposition}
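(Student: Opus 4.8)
The plan is to combine the two forms of Bernstein inequality available to us---the "un-expected" one (Lemma~\ref{lem:bernsand}) and the standard one \eqref{eq:ourbernstein}---chained via the ESI machinery of Section~\ref{sec:esi}, and then relate the resulting second-moment term $\frac1n\sum Z_i^2$ to the empirical variance $\var_n[Z]$. Concretely, first I would apply the standard Bernstein ESI \eqref{eq:ourbernstein} to the i.i.d.\ sequence $-(Z_i - \E[Z])$ (each bounded above by $1$), which after summing over $i$ using Proposition~\ref{prop:Trans}-(a) gives, for fixed $\eta \in ]0,1[$,
\begin{align}
\E[Z] - \tfrac1n\textstyle\sum_{i=1}^n Z_i \stochleq_{\eta n} s_\eta \cdot \E[Z^2], \qquad s_\eta := \eta \kappa(\eta).\nonumber
\end{align}
Separately, I would apply the un-expected Bernstein Lemma~\ref{lem:bernsand} to $-Z_i$ to control $\E[Z^2]$ by the empirical second moment $\tfrac1n\sum Z_i^2$; chaining these two ESIs (again Proposition~\ref{prop:Trans}-(a)) eliminates the population second moment and leaves everything data-dependent, at the cost of an extra additive term involving $c_{1/2}$ from the second chaining step---this is the source of the final $c_{1/2}\ln(2/\delta)/(2n)$ summand.

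Next I would tune $\eta$ over the geometric grid $\cG$ of \eqref{eq:grid2} (with $\nu=2$) using Proposition~\ref{prop:randeta}, paying a $\ln|\cG|$ rather than a $\ln$ term, and then invoke Proposition~\ref{prop:drop} to pass from the ESI to a statement holding with probability at least $1-\delta$ (splitting $\delta$ between the grid-tuned part and the single-$\eta=1/2$ part via a union bound, which explains the $2|\cG|$ and the separate $\ln(2/\delta)$). The relaxations $\kappa(\eta)\le \tfrac12 + \tfrac{11}{20}\eta$ for $\eta\le 1/2$ and the identity $\tfrac1n\sum Z_i^2 = \var_n[Z] + (\tfrac1n\sum Z_i)^2$, together with $(\tfrac1n\sum Z_i)^2 \le \tfrac1n\sum Z_i$, let me convert the $\tilde O(1/\sqrt n)$ term in terms of the second moment into one in terms of $\var_n[Z]$ plus lower-order corrections; optimizing the grid choice so that the infimum over $\eta$ is essentially attained in $[\min\cG,\max\cG]$ produces the $\sqrt{\var_n[Z]\ln(2|\cG|/\delta)/n}$ main term with constant $3/\sqrt2$ and the companion $11\ln(2|\cG|/\delta)/(10n)$ and $11\ln(2|\cG|/\delta)/(4n)$ terms (the max-of-two arising from whether the optimal grid point is interior or at the boundary $\eta=1/2$).

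The main obstacle I expect is bookkeeping the constants so that they come out exactly as stated: keeping track of which $\delta$ (or $\delta/2$) goes with the grid-tuned ESI versus the fixed-$\eta$ correction, correctly bounding $\kappa$ and $\vartheta$ on the relevant ranges, and---most delicately---handling the case split when the unconstrained optimal $\eta$ falls outside $[\min\cG,\max\cG]$, which is what forces the $\vee\, \tfrac{11\ln(2|\cG|/\delta)}{4n}$ term. The structural steps (two Bernstein ESIs, two chainings, grid-tuning, ESI-to-probability) are routine given the tools already assembled in Section~\ref{sec:esi}; the work is entirely in the inequality-juggling to land on the advertised numerical form, and in verifying that replacing the second moment by the empirical variance does not cost more than the stated remainders.
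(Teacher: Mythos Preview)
Your plan diverges from the paper's proof in a way that introduces a real gap. The paper does \emph{not} combine the standard Bernstein inequality \eqref{eq:ourbernstein} with the un-expected one; it applies only Lemma~\ref{lem:bernsand}, but to the \emph{centered} variables $X_i = Z_i - \E[Z]$. This yields, after chaining and dividing by $n$,
\[
\E[Z]-\tfrac1n\sum_i Z_i \;\stochleq_{n\eta}\; \tfrac{c_\eta}{n}\sum_i (Z_i-\E[Z])^2
\;=\; c_\eta\Bigl(\var_n[Z] + \bigl(\E[Z]-\tfrac1n\sum_i Z_i\bigr)^2\Bigr),
\]
so the extra term is the \emph{squared deviation} $(\E[Z]-\bar Z_n)^2$, which Hoeffding bounds by $\ln(1/\delta)/(2n)$ with high probability. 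After the grid tuning and a union bound this produces the $c_{1/2}\ln(2/\delta)/(2n)$ summand (using $c_{\hat\eta}\le c_{1/2}$ since $\hat\eta\le 1/2$), and the $\vee$ structure arises from the case split on whether the unconstrained optimal $\hat\eta_*$ exceeds $1$.

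By contrast, your route lands on $\tfrac1n\sum Z_i^2$, and the decomposition you propose, $\tfrac1n\sum Z_i^2=\var_n[Z]+(\bar Z_n)^2$ followed by $(\bar Z_n)^2\le \bar Z_n$, does \emph{not} yield a lower-order correction: $\bar Z_n$ is typically $\Theta(1)$, not $O(1/n)$. So after multiplying by $c_{\hat\eta}\asymp 1/\sqrt{n}$ at the optimized $\hat\eta$, you would be left with an extra $O(1/\sqrt{n})$ term rather than the stated $O(1/n)$ remainder. This is the missing idea: centering inside the square so that the bias-variance identity produces $(\E[Z]-\bar Z_n)^2$ (which concentrates at rate $1/n$) instead of $(\bar Z_n)^2$ (which does not). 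Your attribution of the $c_{1/2}$ term to ``the second chaining step'' is accordingly off---in the paper it comes from Hoeffding applied to the squared deviation, not from an ESI chaining.
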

	\begin{proof}
		Let $\delta \in]0,1[$. Applying Lemma \ref{lem:bernsand} to $X_i=Z_i - \Exp[Z]$, for $i\in [n]$, we get, for all $0<\eta<1/2$,
		\begin{align}
		\Exp\left[ Z \right] - Z_i \stochleq_{\eta} c_{\eta}\cdot (Z_i - \Exp[Z])^2, \quad \text{where $c_{\eta} \coloneqq \eta \cdot \vartheta(\eta )$}. \label{individual}
		\end{align}
		Applying Proposition \ref{prop:Trans}-(b) to chain \eqref{individual} for $i=1,\dots,n$, then dividing by $n$ yields
		\begin{align}
		\Exp[Z] - \frac{1}{n} \sum_{i=1}^n Z_i  &\stochleq_{n \eta}  \frac{c_{\eta} }{n}\sum_{i=1}^n  \left(Z_i - \Exp[Z]\right)^2,  \label{summed} \\
		&= c_{\eta} \cdot \var_n[Z] + c_{\eta}\cdot \left( \Exp[Z]- \frac{1}{n} \sum_{i=1}^n Z_i\right)^2, \label{eq:biasvar}
		\end{align}
		where the equality follows from the standard bias-variance decomposition. 
		Let $\cG$ and $\pi$ be as in \eqref{eq:grid2}, and let $\estimate\eta =\estimate\eta(Z_{\leq n})$ be any random estimator with support on $\cG$. By Proposition \ref{prop:randeta}, a version of \eqref{eq:biasvar} with $\eta$ is replaced by $\estimate\eta$ and $\ln (|\cG|)/(n \estimate\eta)$ added to its RHS also holds. By applying Proposition \ref{prop:drop} to this new inequality, we get, with probability at least $1-\delta$,
		\begin{align}
		\Exp[Z] - \frac{1}{n} \sum_{i=1}^n Z_i  &\leq  c_{\estimate\eta} \cdot \var_n[Z] + \frac{\ln \frac{|\cG|}{\delta}}{n \cdot \estimate\eta } + c_{\estimate\eta}\cdot \left(\Exp[Z]- \frac{1}{n} \sum_{i=1}^n Z_i\right)^2. \label{eq:withhighprob}
		\end{align}
		Now using Hoeffding's inequality \citep[Theorem 3]{maurer2009empirical}, we also have 
		\begin{align} 
		\left( \Exp[Z]- \frac{1}{n} \sum_{i=1}^n Z_i\right)^2 \leq \frac{\ln \frac{1}{\delta}}{2n}, \label{eq:hoeffding}
		\end{align}
		with probability at least $1-\delta$. Thus, by combining \eqref{eq:withhighprob} and \eqref{eq:hoeffding} via the union bound, we get that, with probability at least $1-\delta$,
		\begin{align}
		\Exp[Z] - \frac{1}{n} \sum_{i=1}^n Z_i  &\leq  \left( c_{\estimate\eta} \cdot \var_n[Z] + \frac{\ln \frac{2|\cG|}{\delta}}{n \cdot \estimate\eta } \right)+ \frac{c_{\estimate\eta} \cdot \ln \frac{2}{\delta} }{2 n}. \label{eq:tradeoff}
		\end{align}
		
		We now use the fact that for all $\eta \in ]0,1/2[$,
		\begin{align}
		c_{\eta} = \eta \cdot \vartheta(\eta) \leq \frac{\eta}{2} + \frac{11  \eta^2}{20}.  \label{eq:boundceta}
		\end{align}
		Let $\hat\eta_* \in [0,+\infty]$ be the un-constrained estimator defined by 
		\begin{align}
		\hat\eta_* \coloneqq \sqrt{\frac{2 \ln \frac{2|\cG|}{\delta}}{\var_n[Z] \cdot n}}. \nonumber
		\end{align}
		Note that by our choice of $\cG$ in \eqref{eq:grid2}, we always have $\hat\eta_* \geq \min \cG$.
		Let $\hat\eta \in ([\hat\eta_*/2, \hat\eta_*] \cap \cG) \neq \emptyset$, if $\hat\eta_* \leq 1$, and $\hat\eta =1/2$, otherwise. In the first case (\emph{i.e.} when $\hat\eta_*\leq 1$), substituting $\eta$ for $\hat\eta \in ([\hat\eta_*/2,\hat\eta_*] \cap \cG)$ in the expression between brackets in \eqref{eq:tradeoff}, and using the fact that $\hat\eta_*/2  \leq\hat\eta\leq \hat\eta_*$ and \eqref{eq:boundceta}, gives
		\begin{align}
		\label{eq:first}
		c_{\hat\eta} \cdot \var_n[Z] + \frac{\ln \frac{2 |\cG|}{\delta} }{\hat\eta \cdot n}   \leq (1+2)\sqrt{\frac{\var_n[Z] \cdot \ln \frac{2|\cG|}{\delta}}{2 n}} + \frac{11 \cdot \ln \frac{2|\cG|}{\delta}}{10 n}.
		\end{align}
		Now for the case where $\hat\eta_*\geq 1$, we substitute $\eta$ for $\hat\eta = 1/2$ in the expression between brackets in \eqref{eq:tradeoff}, and use \eqref{eq:boundceta} and the fact that $1 \leq \hat\eta_*= \sqrt{2 \ln (2|\cG|/\delta)/(\var_n[Z] \cdot n)}$, we get:
		\begin{align}
		c_{\hat\eta} \cdot \var_n[Z] +\frac{\ln \frac{2|\cG|}{\delta}}{\hat\eta \cdot n}& \leq \left(\frac{\hat\eta}{2} +\frac{11 \hat\eta^2}{20}\right)\cdot \var_n[Z] + \frac{2  \cdot \ln \frac{2|\cG|}{\delta}}{n}, \nonumber \\
		& \leq \left(\frac{\hat\eta}{2} +\frac{11  \hat\eta^2}{20}\right)\cdot \frac{2  \ln \frac{2|\cG|}{\delta}}{n} + \frac{2  \cdot \ln \frac{2|\cG|}{\delta}}{n},    \quad \text{(due to $\hat\eta_* \geq 1$)} \nonumber\\
		& =  \frac{11  \ln \frac{2|\cG|}{\delta}}{4n}, \quad \text{($\hat\eta = 1/2$)} \label{eq:last}
		\end{align}
		Combining \eqref{eq:tradeoff}, with \eqref{eq:first} and \eqref{eq:last} yields the desired results.
	\end{proof}
	\section{Additional Experiments}
	\label{supp:moreexp}
	\subsection{Informed Priors}
	\label{app:inform}
	In this section, we run the same experiments as in Section \ref{sec:experiments} of the main body, except for the following changes 
	\begin{itemize}
		\item For Maurer's bound, we use the version in our Lemma \ref{lem:fwdbwdmaurer} with informed priors.
		\item For the TS and Catoni bounds, we build a prior from the first half of the data (\emph{i.e.} we replace $P_0$ by $P(Z_{\leq m})$, where $m=n/2$) and use it to evaluate the bounds on the second half of the data. In this case, the ``posterior'' distribution is $P(Z_{> m})$, and thus the term $\KL(P_n \pipes P_0)$ is replaced by $\KL(P(Z_{>m})\pipes P(Z_{\leq m}))$.
	\end{itemize}
	Recall that $P(Z_{> m}) \equiv \cN(\hat{h}(Z_{>m}),\sigma^2 I_d)$, $P(Z_{\leq m}) \equiv \cN(\hat{h}(Z_{\leq m}),\sigma^2 I_d)$, and $P(Z_{\leq n}) \equiv \cN(\hat{h}(Z_{\leq n}),\sigma^2 I_d)$, where the variance $\sigma^2$ is learned from a geometric grid (see Section \ref{sec:experiments}); our own bound is not affected by any of these changes. The results for the synthetic and UCI datasets are reported in Figure \ref{fig:synthetic2} and Table \ref{tab:UCI2}, respectively. 
	
	\begin{figure}[ht]
		\includegraphics[clip, width=0.6\linewidth]{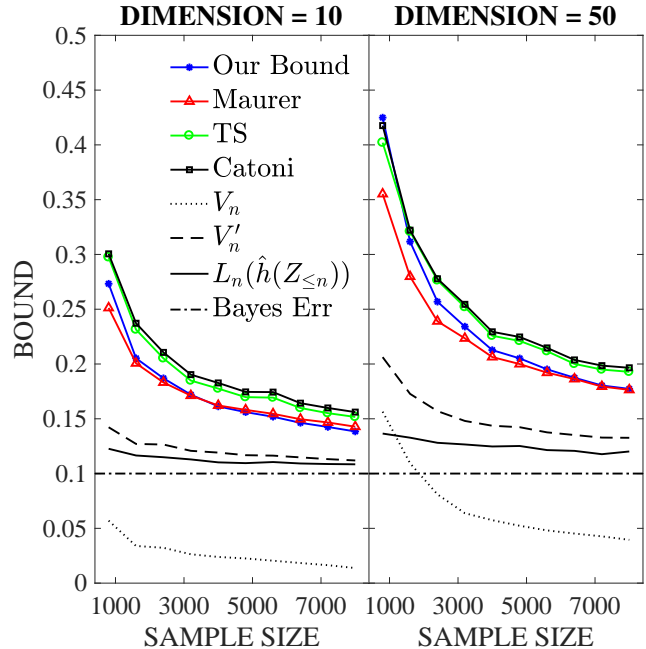} 
		\caption{Results for the synthetic data with informed priors.}
		\label{fig:synthetic2}
	\end{figure}
	
	\begin{table}[ht]
		\begin{tabular}{c c c c  c c c c}
			\hline
			Dataset & n & d   & Test error of $\hat{h}$    & Our  & Maurer  & TS & Catoni \\
			\hline
			Haberman & 244 & 3 & 0.272 & 0.52 & 0.459 & 0.501 & 0.55\\
			\hline
			Breast-C. &560 & 9 & 0.068 & 0.185 & 0.164 & 0.215 & 0.219 \\
			\hline
			Tic-Tac-Toe & 766& 27 & 0.046 & 0.19 & 0.152 & 0.202 & 0.199 \\
			\hline
			Bank-note  & 1098 & 4 & 0.058 & 0.125 & 0.117 & 0.136 & 0.143\\
			\hline
			kr-vs-kp & 2556& 73 & 0.044 & 0.107 & 0.102 & 0.123 & 0.127 \\
			\hline
			Spam-base & 3680& 57 & 0.173 & 0.293 & 0.284 & 0.317 & 0.323\\		
			\hline
			Mushroom& 6500 & 116 & 0.002 & 0.018 & 0.016 & 0.023 & 0.024\\
			\hline 
			Adult & 24130&108 & 0.168 & 0.195 & 0.198 & 0.2 & 0.203\\
			\hline
		\end{tabular}
		\caption{Results for the UCI datasets.}
		\label{tab:UCI2}
	\end{table}
	Though our bound still performs better than Catoni's and TS, Maurer's bound in Lemma \ref{lem:fwdbwdmaurer} tends to be slightly tighter than ours, especially when the sample size is small. We note, however, that the advantage of our bound has not been fully leveraged here; our bound in its full generality in Theorem \ref{thm:main} allows one to use ``online posteriors'' $(Q(Z_{>i}))$ and $(Q(Z_{< j}))$ in the $V_n$ term which converge to the one based on the full sample, \emph{i.e.} $Q(Z_{\leq n})$. We expect this to substantially improve our bound. However, we did not experiment with this due to computational reasons. 
	
	\subsection{Maurer's Bound: Informed Versus Uninformed Priors}
	In this section, we compare the performance of Maurer's bound with and without informed priors (\emph{i.e.} \eqref{eq:maurer} and \eqref{eq:maurerinformed}, respectively) on synthetic data in the same setting as Section \ref{sec:experiments}. 
	\begin{figure}[ht]
		\includegraphics[clip, width=0.6\linewidth]{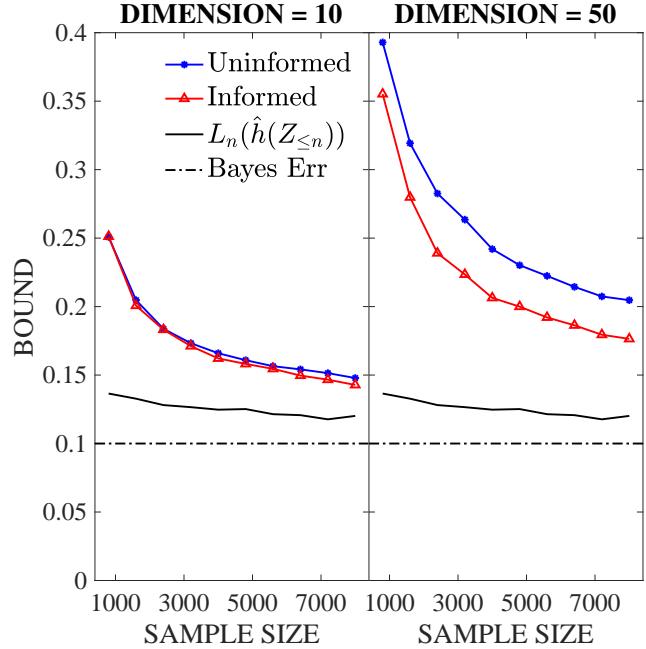} 
		\caption{Results for the synthetic data: (Blue curve) Uninformed Maurer's bound \eqref{eq:maurer}; (Red curve) Informed Maurer's bound \eqref{eq:maurerinformed}.}
		\label{fig:informedversusuninformed}
	\end{figure}
	From Figure \ref{fig:informedversusuninformed}, we see that using informed priors as in Lemma \ref{lem:fwdbwdmaurer} substantially improves Maurer's bound.  
	
	\subsection{Varying the Bayes Error and Bayes Act}
	In this subsection, we run the same synthetic experiment as in Subsection \eqref{app:inform} (\emph{i.e.} using informed priors for all bounds), except for the following changes: 
	\begin{itemize}
		\item We vary the Bayes error by varying the level of noise: we flip the labels with probability either $0.05$, $0.1$, or $0.2$ (note that in Section \ref{sec:experiments} we flipped labels with probability $0.1$).
		\item In each case, we generate the synthetic data using a randomly generated $h_*$ with coordinates uniformly sampled in the interval $[0,1]$. The reported results in Figures \ref{fig:synthetic_005_half}-\ref{fig:synthetic_010_half} are averages over 10 runs for each tested sample size.
	\end{itemize}

	\begin{figure}
		\begin{floatrow}
			\ffigbox{%
				\includegraphics[trim=0cm 0cm 0cm 0cm, clip, width=1\linewidth]{synthetic_005_half.eps} %trim=1cm 1.3cm 1cm 1.3cm, clip,  
			}{%
				\caption{Results for the synthetic data with informed priors, randomly generated Bayes act, and Bayes error set to 0.05.}%
				\label{fig:synthetic_005_half}
			}
			\hspace{-0.cm}
			\ffigbox{%
				\includegraphics[trim=0cm 0cm 0cm 0cm, clip, width=1\linewidth]{synthetic_020_half.eps} %trim=1cm 1.3cm 1cm 1.3cm, clip,  
			}{%
				\caption{Results for the synthetic data with informed priors, randomly generated Bayes act, and Bayes error set to 0.2.}%
				\label{fig:synthetic_020_half}
			}
		\end{floatrow}
	\end{figure}
	\begin{figure}[ht]
		\includegraphics[clip, width=0.6\linewidth]{synthetic_010_half.eps} 
		\caption{Results for the synthetic data with informed priors, randomly generated Bayes act, and Bayes error set to 0.1.}
		\label{fig:synthetic_010_half}
	\end{figure}
	
\end{appendix}

\end{document}